\newtheorem{thm}{Theorem}
\newtheorem{lem}{Lemma}
\newtheorem{prop}{Proposition}
\newtheorem{cor}{Corollary}
\theoremstyle{definition}
\newtheorem{defn}{Definition}
\newtheorem{rem}{Remark}
\newtheorem{example}{Example}
\DeclareMathOperator*{\argmin}{\arg\!\min}
\newcommand{\etal}{\textit{et al.}~}
\newcommand{\diag}{\mathsf{diag}}
\newcommand{\mean}{\mathbb{E}}
\newcommand{\ExpVal}[2]{\mean{}\left[ #2 \right]}
\newcommand{\EE}[1]{\ExpVal{}{#1}}
\newcommand{\KL}{\textnormal{D}_{\mathsf{KL}}}
\renewcommand{\tilde}{\widetilde}
\newcommand{\Reals}{\mathbb{R}}
\newcommand{\defined}{\triangleq}
\newcommand{\bnu}{\pmb{\nu}}
\newcommand{\sto}{\mbox{\normalfont s.t.}}
\newcommand{\bSigma}{\mathbf{\Sigma}}
\newcommand{\indicator}[1]{\mathbb{I}{#1}}
\newcommand{\dif}{\textrm{d}}
\newcommand{\bmu}{\pmb{\mu}}
\newcommand{\RNum}[1]{\uppercase\expandafter{\romannumeral #1\relax}}
\newcommand{\TV}{\textnormal{D}_{\mathsf{TV}}}
\newcommand{\chidiv}{\textnormal{D}_{\chi^2}}
\newcommand{\FER}[2]{\mathsf{FER}_{#1}(#2)}
\providecommand{\keywords}[1]
{
  {\small	
  \textbf{\textit{Index Terms---}} #1
  }
}
\newcommand{\printfnsymbol}[1]{%
  \textsuperscript{\@fnsymbol{#1}}%
}
\title{To Split or Not to Split:\\
The Impact of Disparate Treatment in Classification}
\author{Hao Wang, Hsiang~Hsu, Mario~Diaz, Flavio~P.~Calmon\thanks{
H.~Wang, H.~Hsu, and F.~P.~Calmon are with Harvard University (e-mail: \{hao\_wang,hsianghsu\}@g.harvard.edu; flavio@seas.harvard.edu).\\
M.~Diaz is with the Instituto de Investigaciones en Matem\'{a}ticas Aplicadas y en Sistemas (IIMAS), Universidad Nacional Aut\'{o}noma de M\'{e}xico (e-mail: mario.diaz@sigma.iimas.unam.mx).
}
}
\date{}
\begin{document}
\maketitle
\vspace{-.3in}

\begin{abstract}\noindent
Disparate treatment occurs when a machine learning model yields different decisions for individuals based on a sensitive attribute (e.g., age, sex). In domains where prediction accuracy is paramount, it could potentially be acceptable to fit a model which exhibits disparate treatment. To evaluate the effect of disparate treatment, we compare the performance of split classifiers (i.e., classifiers trained and deployed separately on each group) with group-blind classifiers (i.e., classifiers which do not use a sensitive attribute). We introduce the benefit-of-splitting for quantifying the performance improvement by splitting classifiers. Computing the benefit-of-splitting directly from its definition could be intractable since it involves solving optimization problems over an infinite-dimensional functional space. Under different performance measures, we (i) prove an equivalent expression for the benefit-of-splitting which can be efficiently computed by solving small-scale convex programs; (ii) provide sharp upper and lower bounds for the benefit-of-splitting which reveal precise conditions where a group-blind classifier will always suffer from a non-trivial performance gap from the split classifiers. In the finite sample regime, splitting is not necessarily beneficial and we provide data-dependent bounds to understand this effect. Finally, we validate our theoretical results through numerical experiments on both synthetic and real-world datasets.
\end{abstract}

\keywords{Trustworthy machine learning, fairness, domain adaptation, $f$-divergence, converse bounds.}

\section{Introduction}

A machine learning (ML) model exhibits \emph{disparate treatment} \citep{barocas2016big} if it treats similar data points from distinct individuals differently based on a sensitive attribute (e.g., age, sex).
In applications such as hiring, the existence of disparate treatment can be illegal \citep{EEOC1979}. However, in settings such as healthcare, it can be legal and ethical to fit a model which presents disparate treatment in order to improve prediction accuracy \citep{cho2006racial,cohn2006use,perez2017now}. For example, the Equal Credit Opportunity Act (ECOA) permits a creditor to use an applicant's age and income for analyzing credit, as long as such information is used in a fair manner (see 12 CFR \textsection 1002.6(b)(2) in \citep{ECOA20}).

The role of a sensitive attribute in fair classification can be understood through several metrics and principles. When a ML model is deployed in practice, fairness can be quantified in terms of the performance disparity \emph{conditioned} on a sensitive attribute, such as statistical parity \citep{feldman2015certifying} and equalized odds \citep{hardt2016equality}. In domains where the goal is to predict accurately (e.g., medical diagnostics), \emph{non-maleficence} (i.e., ``do no harm'') and \emph{beneficence} (i.e., ``do good'') \citep{beauchamp2001principles} become more appropriate moral principles for fairness \citep{ustun2019fairness,martinez2019fairness,zafar2017parity}. Accordingly, a ML model should avoid the causation of harm and be as accurate as possible on each protected group.

The relationship between achieving the above-mentioned  principles and allowing a classifier to exhibit disparate treatment is complex. On the one hand, using a \emph{group-blind classifier} (i.e., a classifier that does not use the sensitive attribute as an input feature) may cause harm unintentionally since model performance relies on the distribution of the input data \citep{dwork2018decoupled,ustun2019fairness,wang2019repairing,wang2018direction}. This probability distribution can vary significantly conditioned on a sensitive attribute due to, for example, inherent differences between groups \citep{dwork2018decoupled}, differences in labeling \citep{blum2019recovering}, and differences in sampling \citep{suresh2019framework}. 
On the other hand, training a separate classifier for each protected group---a setting we refer to as \emph{splitting classifiers}---does not necessarily guarantee non-maleficence when sample size is limited \citep{zhou2017can}: groups with insufficient samples may incur a high generalization error and suffer from overfitting.

We consider two questions that are central to understanding non-maleficence and beneficence through the use of a sensitive attribute by a ML model:
\begin{enumerate}[label=(\roman*)]
\item When is it beneficial to split classifiers in terms of model performance? 
\item When splitting is beneficial, how much do the split classifiers outperform a group-blind classifier? 
\end{enumerate}
First, we show that in the information-theoretic regime where the underlying distribution is known---or, equivalently, an arbitrarily large number of samples are available---splitting \emph{never harms} any group in terms of average performance metrics. Thus, splitting will naturally follow the non-maleficence principle in the large-sample regime. 
Second, we introduce a notion called the \emph{benefit-of-splitting} which measures the performance improvement by splitting classifiers compared to using a group-blind classifier across all groups. The benefit-of-splitting is also an information-theoretic quantity as it only relies on the underlying data distribution rather than number of samples or hypothesis class.

\begin{figure}[t]
\centering
\includegraphics[width=0.6\textwidth]{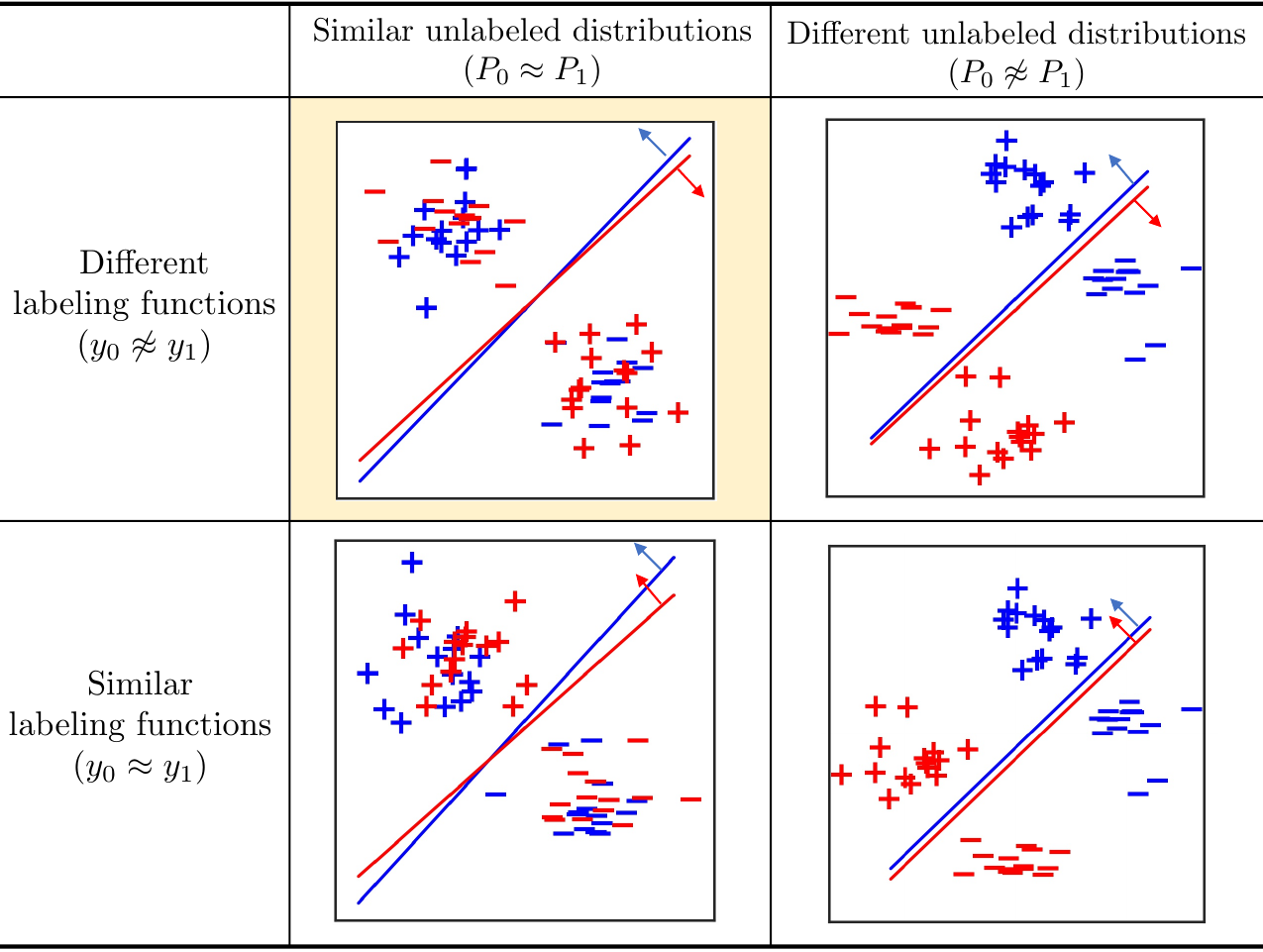}
\caption{\small{The taxonomy of splitting based on two different factors. Samples from two groups are depicted in red and blue, respectively, and their labels are represented by $+$, $-$. Each group's labeling function is shown with the corresponding color and the arrows indicate the regions where the points are labeled as $+$. Information-theoretically, splitting classifiers benefits model performance the most if the labeling functions are different and the unlabeled distributions are similar (yellow region).}
}
\label{Fig::case_illus}
\end{figure}

The definition of the benefit-of-splitting involves a model performance measure and, hence, we divide our analyses into two parts based on different choices of this measure. In Section~\ref{sec::acc_imp_split}, we quantify model performance in terms of standard loss functions (e.g., $\ell_1$ and cross entropy loss). For the benefit-of-splitting under these loss functions, we provide sharp upper and lower bounds (Theorem~\ref{thm::ben_split_l12_loss_bounds}) that capture when splitting classifiers benefits model performance the most. These bounds indicate two factors (see Figure~\ref{Fig::case_illus} for an illustration) which are central to the benefit-of-splitting: (i) disagreement between labeling functions\footnote{\label{lab_func_unlab} We borrow the terms ``labeling function'' and ``unlabeled distribution'' from the domain adaptation literature \citep{ben2010theory,mansour2009domain}. The labeling function takes a data point as an input and produces a probability of its binary label being $1$ and the unlabeled distribution is a (marginal) probability distribution of the unlabeled data. Furthermore, the labeling function can be viewed as a ``channel'' (i.e., conditional distribution) in the information theory parlance. The formal definitions are given in Section~\ref{subsec::notation_pre}.}, (ii) similarity between unlabeled distributions\footref{lab_func_unlab}. Based on these two factors, our upper bounds in Theorem~\ref{thm::ben_split_l12_loss_bounds} indicate that splitting does not produce much benefit if the labeling functions are similar or the unlabeled distributions are different; our lower bounds in Theorem~\ref{thm::ben_split_l12_loss_bounds} indicate that splitting benefits the most if two groups' labeling functions are different and unlabeled distributions are similar. Furthermore, our lower bounds in Theorem~\ref{thm::ben_split_l12_loss_bounds} lead to an impossibility (i.e., converse) result for group-blind classifiers: under certain precise conditions, using a group-blind classifier will always suffer from an inherent accuracy trade-off between different groups and splitting classifiers can reconcile this issue. This converse result is information-theoretic: a data scientist cannot overcome this limit by using more samples or altering the hypothesis class.

In Section~\ref{sec::perf_imp_split}, we consider false error rate as a performance measure since in applications such as medical diagnostics, high false error rate could result in unintentional harm \citep{lalkhen2008clinical}. Under this metric, computing the benefit-of-splitting directly from its definition may at first seem intractable since it involves an optimization  over an infinite-dimensional functional space. Nonetheless, we prove that the benefit-of-splitting under false error rate has an equivalent, dual expression (Theorem~\ref{thm::FER_split_dual}) which only requires solving two small-scale convex programs. Furthermore, the objective functions of these convex programs have closed-form supergradients (Proposition~\ref{prop::part_deri_FER}). Combining these two results leads to an efficient procedure (Algorithm~\ref{alg:descent}) for computing the benefit-of-splitting. 
We validate our procedure through numerical experiments on synthetic datasets in Section~\ref{subsec::syn_data}. When the underlying data distribution is known, our procedure has a provable convergence guarantee and returns the precise values of the benefit-of-splitting. When the underlying data distribution is unknown, our procedure may suffer from  approximation errors but still outperforms more naive empirical approaches.

The aforementioned results capture the benefit-of-splitting from an information-theoretic perspective where the underlying data distributions are assumed to be known and the space of potential classifiers is unrestricted. In Section~\ref{sec::split_practice}, we consider the effect of splitting classifiers in a more practical setting where group-blind and split classifiers are restricted over the same hypothesis class (e.g., logistic regressions) and the underlying distribution is accessed only through finitely many i.i.d. samples. In this case, splitting classifiers is not necessarily beneficial since the group with less samples may suffer from overfitting. 
To quantify the effect of splitting classifiers, we analyze the \emph{sample-limited benefit-of-splitting}. We derive upper and lower bounds for the benefit-of-splitting in this regime in Theorem~\ref{thm::samp_comp}. These bounds  disentangle three factors which determine the effect of splitting classifiers in practice: (i) disagreement between optimal (split) classifiers and training error associated with these optimal classifiers; (ii) similarity between (empirical) unlabeled distributions; and (iii) model complexity and number of samples. 
The first two factors are analogous to the ones that affect the benefit-of-splitting in the information-theoretic regime: when the hypothesis class is complex enough and the sample size tends to infinity, the optimal classifiers approximate the labeling functions and the empirical unlabeled distributions converge to the true unlabeled distributions. 
Finally, we illustrate how these factors determine the performance impact of splitting classifiers through experiments on 40 datasets downloaded from OpenML~\citep{OpenML2013}.

The proof techniques of this paper are based on fundamental tools found in statistics, such as Brown-Low's two-points lower bound \citep{brown1996constrained}, and methods in convex analysis, such as Ky Fan's min-max theorem \citep{fan1953minimax}. These tools are widely used in applications such as non-parametric estimation \citep{tsybakov2008introduction}, and are useful for analyzing the min-max risk in statistical settings \citep{wu2016minimax,jiao2018minimax,polyanskiy2019dualizing,xu2016information,duchi2013local}. Furthermore, the factors that we provide for understanding the effect of splitting classifiers are inspired by the necessary and sufficient conditions of domain adaptation learnability in Ben-David~\etal~\citep{ben2010impossibility}.

The rest of this paper is organized as follows. In the remainder of this section, we review related works and present notation adopted in this paper. In Section~\ref{sec::cost_split}, we introduce the main object of interest: the benefit-of-splitting. Under different performance measures, we provide upper and lower bounds for the benefit-of-splitting in Section~\ref{sec::acc_imp_split} and present an efficient procedure for computing the benefit-of-splitting in Section~\ref{sec::perf_imp_split}. The effect of splitting classifiers in the finite sample regime is studied in Section~\ref{sec::split_practice}. Finally, we illustrate our results through numerical experiments in Section~\ref{sec::exp} and provide conclusion remarks and future works in Section~\ref{sec:conc}.

\subsection{Related Work}
\label{subsec::RelatedWork}

\paragraph{Privacy.} 

Fairness and privacy are closely connected and central to trustworthy machine learning. In this paper, we study the impact of disparate treatment from an information-theoretic perspective: we assume the underlying data distribution is known and analyze how the different distributions between groups affect the performance improvement by splitting classifiers. 
In this regard, our present work relates with studies on information-theoretic privacy \citep[see e.g.,][]{li2018information,rassouli2019optimal,rebollo2009t,basciftci2016privacy,takbiri2018privacy,nageswaran2019data,liao2017hypothesis}. These efforts explore the fundamental limits of privacy-utility trade-offs by also assuming the underlying data distribution is known. 
For example, Makhdoumi~\etal~\citep{makhdoumi2014information} introduce privacy funnel method for solving the privacy-utility trade-offs and connect it with the information bottleneck \citep{tishby2000information}, and this connection is further studied in \citep{hsu2018generalizing}. Kairouz~\etal~\citep{kairouz2014extremal} study the trade-offs between local differential privacy \citep{warner1965randomized,duchi2013local,geumlek2019profile} and utility functions measured by $f$-divergence \citep{csiszar1967information}. 
Besides analyzing the fundamental limits, there are works \citep[see e.g.,][]{du2012privacy,bertran2019adversarially,wang2019privacy,hsu2019information} on designing privacy mechanisms which enable a certain level of utility to be obtained from the disclosed datasets while controlling private information leakage. The robustness of the privacy mechanisms is analyzed in \citep{issa2019operational,diaz2019robustness} when these privacy mechanisms are constructed by using finitely many samples. We follow a similar line of analysis in order to understand the effect of splitting classifiers in the finite sample regime and complement our bounds for the benefit-of-splitting by incorporating additional factors such as sample size and model complexity.

\paragraph{Domain adaptation.} A standard assumption in ML is that the training and testing data are drawn from the same underlying probability distribution. Domain adaptation \citep{mansour2009domain,ben2010theory,ganin2016domain} and transfer learning \citep{kuzborskij2013stability,kpotufe2018marginal} consider a more general setting where models are trained on a source domain and deployed on a (different) target domain. A common assumption therein is known as \emph{covariate shift}, which requires the source and target domain share the same labeling function. In this paper, we prove  (see Theorem~\ref{thm::ben_split_l12_loss_bounds}) that if the covariate shift assumption is violated and two groups' unlabeled distributions are similar, then no classifier can perform well on both groups. In this regard, our work is connected to Ben-David~\etal \citep{ben2010impossibility} which present impossibility results on domain adaptation learnability. Compared to \citep{ben2010impossibility}, Theorem~\ref{thm::ben_split_l12_loss_bounds} characterizes an information-theoretic fundamental limit which cannot be circumvented by using a large number of samples or a carefully designed hypothesis class. Furthermore, the lower bound in Theorem~\ref{thm::ben_split_l12_loss_bounds} serves as a complementary statement to the upper bounds in domain adaptation \citep[cf.][]{mansour2009domain,ben2010theory}. These bounds jointly describe the range of the loss a data scientist may incur by training a model on the source domain and deploying on the target domain.

\paragraph{Fair ML.}

ML models have been increasingly used in applications of individual-level consequences, ranging from recidivism prediction \citep{angwin2016machine} and lending \citep{hunt2005redlining} to healthcare \citep{kinyanjui2019estimating}. A number of works in fair ML aim at understanding why discrimination happens \citep{kleinberg2016inherent,datta2016algorithmic,chouldechova2017fair,chen2018my,jiang2018trust,kallus2018residual,adler2018auditing,dutta2020information,cotter2019making}; how it can be quantified \citep{pleiss2017fairness,berk2018fairness,chen2019fairness}; and how it can be reduced \citep{kamiran2012data,calmon2017optimized,menon2018cost,agarwal2018reductions,kearns2018preventing,ghassami2018fairness,hashimoto2018fairness,kim2019multiaccuracy,celis2019classification,alghamdi2020model}. There are also an increasing number of studies that take causality into account for understanding and mitigating discrimination \citep{kusner2017counterfactual,kilbertus2017avoiding,nabi2018fair,hu2018welfare,chiappa2019path}. 
We build on a line of recent results on decoupling predictive models for improving accuracy-fairness trade-offs \citep[see e.g.,][]{zafar2017parity,dwork2018decoupled,ustun2019fairness,lipton2018does,kleinberg2018algorithmic}. For example, Ustun~\etal~\citep{ustun2019fairness} introduce a tree structure to recursively choose sensitive attributes for decoupling. Lipton~\etal~\citep{lipton2018does} show that using group-blind classifiers could be suboptimal in terms of trading off accuracy and fairness. The work closest to ours is Dwork~\etal~\citep{dwork2018decoupled} which present a decoupling technique to learn separate models for different groups. A detailed comparison with \citep{dwork2018decoupled} is given in Section~\ref{subsec:comparison}. 

\subsection{Notation and Definitions}
\label{subsec::notation_pre}

Consider a binary classification task (e.g., detecting pneumonia from X-rays) where the goal is to learn a probabilistic classifier $h:\mathcal{X}\to [0,1]$ that predicts a label (e.g., presence of pneumonia) $Y\in \{0,1\}$ using input features (e.g., chest X-rays) $X\in\mathcal{X}$.
We assume there is an additional binary\footnote{For the sake of illustration, we assume that the sensitive attribute $S$ is binary but our results can be extended to a setting of multi-groups. Furthermore, split classifiers can be applied to a scenario where multiple subgroups overlap \citep{kearns2018preventing,blum2019advancing} since individuals belonging to both groups can opt for either one of the split classifiers.} sensitive attribute  (e.g., sex) $S\in \{0,1\}$ that does not belong to the input features $X$. 
We denote the unlabeled probability distributions of input features conditioned on the sensitive attribute by 
\begin{align*}
    P_0 \defined P_{X|S=0},\quad P_1\defined P_{X|S=1}.
\end{align*}
The labeling functions of the two groups are denoted by
\begin{align*}
    y_0(x)\defined P_{Y|X,S}(1|x,0),\quad y_1(x)\defined P_{Y|X,S}(1|x,1).
\end{align*}
In order to measure the difference between two unlabeled distributions (i.e., $P_0$ and $P_1$), we recall Csisz{\'a}r's $f$-divergence~\citep{csiszar1967information}. Let $f:(0,\infty)\to \Reals$ be a convex function with $f(1)=0$ and $P$, $Q$ be two probability distributions over $\mathcal{X}$. The $f$-divergence between $P$ and $Q$ is defined by 
\begin{align}
    \textnormal{D}_{f}(P\|Q) \defined \int_{\mathcal{X}} f\left(\tfrac{\dif P}{\dif Q}\right) \dif Q.
\end{align}
Some examples of $f$-divergence are included in Appendix~\ref{append::f-divergence}.

The proofs of some of our main results (Lemma~\ref{lem::ben_split_l12_loss_equivalent} and Theorem~\ref{thm::FER_split_dual}) rely on Ky Fan's min-max theorem~\citep{fan1953minimax}. As a reminder, a function $f:\mathcal{X}\times \mathcal{Y}\to \Reals$ is said to be concave-like on $\mathcal{X}$ if, for any two elements $x_1,x_2\in \mathcal{X}$ and $\lambda \in [0,1]$, there exists an element $x_0\in \mathcal{X}$ such that for all $y\in \mathcal{Y}$
\begin{align*}
    f(x_0,y)\geq \lambda f(x_1,y) + (1-\lambda)f(x_2,y). 
\end{align*}
Similarly, $f$ is said to be convex-like on $\mathcal{Y}$, if for any two elements $y_1,y_2\in \mathcal{Y}$ and $\lambda \in [0,1]$, there exists an element $y_0 \in \mathcal{Y}$ such that for all $x\in \mathcal{X}$
\begin{align*}
    f(x,y_0)\leq \lambda f(x,y_1) + (1-\lambda)f(x,y_2). 
\end{align*}
A function $g:\mathcal{X}\to \Reals$ is called upper semicontinuous on a metric space $\mathcal{X}$ if for every point $x_0\in \mathcal{X}$, $\limsup_{x\to x_0} g(x)\leq g(x_0)$. 
Next, we recall\footnote{We apply Ky Fan's min-max theorem to the function $-f$ instead of $f$.} Ky Fan's min-max theorem~\citep{fan1953minimax}. 
\begin{lem}[{\citep[Theorem~2]{fan1953minimax}}]
\label{lem::KyFan}
Let $\mathcal{X}$ be a compact Hausdorff space and $\mathcal{Y}$ an arbitrary set (not topologized). Let $f$ be a real-valued function on $\mathcal{X}\times \mathcal{Y}$ such that, for every $y\in \mathcal{Y}$, $f(\cdot,y)$ is upper semicontinuous on $\mathcal{X}$. If $f$ is concave-like on $\mathcal{X}$ and convex-like on $\mathcal{Y}$, then 
\begin{align}
    \inf_{y\in\mathcal{Y}}\max_{x\in\mathcal{X}} f(x,y)
    = \max_{x\in\mathcal{X}}\inf_{y\in\mathcal{Y}} f(x,y). 
\end{align}
\end{lem}
\section{The Benefit-of-Splitting}
\label{sec::cost_split}

We study the impact of disparate treatment by comparing the performance between optimal group-blind and split classifiers. Recall that a ML model exhibits disparate treatment if it explicitly uses a sensitive attribute to produce an output. We illustrate the difference between group-blind and split classifiers through the example of logistic regressions:
\begin{itemize}
    \item a group-blind classifier does not use a sensitive attribute as an input: $h(x) = \mathsf{logistic}(w^T x)$ where $\mathsf{logistic}(t) \defined 1/(1+\exp(-t))$ for $t\in \Reals$;
    \item split classifiers are a set of classifiers trained and deployed separately on each group: $h_s(x) = \mathsf{logistic}(w_s^Tx)$ for $s\in \{0,1\}$.
\end{itemize}    
We measure the performance of both group-blind and split classifiers in terms of the \emph{disadvantaged group} (i.e., the group with worst performance). For a given performance measure $L_s(\cdot)$ (higher values indicate a worse performance), the performance of a group-blind classifier $h$ and a set of split classifiers $\{h_s\}_{s\in \{0,1\}}$, respectively, is measured by 
\begin{align*}
    \max_{s\in \{0,1\}} L_s(h)\quad \text{and}\quad \max_{s\in \{0,1\}} L_s(h_s).
\end{align*}
Consequently, the optimal group-blind and split classifiers (across all measurable functions from $\mathcal{X}$ to $[0,1]$) achieve the performance
\begin{align*}
    \inf_{h:\mathcal{X}\to [0,1]}\max_{s\in \{0,1\}} L_s(h) \quad \text{and} \quad \max_{s\in \{0,1\}}\inf_{h:\mathcal{X}\to [0,1]} L_s(h).
\end{align*}
Next, we introduce the benefit-of-splitting to quantify the effect of splitting classifiers compared to using a group-blind classifier.
\begin{defn}
\label{defn::main_coupling_cost}
For each $s \in \{0,1\}$, let $P_{X,Y|S=s}$ be a fixed probability distribution and $L_s(\cdot)$ be a performance measure, we define the benefit-of-splitting as 
\begin{align}
\label{eq::eps_split}
    &\epsilon_{\text{split}} 
    \defined \inf_{h: \mathcal{X}\to [0,1]} \max_{s\in \{0,1\}} L_s(h) - \max_{s\in \{0,1\}}\inf_{h:\mathcal{X}\to [0,1]} L_s(h),
\end{align}
where the infimum is taken over all (measurable) functions.
\end{defn}
The benefit-of-splitting is the difference between the performance of the optimal group-blind and split classifiers. In other words, if $h^*$ and $\{h_s^*\}_{s\in \{0,1\}}$ are optimal group-blind and split classifiers respectively, i.e.,
\begin{align*}
    h^* \in \argmin_{h:\mathcal{X}\to [0,1]}\max_{s\in\{0,1\}}L_s(h),\quad
    h_s^* \in \argmin_{h:\mathcal{X}\to [0,1]} L_s(h) \ \ s\in \{0,1\},
\end{align*}
the benefit-of-splitting can be equivalently expressed as
\begin{align}
    \epsilon_{\text{split}} = \max_{s\in\{0,1\}}L_s(h^*) - \max_{s\in\{0,1\}}L_s(h_s^*).
\end{align}
In practice, a data scientist may restrict the type of classifiers by fixing a hypothesis class (e.g., logistic regressions). The benefit-of-splitting can be adapted for capturing the effect of splitting classifiers in this case (see Definition~\ref{defn::hypothesis_splitting}).

By the optimality of $h_s^*$ and the max-min inequality, we have $L_s(h^*)\geq L_s(h_s^*)$ for $s\in \{0,1\}$ and $\epsilon_{\text{split}} \geq 0$ which implies that, information-theoretically, using a separate classifier on each group will never diminish model performance compared to using a group-blind classifier. A natural question is: how much performance improvement does splitting classifiers bring? Before answering this question, we specify performance measures of interest and present the benefit-of-splitting under these performance measures.

\subsection{Loss Reduction by Splitting}

The first type of performance measures contains standard loss functions which have been widely used in fair ML \citep[see e.g.,][]{dwork2018decoupled} and domain adaptation \citep[see e.g.,][]{ben2010theory}. These loss functions quantify the disagreement between the labeling function $y_s$ and the probabilistic classifier $h$. We recast the benefit-of-splitting under these loss functions below.
\begin{defn}
\label{defn::ben_split_l1_2}
The $\ell_1$-benefit-of-splitting $\epsilon_{\text{split},1}$ is the benefit-of-splitting in Definition~\ref{defn::main_coupling_cost} with the performance measure:
\begin{align*}
    L_s(h) = \EE{\lvert h(X)-y_s(X)\rvert \mid S=s}.
\end{align*}
The $\ell_2$-benefit-of-splitting $\epsilon_{\text{split},2}$ is the benefit-of-splitting in Definition~\ref{defn::main_coupling_cost} with the performance measure:
\begin{align*}
    L_s(h) = \EE{(h(X)-y_s(X))^2 \mid S=s}.
\end{align*}
The KL-benefit-of-splitting $\epsilon_{\text{split},\mathsf{KL}}$ is the benefit-of-splitting in Definition~\ref{defn::main_coupling_cost} with the performance measure:
\begin{align*}
    L_s(h) = \EE{\KL(y_s(X)\|h(X)) \mid S=s},
\end{align*}
where for $p,q\in [0,1]$, $\KL(p\| q) \defined p\log(p/q) + (1-p)\log((1-p)/(1-q))$.
\end{defn}
\begin{rem}
\label{rem::cross_ent_split}
Another widely used loss function is cross entropy $L_s(h) = \EE{\mathsf{H}(y_s(X), h(X)) \mid S=s}$ where for $p,q\in [0,1]$, $\mathsf{H}(p,q) \defined -p\log q - (1-p)\log(1-q)$. Since $\mathsf{H}(p,q) = \KL(p\|q) + \mathsf{H}(p)$, the analysis of the benefit-of-splitting under cross entropy is essentially the same as the analysis of $\epsilon_{\text{split},\mathsf{KL}}$ (see Appendix~\ref{appendix::cross_ent}).
\end{rem}

\subsection{False Error Rate Reduction by Splitting}
\label{subsec::FER_BOS}

Now we use the false error rate (FER) as a performance measure. The false error rate of a classifier is the maximum\footnote{Our analysis can be extended to any convex combination of false positive rate and false negative rate.} between (generalized) false positive rate and (generalized) false negative rate \citep{pleiss2017fairness}. In healthcare, assuring low false error rate is as important as guaranteeing high accuracy since patients could suffer from harm due to a classifier's false error rate \citep{lalkhen2008clinical}. For example, the false negative diagnosis may delay treatment in patients who are critically ill; the false positive diagnosis could lead to an unnecessary treatment. Furthermore, a classifier with high accuracy does not necessarily mean it has low false error rate. Hence, we consider how split classifiers reduce the false error rate by recasting the benefit-of-splitting under this performance measure. 
\begin{defn}
\label{defn::FERben_split}
The FER-benefit-of-splitting $\epsilon_{\text{split},\mathsf{FER}}$ is the benefit-of-splitting in Definition~\ref{defn::main_coupling_cost} with the performance measure:
\begin{align}
\label{eq::bos_defn_fer}
    L_s(h) = \max\left\{\EE{h(X)\mid Y=0, S=s},\ \EE{1-h(X)\mid Y=1, S=s}\right\}.
\end{align}
\end{defn}

\paragraph{Connection with equalized odds.}

Equalized odds, discussed by Hardt~\etal~\citep{hardt2016equality}, is a commonly used group fairness measure that requires different  groups to have (approximately) the same false positive and false negative rates. Specifically, a probabilistic classifier $h:\mathcal{X}\to [0,1]$ satisfies equalized odds \citep{hardt2016equality,pleiss2017fairness} if 
\begin{align*}
    \EE{h(X) \mid Y=0, S=0} 
    &= \EE{h(X) \mid Y=0, S=1}\quad
    &&(\text{equal false positive rate}),\\
    \EE{1 - h(X) \mid Y=1, S=0} 
    &=\EE{1 - h(X) \mid Y=1, S=1}\quad
    &&(\text{equal false negative rate}).
\end{align*}
Under this definition, classifiers are considered ``unfair'' if their false positive rate or false negative rate vary across different groups. However, imposing equalized odds constraints may lead to a significant performance reduction in classification  \citep{chouldechova2017fair,friedler2016possibility,corbett2018measure,zhao2019inherent}. In contrast, the benefit-of-splitting definition studied in this work aims to capture the principles of non-maleficence and beneficence \citep{beauchamp2001principles}: classifiers should avoid the causation of harm and achieve the best performance on each group. 
By taking the optimal group-blind classifier as a baseline approach, this may allow split classifiers to potentially exhibit performance disparities between groups---as long as the split classifiers do not perform worse than the baseline approach and are as accurate as possible.

\section{The Taxonomy of Splitting}
\label{sec::acc_imp_split}

In this section, we analyze the loss reduction by splitting classifiers compared to using a group-blind classifier. We achieve this goal by upper and lower bounding the benefit-of-splitting under different loss functions (see Definition~\ref{defn::ben_split_l1_2}). These bounds reveal factors which could impact the effect of splitting classifiers and lead to a taxonomy of splitting, i.e., a characterization of when splitting benefits model performance the most or splitting does not bring much benefit.

Before stating the main result (i.e., bounds for the benefit-of-splitting), we prove a lemma first which converts the definition of the benefit-of-splitting into a single variable optimization problem. This lemma will be used in the proof of our lower bounds.
\begin{lem}
\label{lem::ben_split_l12_loss_equivalent}
The benefit-of-splitting under different loss functions in Definition~\ref{defn::ben_split_l1_2} have equivalent expressions
\begin{align*}
    &\epsilon_{\text{split},1} 
    = \sup_{\omega \in [0,1]} (1-\omega) \int_{\mathcal{A}_{\omega}} |y_1(x)-y_0(x)| \dif P_1(x) + \omega \int_{\mathcal{A}_{\omega}^{c}} |y_1(x)-y_0(x)|\dif P_0(x),\\
    &\epsilon_{\text{split},2} 
    = \sup_{\omega\in[0,1]} \omega(1-\omega) \int \frac{(y_1(x)-y_0(x))^2 \dif P_0(x)\dif P_1(x)}{\omega \dif P_0(x) + (1-\omega) \dif P_1(x)},\\
    &\epsilon_{\text{split},\mathsf{KL}}
    = \sup_{\omega\in[0,1]}\mathsf{JS}_{\omega}(P_{X,Y|S=0} \| P_{X,Y|S=1}) - \mathsf{JS}_{\omega}(P_{0} \| P_{1}),
\end{align*}
where $\mathcal{A}_{\omega} \defined \left\{x\in \mathcal{X} \mid \frac{\dif P_0(x)}{\dif P_1(x)} \geq \frac{1-\omega}{\omega} \right\}$ and $\mathsf{JS}_{\omega}(\cdot\|\cdot)$ is the Jensen-Shannon divergence.
\end{lem}
\begin{proof}
See Appendix~\ref{appendix::split_acc_loss_equivalent}.
\end{proof}

Next, we provide upper and lower bounds for $\epsilon_{\text{split},1}$, $\epsilon_{\text{split},2}$, and $\epsilon_{\text{split},\mathsf{KL}}$, respectively. These bounds rely on two main factors: (i) disagreement between different groups' labeling functions and (ii) similarity between their unlabeled distributions. In particular, the second factor is captured by a certain $f$-divergence~\citep{csiszar1967information} (see Appendix~\ref{append::f-divergence} for some examples of $f$-divergence).

\begin{thm}
\label{thm::ben_split_l12_loss_bounds}
The $\ell_1$-benefit-of-splitting can be upper and lower bounded 
\begin{align*}
    &\epsilon_{\text{split},1}
    \leq \min\left\{\min_{s\in\{0,1\}} \sqrt{\EE{(y_1(X)-y_0(X))^2\mid S=s}}\cdot \sqrt{1-\TV(P_0\| P_1)},\right.\\
    &\left. \quad \quad \quad \quad \quad \quad\ \frac{1}{2} \max_{s\in \{0,1\}} \EE{\lvert y_1(X)-y_0(X)\rvert \mid S=s}\right\},\\
    &\epsilon_{\text{split},1}
    \geq \frac{1}{2} \max_{s\in \{0,1\}} \left\{\EE{\lvert y_1(X)-y_0(X)\rvert \mid S=s} - \sqrt{\EE{(y_1(X)-y_0(X))^2 \mid S=s}} \cdot d_2(P_{1-s}\|P_s)\right\},
\end{align*}
where $\TV(P_0\| P_1)$ is the total variation distance and $d_2(P_{1-s}\|P_s)$ is Marton's divergence. Suppose that 
\begin{align}
    \EE{\lvert y_1(X) - y_0(X)\rvert \mid S=1}
    \geq \EE{\lvert y_1(X) - y_0(X)\rvert \mid S=0}.
\end{align}
Then the $\ell_2$-benefit-of-splitting can be upper and lower bounded 
\begin{align*}
    &\epsilon_{\text{split},2}
    \leq \min\left\{\min_{s\in \{0,1\}} \sqrt{\EE{(y_1(X)-y_0(X))^4 \mid S=s}} \cdot \sqrt{1-\TV(P_0\| P_1)},\right.\\
    &\left. \quad \quad \quad \quad \quad \quad\ \frac{1}{4}\max_{s\in \{0,1\}} \EE{(y_1(X)-y_0(X))^2 \mid S=s}\right\},\\
    &\epsilon_{\text{split},2}
    \geq \left(\frac{\EE{\lvert y_1(X) - y_0(X)\rvert \mid S=0} }{\sqrt{\chidiv(P_{1} \| P_{0}) + 1} + 1} \right)^2,
\end{align*}
where $\chidiv(P_{1} \|P_{0})$ is the chi-square divergence.  The KL-benefit-of-splitting can be upper and lower bounded
\begin{align*}
    &\epsilon_{\text{split},\mathsf{KL}} 
    \leq \min\left\{2\mathsf{JS}(P_{X,Y|S=0} \| P_{X,Y|S=1}) - 2\mathsf{JS}(P_{0} \| P_{1}),\ \max_{s\in \{0,1\}} \EE{\KL\left(y_s(X) \| \tfrac{y_0(X) + y_1(X)}{2}\right) \mid S=s} \right\},\\
    &\epsilon_{\text{split},\mathsf{KL}} \geq \mathsf{JS}(P_{X,Y|S=0} \| P_{X,Y|S=1}) - \mathsf{JS}(P_{0} \| P_{1}),
\end{align*}
where $\mathsf{JS}(\cdot\|\cdot)$ is the Jensen–Shannon divergence.
\end{thm}
\begin{proof}
See Appendix~\ref{sec::proof_bounds_eps_12}.
\end{proof}
Now we consider extreme scenarios to verify the sharpness of the bounds and to understand when splitting classifiers benefits model performance the most (see Figure~\ref{Fig::case_illus} for an illustration). 
\begin{itemize}
    \item Consider the setting where two groups share the same labeling function (i.e., $y_0 = y_1$). All the upper and lower bounds in Theorem~\ref{thm::ben_split_l12_loss_bounds} for the benefit-of-splitting under different loss functions become zero and, hence, the bounds are sharp. This is quite intuitive as one can use the labeling function $y_0$ as a group-blind classifier and it achieves perfect performance on both groups. Hence, there is no benefit of splitting classifiers. 
    
    \item Consider the setting where two groups share the same unlabeled distribution (i.e., $P_0 = P_1$). The upper and lower bounds of $\epsilon_{\text{split},1}$ are both $\EE{|y_1(X)-y_0(X)|}/2$, which is equal to $\epsilon_{\text{split},1}$. The bounds of $\epsilon_{\text{split},2}$ become
    \begin{align*}
        \frac{1}{4} \EE{|y_1(X) - y_0(X)|}^2 \leq \epsilon_{\text{split},2} \leq \frac{1}{4} \EE{(y_1(X) - y_0(X))^2}.
    \end{align*}
    If, in addition, $|y_0(x)-y_1(x)|$ is the same across all $x$, the upper and lower bounds become the same and, hence, are sharp. Finally, the bounds of $\epsilon_{\text{split},\mathsf{KL}}$ become 
    \begin{align*}
        \EE{\mathsf{JS}(y_0(X) \| y_1(X))}
        \leq \epsilon_{\text{split},\mathsf{KL}} 
        \leq \max_{s\in \{0,1\}} \EE{\KL\left(y_s(X) \| \tfrac{y_0(X) + y_1(X)}{2}\right)}.
    \end{align*}
    If, in addition, $\EE{\KL\left(y_0(X) \| (y_0(X) + y_1(X))/2\right)} = \EE{\KL\left(y_1(X) \| (y_0(X) + y_1(X))/2\right)}$, then the upper and lower bounds are equal. This extreme case indicates that when different groups have the same unlabeled distribution (i.e., $P_0=P_1$), the benefit-of-splitting is determined by the disagreement between their labeling functions (i.e., large disagreement leads to high benefit). 

    \item Consider the setting where two groups have unlabeled distributions lying on disjoint support sets. In this case, $\TV(P_0\|P_1) = 1$ and $\mathsf{JS}(P_0\|P_1) = \log2$. Hence, the upper bounds of $\epsilon_{\text{split},1}$ and $\epsilon_{\text{split},2}$ become zero. Furthermore,
    \begin{align*}
        0\leq \mathsf{JS}(P_{X,Y|S=0} \| P_{X,Y|S=1}) - \mathsf{JS}(P_{0} \| P_{1}) = \mathsf{JS}(P_{X,Y|S=0} \| P_{X,Y|S=1}) - \log2 \leq 0.
    \end{align*}
    where the last step is because the Jensen–Shannon divergence is always upper bounded by $\log 2$. Therefore, the upper bound of $\epsilon_{\text{split},\mathsf{KL}}$ is zero as well. In other words, there is no benefit of splitting classifiers when the unlabeled distributions are mutually singular. One can interpret this fact by considering a special group-blind classifier which mimics the labeling function of each group in the region where its unlabeled distribution lies. This classifier achieves perfect performance for each group. Note that such a group-blind classifier exists since we do not restrict the space of potential classifiers and, hence, any (measurable) function could become a classifier. 
\end{itemize}
To summarize, from an information-theoretic perspective, splitting classifiers benefits the most if two groups have similar unlabeled distributions and different labeling functions. This taxonomy of splitting appears for all the commonly used loss functions (i.e., $\ell_1$, $\ell_2$, and KL loss).

Recall that the benefit-of-splitting (see Definition~\ref{defn::main_coupling_cost}) measures the performance improvement by using the optimal split classifiers compared to deploying the optimal group-blind classifier across all groups. Here model performance is quantified in terms of the disadvantaged group (i.e., the group with the worst performance). We end this section by considering the Bayes risk as an alternative way of measuring model performance\footnote{For the sake of illustration, in what follows we only consider the $\ell_1$ loss.}. Specifically, the performance of a group-blind classifier $h$ and a set of split classifiers $\{h_s\}_{s\in \{0,1\}}$, respectively, is measured by 
\begin{align*}
    &\text{group-blind}:\quad &&\Pr(S=0)\cdot \EE{\lvert h(X)-y_0(X)\rvert \mid S=0} + \Pr(S=1)\cdot \EE{\lvert h(X)-y_1(X)\rvert \mid S=1},\\
    &\text{split}:\quad &&\Pr(S=0)\cdot \EE{\lvert h_0(X)-y_0(X)\rvert \mid S=0} + \Pr(S=1)\cdot \EE{\lvert h_1(X)-y_1(X)\rvert \mid S=1}.
\end{align*}
They can be equivalently written as 
\begin{align*}
    \EE{\lvert h(X)-y_S(X) \rvert}
    \quad
    \text{and}
    \quad
    \EE{\lvert h_S(X)-y_S(X) \rvert}.
\end{align*}
The performance difference between the optimal group-blind and split classifiers leads to the following definition.
\begin{defn}
We define the population-benefit-of-splitting as
\begin{align*}
    \epsilon_{\text{split},\text{pop}}
    \defined \inf_{h: \mathcal{X}\to [0,1]} \EE{|h(X)-y_S(X)|} - \inf_{\substack{h_s:\mathcal{X}\to [0,1]\\ \text{for } s\in\{0,1\}}} \EE{|h_S(X)-y_S(X)|}.
\end{align*}
\end{defn}
The population-benefit-of-splitting is upper bounded by the benefit-of-splitting (i.e., $\epsilon_{\text{split},\text{pop}} \leq \epsilon_{\text{split},1}$) since the Bayes risk is upper bounded by the worst-case risk and the split classifiers $\{y_s\}_{s\in\{0,1\}}$ composed by the labeling functions can achieve zero risk. Hence, the upper bound of $\epsilon_{\text{split},1}$ in Theorem~\ref{thm::ben_split_l12_loss_bounds} naturally translates into an upper bound of $\epsilon_{\text{split},\text{pop}}$. Next, we provide alternative bounds for $\epsilon_{\text{split},\text{pop}}$ which reveal an additional factor influencing $\epsilon_{\text{split},\text{pop}}$. 
\begin{prop}
\label{prop::bounds_popu_ben_split}
Assume $\Pr(S=0)\leq 0.5$. The population-benefit-of-splitting can be upper and lower bounded
\begin{align*}
    \epsilon_{\text{split},\text{pop}}
    &\leq \Pr(S=0)\cdot \EE{\lvert y_1(X) - y_0(X)\rvert \mid S=0},\\
    \epsilon_{\text{split},\text{pop}}
    &\geq \Pr(S=0) \left(\EE{\lvert y_1(X) - y_0(X)\rvert \mid S=0} - E_{\frac{\Pr(S=1)}{\Pr(S=0)}}(P_0\|P_1)\right),
\end{align*}
where $E_{\frac{\Pr(S=1)}{\Pr(S=0)}}(P_0\|P_1)$ is the $E_{\gamma}$-divergence with $\gamma = \Pr(S=1)/\Pr(S=0)$.
\end{prop}
\begin{proof}
See Appendix~\ref{append::pop_ben_split}.
\end{proof}
\begin{rem}
The $E_{\gamma}$-divergence plays an important role in Bayesian statistical hypothesis testing \citep{sason2016f,liu2016Egamma}. Since $\gamma \to E_{\gamma}(P\|Q)$ is non-increasing and $E_{1}(P\|Q) = \TV(P\|Q)$ \citep{liu2016Egamma}, we can further lower bound $\epsilon_{\text{split},\text{pop}}$ by using the total variation distance
\begin{align*}
    \epsilon_{\text{split},\text{pop}} \geq \Pr(S=0) \left(\EE{\lvert y_1(X) - y_0(X)\rvert \mid S=0} - \TV(P_0\|P_1)\right). 
\end{align*}
The $E_{\gamma}$-divergence relates with the DeGroot statistical information \citep{degroot1962uncertainty} through (see Equation~(421) in~\citep{sason2016f})
\begin{align*}
    \mathcal{I}_p(P\|Q) 
    = 
    \begin{cases}
    pE_{\frac{1-p}{p}}(P\|Q) \quad &p\in (0,\frac{1}{2}]\\
    (1-p)E_{\frac{p}{1-p}}(Q\|P) \quad &p\in [\frac{1}{2},1).
    \end{cases}
\end{align*}
Hence, we can write our lower bound of $\epsilon_{\text{split},\text{pop}}$ equivalently as 
\begin{align*}
    \epsilon_{\text{split},\text{pop}} \geq
    \Pr(S=0)\cdot \EE{\lvert y_1(X) - y_0(X)\rvert \mid S=0} - \mathcal{I}_{\Pr(S=0)}(P_0\|P_1).
\end{align*}
\end{rem}

As shown in Proposition~\ref{prop::bounds_popu_ben_split}, the population-benefit-of-splitting is affected not only by the above-mentioned two factors (i.e., disagreement between labeling functions and similarity between unlabeled distributions) but also by the percentage of the minority group over the whole population. This reveals a caveat of the population-benefit-of-splitting: the minority group can be underrepresented when one designs a group-blind classifier by minimizing the loss over the whole population. In contrast, the benefit-of-splitting (see Definition~\ref{defn::ben_split_l1_2}) does not rely on the probability of the sensitive attribute and, hence, represents each group equally.  
\section{An Efficient Procedure for Computing the Effect of Splitting}
\label{sec::perf_imp_split}

In the last section, we provide upper and lower bounds for the benefit-of-splitting under different kinds of loss functions. Here, we consider a different performance measure: false error rate. It turns out that the benefit-of-splitting under false error rate, denoted by $\epsilon_{\text{split}, \mathsf{FER}}$ (see Definition~\ref{defn::FERben_split}), has an equivalent expression which leads to an efficient procedure of computing $\epsilon_{\text{split}, \mathsf{FER}}$.

Even with the knowledge of the underlying data distribution, computing the benefit-of-splitting directly from its definition is challenging. This is because the space of potential classifiers is unrestricted (i.e., any measurable function could be used as group-blind or split classifiers) and solving optimization problems over this infinite-dimensional functional space could be intractable. One may attempt to circumvent this issue by restricting the classifiers over a hypothesis class. However, this naive approach has two limitations. First, it is unclear how to choose a hypothesis class in order to compute the benefit-of-splitting reliably. We will show in Example~\ref{ex::split_dep_hyp} that different hypothesis classes could result in completely different values of the benefit-of-splitting. Second, as evidenced in \citep{zafar2017fairness}, training the optimal group-blind or split classifiers may suffer from a non-convexity issue. 

We leverage the special form of the false error rate  in \eqref{eq::bos_defn_fer} and prove an equivalent expression of $\epsilon_{\text{split}, \mathsf{FER}}$ below which can be computed by solving two small-scale convex programs. The objective functions of these convex programs have closed-form supergradients. Hence, they can be solved efficiently via standard solvers, such as (stochastic) mirror descent \citep{nemirovsky1983problem,beck2003mirror}. When the data distribution is known, our procedure returns the precise values of $\epsilon_{\text{split}, \mathsf{FER}}$ without the need of training optimal group-blind and split classifiers. The equivalent expression of $\epsilon_{\text{split}, \mathsf{FER}}$ is given in the following theorem.

\begin{thm}
\label{thm::FER_split_dual}
Assume $\Pr(Y=i,S=s)> 0$ for any $i,s\in \{0,1\}$. The FER-benefit-of-splitting $\epsilon_{\text{split}, \mathsf{FER}}$ can be equivalently written as 
\begin{align*}
    \max_{\bmu \in \Delta_4} \left\{\sum_{s\in \{0,1\}}\mu_{s,1} + \EE{\left(\sum_{s,i\in \{0,1\}} \mu_{s,i}\phi_{s,i}(X)\right)_{-}} \right\} - \max_{\substack{\bnu^{(s)} \in \Delta_2\\ \text{for } s\in\{0,1\}}} \left\{\nu^{(s)}_{1} + \EE{\left(\sum_{i\in \{0,1\}} \nu^{(s)}_{i}\phi_{s,i}(X)\right)_{-}} \right\}.
\end{align*}
Here for a positive integer $d$, $\Delta_d \defined \{z\in \Reals^d \mid \sum_{i=1}^d z_i=1,\ z_i\geq0\}$, for any $a\in \Reals$, $(a)_{-} \defined \min\{a,0\}$, $\bmu \defined (\mu_{0,0},\mu_{0,1},\mu_{1,0},\mu_{1,1})$, $\bnu^{(s)} \defined (\nu^{(s)}_0,\nu^{(s)}_1)$, and 
\begin{align}
\label{eq::phi_s_i_defn}
    \phi_{s,i}(x) 
    \defined \frac{(1-i-y_s(x))\Pr(S=s\mid X=x)}{\Pr(Y=i, S=s)},\quad s,i\in \{0,1\}.
\end{align}
\end{thm}
\begin{proof}
See Appendix~\ref{append::FER_split}.
\end{proof}
\begin{rem}
We demonstrate a proof sketch of  Theorem~\ref{thm::FER_split_dual}. The FER-benefit-of-splitting $\epsilon_{\text{split}, \mathsf{FER}}$ is composed by $\inf_{h:\mathcal{X}\to [0,1]}\max_{s\in \{0,1\}} L_s(h)$ and $\max_{s\in \{0,1\}}\inf_{h:\mathcal{X}\to [0,1]} L_s(h)$. The first term can be equivalent written as 
\begin{align}
\label{eq::rem_inf_max_fer}
    &\inf_{h:\mathcal{X}\to [0,1]} \max_{\bmu\in \Delta_4} \left\{\sum_{s\in \{0,1\}} \mu_{s,0} \EE{h(X)\mid Y=0, S=s} + \mu_{s,1} \EE{1-h(X)\mid Y=1, S=s}\right\}.
\end{align}
The key step in our proof is to swap maximum and infimum in \eqref{eq::rem_inf_max_fer} by using Ky Fan's min-max theorem \citep{fan1953minimax} (see Lemma~\ref{lem::KyFan}). Then for a fixed $\bmu$, the optimal classifier owns a closed-form expression. After some algebraic manipulations, \eqref{eq::rem_inf_max_fer} becomes the first convex program in the equivalent expression of $\epsilon_{\text{split}, \mathsf{FER}}$. 
In the same vein, the another term $\max_{s\in \{0,1\}}\inf_{h:\mathcal{X}\to [0,1]} L_s(h)$ becomes the second convex program.
\end{rem}

Next, we show that the objective functions of the convex programs in Theorem~\ref{thm::FER_split_dual} have closed-form supergradients.
\begin{prop}
\label{prop::part_deri_FER}
Under the same notations and assumptions in Theorem~\ref{thm::FER_split_dual}, functions $g:\Delta_4 \to \Reals$ and $g_s:\Delta_2 \to \Reals$ with $s\in \{0,1\}$ defined as
\begin{align*}
    &g(\bmu) \defined \sum_{s\in \{0,1\}}\mu_{s,1} + \EE{\left(\sum_{s,i\in \{0,1\}} \mu_{s,i}\phi_{s,i}(X)\right)_{-}}\quad
    g_s(\bnu) \defined \nu_{1} + \EE{\left(\sum_{i\in \{0,1\}} \nu_{i}\phi_{s,i}(X)\right)_{-}}
\end{align*}
have a closed-form supergradient, respectively:
\begin{align}
    &\partial g(\bmu) \ni \left(i + \EE{\psi_{s,i}(X) \cdot \indicator{\Big[\sum_{s',i'\in \{0,1\}} \mu_{s',i'}\phi_{s',i'}(X) < 0\Big]\ \Bigg|\ S=s}}\right)_{s,i\in \{0,1\}},\\
    &\partial g_s(\bnu) \ni \left(i + \EE{\psi_{s,i}(X) \cdot \indicator{\Big[\sum_{i'\in \{0,1\}} \nu_{i'}\phi_{s,i'}(X) < 0\Big]}\ \Bigg|\  S=s}\right)_{i\in \{0,1\}},
\end{align}
where $\indicator{[\cdot]}$ is the indicator function and 
\begin{align}
    \psi_{s,i}(x) \defined \frac{1-i-y_s(x)}{\Pr(Y=i\mid S=s)},\quad s,i\in \{0,1\}.
\end{align}
\end{prop}
\begin{proof}
See Appendix~\ref{append::deri_FER}.
\end{proof}
When the underlying data distribution is known, one can compute $\epsilon_{\text{split}, \mathsf{FER}}$ by solving the convex programs in Theorem~\ref{thm::FER_split_dual} via standard tools, such as mirror descent, with convergence guarantees \citep{beck2003mirror}. This is non-trivial because, as stated before, computing $\epsilon_{\text{split}, \mathsf{FER}}$ directly from its definition could be intractable.

In practice, when the underlying data distribution is unknown, one can first approximate the conditional distribution $\Pr(S=1\mid X=x)$ and the labeling functions $y_0(x)$, $y_1(x)$ by training three well-calibrated binary classifiers. These classifiers will be called when computing the supergradient of the objective functions (see Proposition~\ref{prop::part_deri_FER}). We summarize our procedure of computing $\epsilon_{\text{split}, \mathsf{FER}}$ in Algorithm~\ref{alg:descent} where stochastic mirror descent is used for solving the convex programs in Theorem~\ref{thm::FER_split_dual}. The numerical results are deferred to Section~\ref{subsec::syn_data}. 

\begin{algorithm}[httb]
\begingroup
\small
\caption{Computing $\epsilon_{\text{split}, \mathsf{FER}}$ using stochastic mirror descent.}
\label{alg:descent}

\begin{algorithmic}[*]
\State {\bfseries Input:}

\State \quad dataset: $\mathcal{D}=\{(x_i,y_i,s_i)\}_{i=1}^n$,\quad  maximum number of iterations: $T$,\quad step size: $\{\eta_t\}_{t=1}^T$

\State \textbf{Initialize} \vspace{0.25em}

\State \quad $\mathcal{I}_0 \gets \{i = 1, \cdots, n \mid s_i = 0\}$ \Comment{indices of points in $\mathcal{D}$ with $s_i = 0$}

\State \quad $\mathcal{D}_0 \gets (x_i, y_i)$ for $i \in \mathcal{I}_0$ \Comment{points with $s_i = 0$}

\State \quad $\mathcal{D}_1 \gets (x_i, y_i)$ for $i \not\in \mathcal{I}_0$ \Comment{points with $s_i = 1$}

\State \quad approximate $\Pr(S=1\mid X=x)$ \Comment{train a classifier using $\{(x_i,s_i)\}_{i=1}^n$}

\State \quad approximate $y_0(x)$ and $y_1(x)$ \Comment{train two classifiers using $\mathcal{D}_0$ and $\mathcal{D}_1$, respectively}

\State \quad $\bmu \gets (0.25,0.25,0.25,0.25)$ and $\bnu^{(s)} \gets (0.5,0.5)$  \Comment{initialize values}

\For{t = 1,2, $\cdots$,T}
\State draw unlabeled sample $x_{0,t}$, $x_{1,t}$ 
\Comment{randomly draw sample from $\mathcal{D}_0$, $\mathcal{D}_1$}

\State pick $\bm{w} \in \partial g(\bmu)$ and $\bm{w}^{(s)} \in \partial g_s(\bnu^{(s)})$ \Comment{approximate supergradient using $x_{0,t}, x_{1,t}$}

\State $\mu_j \gets \mu_j \exp(\eta_t w_j)/\sum_{j'} \mu_{j'} \exp(\eta_t w_{j'})$ \Comment{update variable via entropic descent}

\State $\nu^{(s)}_j \gets \nu^{(s)}_j \exp(\eta_t w^{(s)}_j)/\sum_{j'} \nu^{(s)}_{j'} \exp(\eta_t w^{(s)}_{j'})$  \Comment{update variable via entropic descent}

\EndFor

\State \textbf{return:} $g(\bmu) - \max_{s\in \{0,1\}} g_s(\bnu^{(s)})$  \Comment{the FER-benefit-of-splitting: $\epsilon_{\text{split},\mathsf{FER}}$}

\end{algorithmic}

\endgroup

\end{algorithm}
Our procedure can be understood through the following two steps:
\begin{itemize}
    \item training a classifier to identify the sensitive attribute using input features and a classifier for each group to predict label using input features;
    \item solving (convex) programs with these classifiers in hand. 
\end{itemize}
We remark that this two-step approach has also appeared in \citep[e.g.,][]{menon2018cost,wang2019repairing} for designing ``fair'' classifiers.

\section{Splitting in Practice}
\label{sec::split_practice}

So far we have studied the benefit-of-splitting from an information-theoretic view as we assume the underlying data distribution is known and do not restrict the space of potential classifiers. In this section, we study the effect of splitting classifiers from a more practical perspective. First, we restrict the classifiers over a hypothesis class (e.g., logistic regressions) and analyze the hypothesis class dependent splitting. Second, we consider splitting classifiers in a finite sample regime and study the sample limited splitting.

\subsection{Hypothesis Class Dependent Splitting}
\label{sec::hyp_dep_split}

We restrict both group-blind and split classifiers over the same hypothesis class and introduce a hypothesis class dependent benefit-of-splitting for quantifying the loss reduction by splitting classifiers. 
\begin{defn}
\label{defn::hypothesis_splitting}
For a fixed probability distribution $P_{X,Y|S=s}$ with $s\in \{0,1\}$ and a given hypothesis class $\mathcal{H}$, the $\mathcal{H}$-benefit-of-splitting is defined as 
\begin{align}
    &\epsilon_{\text{split}}^{\mathcal{H}} \defined \inf_{h\in \mathcal{H}}\max_{s\in \{0,1\}} \EE{\lvert h(X) - y_s(X)\rvert \mid S=s} - \max_{s\in \{0,1\}}\inf_{h\in \mathcal{H}} \EE{\lvert h(X) - y_s(X)\rvert \mid S=s}.
\end{align}
\end{defn}
Clearly, the $\mathcal{H}$-benefit-of-splitting maintains the non-maleficence principle $\epsilon_{\text{split}}^{\mathcal{H}} \geq 0$, i.e., given sufficient samples, splitting classifiers will never diminish model accuracy compared to using a group-blind classifier. Next, we provide upper and lower bounds for $\epsilon_{\text{split}}^{\mathcal{H}}$ in order to understand when splitting classifiers brings the most benefit. As before, these bounds rely on three major factors: (i) disagreement between optimal (split) classifiers; (ii) similarity between unlabeled distributions; and (iii) approximation error defined as the smallest loss achieved by split classifiers. In particular, we assume that the last factor is small. This is a common assumption in, e.g., the domain adaptation literature \citep{ben2010theory} since when the hypothesis class is complex enough, this term will be negligible. Furthermore, one central notion of fairness we follow is non-maleficence (i.e., classifiers should avoid the causation of harm on any group). When the approximation error is large, neither group-blind classifiers nor splitting classifiers are accurate and ``harm'' is inevitable. Hence, one should change the hypothesis class first instead of splitting.

\begin{thm}
\label{thm::H_dep_split_bounds}
Let $h_s^*$ be an optimal classifier for group $s\in \{0,1\}$:
\begin{align*}
    h_s^* \in \argmin_{h\in \mathcal{H}} \EE{\lvert h(X)-y_s(X)\rvert \mid S=s}.
\end{align*}
Then we have the following upper and lower bounds for the $\mathcal{H}$-benefit-of-splitting
\begin{align*}
    &\epsilon_{\text{split}}^{\mathcal{H}}
    \leq \min_{s\in \{0,1\}} \EE{\lvert h_1^*(X) - h_0^*(X)\rvert \mid S=s}\\
    &\epsilon_{\text{split}}^{\mathcal{H}}
    \geq \frac{1}{2} \max_{s\in \{0,1\}} \EE{\lvert h_1^*(X) - h_0^*(X)\rvert \mid S=s} - \TV(P_0\| P_1) - \frac{3\sum_{s\in \{0,1\}}\EE{\lvert h_s^*(X) - y_s(X)\rvert \mid S=s}}{2}.
\end{align*}
\end{thm}
\begin{proof}
See Appendix~\ref{append::H_dep_split_bounds}.
\end{proof}
Analogous to our discussions in Section~\ref{sec::acc_imp_split}, the bounds in Theorem~\ref{thm::H_dep_split_bounds} delineate a taxonomy of splitting when both group-blind and split classifiers are restricted over the same hypothesis class: splitting classifiers does not bring much benefit when two groups have similar optimal classifiers; splitting classifiers benefits the most when two groups have similar unlabeled distributions and different optimal classifiers. We further demonstrate this taxonomy of splitting and show how these factors influence the effect of splitting through numerical experiments in Section~\ref{sec::real_world_data}.

In contrast to the upper bound for $\epsilon_{\text{split}}$ (see Theorem~\ref{thm::ben_split_l12_loss_bounds}), the upper bound for $\epsilon_{\text{split}}^{\mathcal{H}}$ does not involve the similarity between the unlabeled distributions. Consequently, when the optimal classifiers are different and the unlabeled distributions are different as well, it is unclear how much benefit splitting classifiers brings. We provide the following example which shows that different hypothesis classes may result in largely different values of the $\mathcal{H}$-benefit-of-splitting. Hence, one must study the effect of splitting on a case-by-case basis for different hypothesis classes.
\begin{example}
\label{ex::split_dep_hyp}
Let two groups' unlabeled distributions and labeling functions be $P_0 \sim \mathcal{N}(-\mu, 1)$, $y_0(x) = \indicator{[x > -\mu]}$ and $P_1 \sim \mathcal{N}(\mu, 1)$, $y_1(x) = \indicator{[x < \mu]}$, respectively. As $\mu$ grows larger, the distance between the unlabeled distributions $P_{0}$ and $P_{1}$ increases (i.e., $\TV(P_{0}\|P_{1})\to 1$ as $\mu\to \infty$). Now we consider the following two hypothesis classes:
\begin{itemize}
    \item $\mathcal{H}_{\text{threshold}}$ is the class of threshold functions over $\Reals$: $\indicator{[x>a]}$ or $\indicator{[x<b]}$.
    \item $\mathcal{H}_{\text{interval}}$ is the class of intervals over $\Reals$: $\indicator{[x\in(a,b)]}$. 
\end{itemize}
Here, $a,b$ are allowed to be $-\infty$ and $+\infty$, respectively. In both cases, the labeling functions are included in the hypothesis classes and, hence, are optimal classifiers. The disagreement between these optimal classifiers is at least $1/2$:
\begin{align*}
    \EE{\lvert y_1(X) - y_0(X)\rvert \mid S=s} \geq 1/2, \quad s\in \{0,1\}.
\end{align*}
The benefit-of-splitting under $\mathcal{H}_{\text{threshold}}$ is $1/2$ as any group-blind classifier incurs at least $1/2$ loss on the disadvantaged group. On the other hand, as $\mu$ becomes larger, the benefit-of-splitting under $\mathcal{H}_{\text{interval}}$ is nearly $0$ since a group-blind classifier with the form $h^*(x) = \indicator{[x\in(-\mu,\mu)]}$ can achieve almost perfect accuracy. 
\end{example}
The previous example shows that using a threshold function as a group-blind classifier will always incur an inevitable accuracy trade-off between two groups. On the other hand, if we enrich the hypothesis class to include interval functions, this trade-off can be reconciled. Motivated by this observation, when two groups have different unlabeled distributions and different labeling functions, we conjecture that the $\mathcal{H}$-benefit-of-splitting is determined by the ``richness'' of the hypothesis class: a more complex hypothesis class can produce a group-blind classifier which mimics the labeling function of each group in the region they lie in, and, hence, this classifier guarantees high accuracy for both groups. We formalize this intuition through the example of feedforward neural networks. 
Recall that a sigmoidal function \citep{barron1993universal} (e.g., logistic function) $S:\Reals \to \Reals$ is a bounded measurable function which satisfies $S(z)\to 1$ as $z\to +\infty$ and $S(z)\to 0$ as $z\to -\infty$. The hypothesis class associated to feedforward neural networks with one layer of sigmoidal functions has the form
\begin{align}
\label{eq::hyp_one_lay_sig}
    \mathcal{H}
    = \left\{\sum_{i=1}^k c_i S(a_i\cdot x + b_i) + c_0 \mid a_i \in \Reals^d, b_i,c_i \in \Reals \right\}.
\end{align}
In this case, Barron's approximation bounds~\citep{barron1993universal} guarantee that these neural networks can approximate a large class of functions reliably. 

\begin{prop}
\label{prop::NN_Barron_ben_split_ub}
Consider the hypothesis class $\mathcal{H}$ in \eqref{eq::hyp_one_lay_sig}. If $\mathcal{X}\subset \Reals^d$ is compact, we have 
\begin{align*}
    \epsilon_{\text{split}}^{\mathcal{H}}
    \leq \min_{s\in\{0,1\}} \sqrt{\EE{(y_1(X)-y_0(X))^2\mid S=s}} \cdot \sqrt{1-\TV(P_0\| P_1)} + \frac{2\mathsf{diam}(\mathcal{X})C}{\sqrt{k}},
\end{align*}
where $\mathsf{diam}(\mathcal{X}) = \sup_{x,x'\in \mathcal{X}} \|x-x'\|_2$, 
\begin{align}
\label{eq::FI}
    h^*(x)
    \defined \frac{y_0(x)\dif P_0(x) + y_1(x) \dif P_1(x)}{\dif P_0(x) + \dif P_1(x)}
    = \int_{\Reals^d} \exp(iw x) \tilde{h^*}(w) \dif w 
\end{align}
for some complex-valued function $\tilde{h^*}$, and $C\defined \int_{\Reals^d} \|w\|_2 |\tilde{h^*}(w)|\dif w$.
\end{prop}
\begin{proof}
See Appendix~\ref{append::NN_Barron}.
\end{proof}
\begin{rem}
The condition in \eqref{eq::FI} goes back to the seminal work of Barron \citep{barron1993universal}. By the Fourier inversion theorem, if both $h^*$ and its Fourier transform are integrable, this condition is satisfied. Further situations where \eqref{eq::FI} holds are discussed in \citep[Section~IX]{barron1993universal}.
\end{rem}

In contrast to Theorem~\ref{thm::H_dep_split_bounds}, the upper bound for the $\mathcal{H}$-benefit-of-splitting above involves the similarity between the unlabeled distributions (i.e., $\TV(P_0\|P_1)$) at the cost of having an additional term which is inversely proportional to the hypothesis class complexity. The intuition behind our proof is that if a data scientist is able to train a neural network with enough neurons, a group-blind classifier is capable of guaranteeing high accuracy for both groups when their unlabeled distributions are different. Consequently, there is no much room for accuracy improvement by splitting classifiers.

\subsection{Comparison with the Cost-of-Coupling}
\label{subsec:comparison}

We compare our notion of the $\mathcal{H}$-benefit-of-splitting with the cost-of-coupling introduced by Dwork~\etal \citep{dwork2018decoupled}. We first illustrate the difference between group blind, coupled, split classifiers through the example of logistic regressions: 
\begin{itemize}
    \item a group blind classifier never uses a sensitive attribute as an input: $h(x) = \mathsf{logistic}(w^Tx)$; 
    \item a coupled classifier uses a sensitive attribute while sharing other parameters: $h(s,x) = \mathsf{logistic}(w^Tx + w_0s)$; 
    \item split classifiers are a set of classifiers applied to each separate group: $h_s(x)=\mathsf{logistic}(w_s^Tx)$.
\end{itemize}
Now we recast the definition of the cost-of-coupling \citep{dwork2018decoupled} using our notation.
\begin{defn}[\citep{dwork2018decoupled}]
Let $\mathcal{H}_{\text{C}}$ be a hypothesis class which contains coupled classifiers from a finite set $\mathcal{S}\times \mathcal{X}$ to $[0,1]$. For a given loss function $\ell(\cdot,\cdot)$, the cost-of-coupling is defined as 
\begin{align*}
    \max_{P_{S,X,Y}} \Big\{ \min_{h\in \mathcal{H}_{\text{C}}}L(h) - \min_{\substack{h_s \in \mathcal{H}_{\text{C}}\\ \text{for } s\in \mathcal{S}} }L(\{h_s\}_{s\in \mathcal{S}})\Big\},
\end{align*}
where the maximum is over all distributions on $\mathcal{S}\times \mathcal{X} \times \{0,1\}$ and $L(h) \defined \EE{\ell(Y, h(S,X))}$, $L(\{h_s\}_{s\in \mathcal{S}}) \defined \EE{\ell(Y, h_S(S,X))}$.
\end{defn}

There are two important differences between the $\mathcal{H}$-benefit-of-splitting (see Definition~\ref{defn::hypothesis_splitting}) and the cost-of-coupling \citep{dwork2018decoupled}. 
First, our notion quantifies the gain in accuracy by using split classifiers rather than a group-blind classifier. In contrast, the cost-of-coupling compares coupled classifiers with split classifiers which both take a sensitive attribute as an input. 
Second, the cost-of-coupling is a worst-case quantity as it maximizes over all distributions. 
By allowing our notion to rely on the data distribution, Definition~\ref{defn::hypothesis_splitting} captures more intricate scenarios for characterizing the benefit of splitting classifiers. 
Furthermore, by taking the maximum over all distributions, we recover an analogous result of Theorem~2 in \citep{dwork2018decoupled}.

\begin{cor}
\label{cor::simp_mod_tradeoff}
There exists a probability distribution $Q_{S,X,Y}$ whose $\mathcal{H}$-benefit-of-splitting is at least $1/2$ under 
\begin{enumerate}
    \item Linear predictors: $\mathcal{H} = \{\indicator{[w^Tx\geq0]} \mid w\in \Reals^d\}$;
    \item Decision trees: $\mathcal{H}$ is the set of binary decision trees.
\end{enumerate}
Furthermore, under this hypothetical distribution $Q_{S,X,Y}$, no matter which group-blind classifier $h\in \mathcal{H}$ is used, there is always a group $s\in \{0,1\}$ such that $\EE{\lvert h(X)-y_s(X)\rvert \mid S=s} \geq 1/2$.
\end{cor}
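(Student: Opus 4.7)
The plan is to construct a single concrete distribution $Q_{S,X,Y}$ that works for both hypothesis classes simultaneously and then invoke Lemma~\ref{lem::impo_group_blind} as the main engine. The construction is designed so that the two groups share identical unlabeled marginals (making $\TV(P_0,P_1)=0$) while their labeling functions disagree everywhere on the support (making the expected disagreement equal to $1$), which is precisely the extremal regime for Lemma~\ref{lem::impo_group_blind}.

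Concretely, I would take $X \in \Reals^d$ supported on $\{-e_1, +e_1\}$ (the first standard basis vector) with $S$ uniform on $\{0,1\}$ and, conditional on $S=s$, $X$ uniform on $\{-e_1, +e_1\}$. For the labels, set $y_0(x) = \indicator{[x_1 \geq 0]}$ and $y_1(x) = \indicator{[x_1 \leq 0]}$. Then $P_0 = P_1$, so $\TV(P_0,P_1)=0$, while $|y_1(x)-y_0(x)|=1$ on the support, so $\EE{|y_1(X)-y_0(X)| \mid S=s} = 1$ for $s \in \{0,1\}$. Lemma~\ref{lem::impo_group_blind} immediately gives $\max_{s} L_s(h) \geq \tfrac{1}{2}$ for every measurable $h:\mathcal{X}\to[0,1]$, which establishes the ``furthermore'' clause in one stroke and simultaneously lower bounds the first term in $\epsilon_{\text{split}}$.

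For the split side I would exhibit classifiers in each hypothesis class that attain zero loss on their respective groups. For linear predictors, $h_0(x)=\indicator{[e_1^\top x \geq 0]}$ recovers $y_0$ and $h_1(x)=\indicator{[-e_1^\top x \geq 0]}$ recovers $y_1$, both with $w \in \Reals^d$. For binary decision trees, a depth-one tree that branches on the first coordinate realizes either labeling function exactly. Consequently $\max_{s}\inf_{h\in\mathcal{H}} L_s(h) = 0$ for each of the two hypothesis classes, and combining this with the $\tfrac{1}{2}$ lower bound on the group-blind risk yields $\epsilon_{\text{split}} \geq \tfrac{1}{2}$ in both cases.

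The steps are essentially routine once Lemma~\ref{lem::impo_group_blind} is available; the only real design choice is the distribution. The subtle point to verify carefully is that the linear-predictor class in the statement has no explicit bias term, so the labeling functions must be expressible as $\indicator{[w^\top x \geq 0]}$ without an offset; the symmetric support $\{-e_1,+e_1\}$ around the origin is what makes this work and is the reason I avoid any asymmetric choice of support. If one wanted to push the lower bound from $\tfrac{1}{2}$ toward $1$, that would require a different construction (since split achieves $0$, the bound is already tight up to the gap between $\tfrac{1}{2}$ and $1$ coming from the group-blind side), but for the claimed $\tfrac{1}{2}$ threshold the above two-point construction suffices.
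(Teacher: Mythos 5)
Your construction is correct, and your arithmetic checks out: with the two-point symmetric support $\{-e_1,+e_1\}$, $P_0=P_1$ so $\TV(P_0,P_1)=0$, the labeling functions disagree everywhere so $\EE{|y_1(X)-y_0(X)|\mid S=s}=1$, and both $y_0,y_1$ are realizable by bias-free linear separators (via $w=\pm e_1$) and by depth-one decision trees, so the split side has zero risk. Applying Lemma~\ref{lem::impo_group_blind} then gives $\max_s L_s(h)\geq 1/2$ for every measurable $h$, which delivers both the lower bound $\epsilon_{\text{split}}\geq 1/2$ and the ``furthermore'' clause.

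This is a genuinely different route from the paper's. The paper first proves a more general auxiliary lemma stating that for \emph{any} hypothesis class there is a distribution with $\epsilon_{\text{split}}\geq \tfrac{1}{2}\sup_{h_0,h_1\in\mathcal{H},\,x}|h_1(x)-h_0(x)|$, constructed by putting a Dirac mass at a single point $x^*$ where two classifiers maximally disagree and letting the labeling functions coincide with those classifiers; it then invokes Theorem~\ref{thm::l2_lb_generic_hypo_class} (whose approximation-error terms $L_s(h_s^*)$ vanish by construction) to get the bound, and specializes to linear predictors and decision trees by noting each has a pair of classifiers disagreeing at some point. Your argument bypasses Theorem~\ref{thm::l2_lb_generic_hypo_class} entirely and runs directly off Lemma~\ref{lem::impo_group_blind}, at the cost of having to separately exhibit that the labeling functions lie in $\mathcal{H}$ (so that the split risk is exactly zero) rather than getting this for free from the construction. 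The trade-off: the paper's route yields a reusable general lemma quantified over all hypothesis classes, which it advertises as extendable to kernel methods and neural networks; your route is more concrete, more elementary, and makes the role of the no-bias constraint in the linear class explicit. Both are valid proofs of the stated corollary.
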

The proof technique used for this corollary can be extended to many other models (e.g., kernel methods or neural networks) and we defer its proof to Appendix~\ref{append::H_ben_split_geq_1/2}.

\subsection{Sample Limited Splitting}

Consider the following scenario. A data scientist has access to finitely many samples and she/he solves an empirical risk optimization in order to obtain an optimal group-blind classifier or a set of optimal split classifiers. When these classifiers are deployed on new fresh samples, a natural question is whether the optimal split classifiers still outperform the group-blind classifier. We introduce the sample-limited-splitting which quantifies the effect of splitting classifiers within this finite sample regime. 

\begin{defn}
\label{defn::emp_ben_split}
For a given hypothesis class $\mathcal{H}$ and $n_s$ i.i.d. samples $\{(x_{s,i},y_{s,i})\}_{i=1}^{n_s}$ from group $s\in \{0,1\}$, let $\hat{h}^*$ and $\{\hat{h}_s^*\}_{s\in \{0,1\}}$ be optimal group-blind and split classifiers for the empirical $\ell_1$ loss, respectively:
\begin{align}
    &\hat{h}^* \in \argmin_{h\in \mathcal{H}}\max_{s\in\{0,1\}} \frac{\sum_{i=1}^{n_s} |h(x_{s,i}) - y_{s,i}|}{n_s}, \label{eq::opt_em_gb}\\
    &\hat{h}_s^* \in \argmin_{h\in \mathcal{H}} \frac{\sum_{i=1}^{n_s} |h(x_{s,i}) - y_{s,i}|}{n_s}, \quad s\in \{0,1\}.\label{eq::opt_em_dec}
\end{align}
The sample-limited-splitting is defined as
\begin{align}
    \hat{\epsilon}_{\text{split}} \defined \max_{s\in \{0,1\}} \EE{\lvert \hat{h}^*(X) - y_s(X) \rvert \mid S=s} - \max_{s\in \{0,1\}} \EE{\lvert \hat{h}^*_s(X) - y_s(X)\rvert \mid S=s}.
\end{align}
\end{defn}
Unlike the benefit-of-splitting or the $\mathcal{H}$-benefit-of-splitting, the sample-limited-splitting is not necessarily non-negative. In other words, with limited amount of samples available, splitting classifiers may not improve accuracy for both groups. In what follows, we provide data-dependent upper and lower bounds for the sample-limited-splitting in order to understand the effect of splitting classifiers in the finite sample regime. 
\begin{thm}
\label{thm::samp_comp}
Let $\mathcal{H}$ be a hypothesis class from $\mathcal{X}$ to $\{0,1\}$ with VC dimension $D$. If $\hat{h}_s^*$ is a minimizer of the empirical $\ell_1$ loss $\sum_{i=1}^{n_s} |h(x_{s,i}) - y_{s,i}|/n_s$ computed via $n_s$ i.i.d. samples $\{(x_{s,i},y_{s,i})\}_{i=1}^{n_s}$, then, with probability at least $1-\delta$, 
\begin{align}
    \hat{\epsilon}_{\text{split}}
    \leq & \min_{s\in \{0,1\}} \frac{\sum_{i=1}^{n_s}|\hat{h}_1^*(x_{s,i}) - \hat{h}_0^*(x_{s,i})|}{n_s} + \Omega,\\
    \begin{split}
    \label{eq::lb_em_ben_split}
    \hat{\epsilon}_{\text{split}}
    \geq & \frac{1}{2}\max_{s\in \{0,1\}} \frac{\sum_{i=1}^{n_s}|\hat{h}_1^*(x_{s,i}) - \hat{h}_0^*(x_{s,i})|}{n_s} - \TV(\hat{P}_0\| \hat{P}_1) - 3\lambda - \Omega,
    \end{split}
\end{align}
where $\hat{P}_s$ is the empirical unlabeled distribution and 
\begin{align*}
    \lambda 
    &\defined \frac{1}{2}\left(\frac{\sum_{i=1}^{n_0}|\hat{h}_0^*(x_{0,i}) - y_{0,i}|}{n_0} + \frac{\sum_{i=1}^{n_1} |\hat{h}_1^*(x_{1,i}) - y_{1,i}|}{n_1}\right), \quad
    \Omega 
    \defined 4\max_{s\in \{0,1\}} \sqrt{\frac{2D\log(6n_s) + 2\log(8/\delta)}{n_s}}.
\end{align*}
\end{thm}
\begin{proof}
See Appendix~\ref{append::samp_comp}.
\end{proof}
Here, the term $\lambda$ is the (average) training loss and $\Omega$ is the complexity term, which is approximately $\sqrt{D/\min\{n_0, n_1\}}$. As shown, the upper and lower bounds for $\hat{\epsilon}_{\text{split}}$ rely on four factors. The first three factors, which also appear in our bounds of the $\mathcal{H}$-benefit-of-splitting (see Theorem~\ref{thm::H_dep_split_bounds}), are the disagreement between the (empirically) optimal classifiers, the similarity of the (empirically) unlabeled distributions, and the (empirically) training error. In addition to these factors, our bounds for $\hat{\epsilon}_{\text{split}}$ also depend on the number of samples from each group, especially minority group with less samples, and model complexity (measured by the VC dimension \citep{anthony2009neural}). 
\section{Numerical Experiments}
\label{sec::exp}

We illustrate the theoretical results presented in this paper through experiments. 
In Section~\ref{sec::perf_imp_split}, we presented an algorithm (Algorithm~\ref{alg:descent}) for computing the benefit-of-splitting. In particular, when the data distribution is known, this algorithm provably converges to the exact value of the benefit-of-splitting. To evaluate Algorithm~\ref{alg:descent}, we conduct experiments on a synthetic example where both the data distribution and the values of the benefit-of-splitting are known. 
In Section~\ref{sec::split_practice}, we characterized a taxonomy of splitting when classifiers are restricted over a hypothesis class. We demonstrate this taxonomy of splitting through experiments on 40 real-world datasets.

\subsection{Synthetic Datasets}
\label{subsec::syn_data}

\begin{figure*}[t]
\centering
\includegraphics[width=0.45\linewidth]{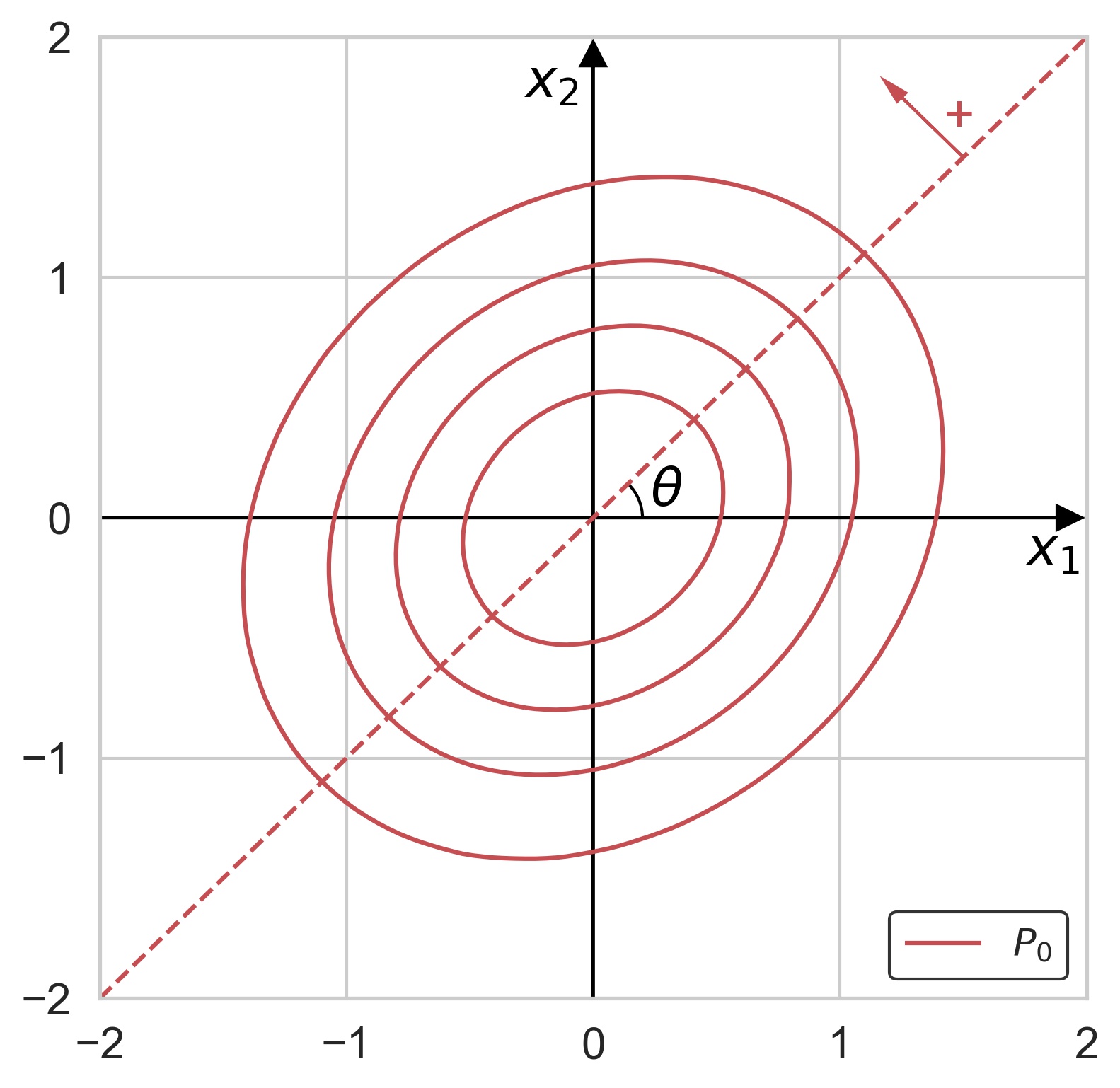}
\includegraphics[width=0.45\linewidth]{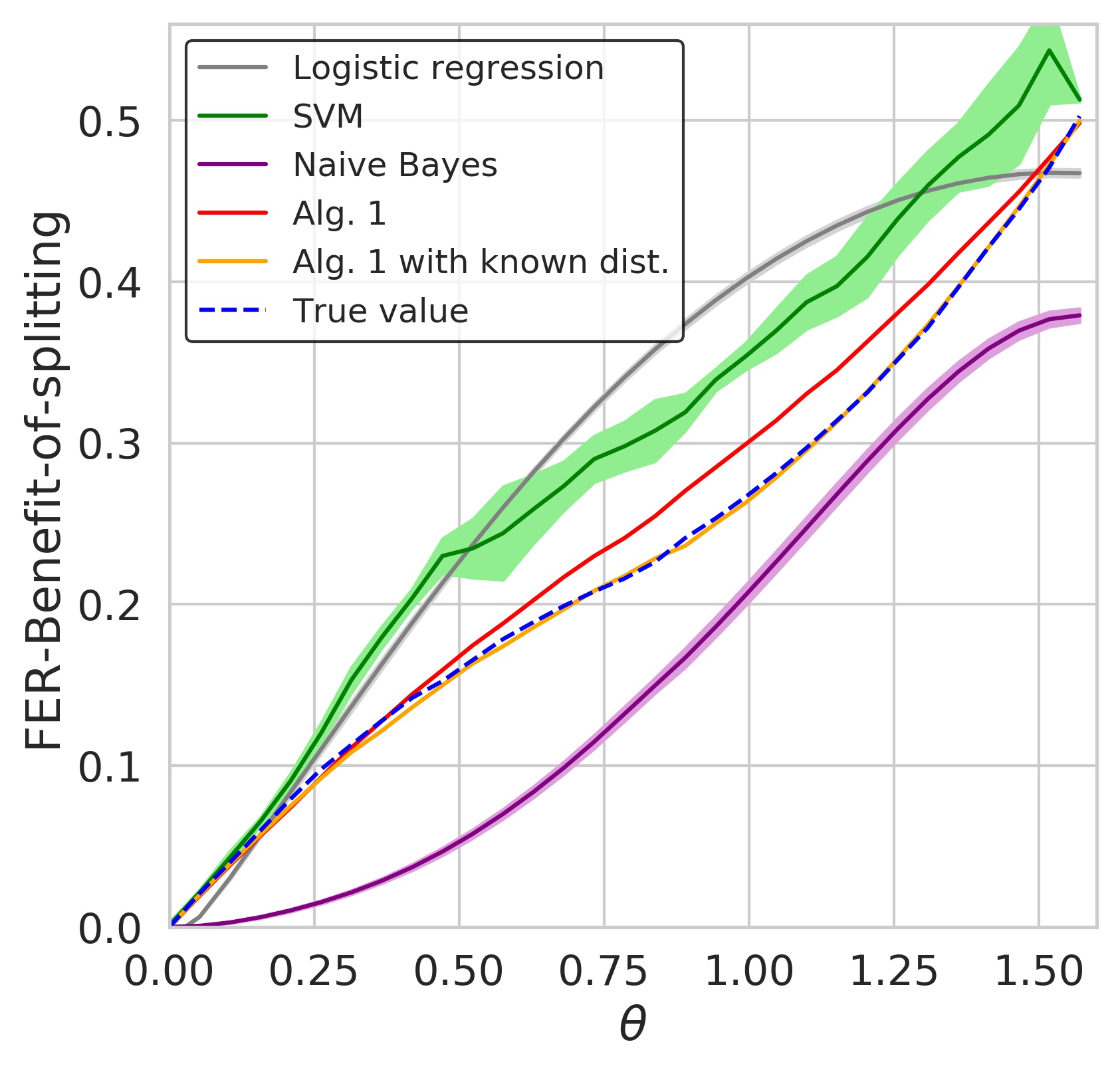}
\caption{\small{We demonstrate the performance of Algorithm~\ref{alg:descent} for computing the FER-benefit-of-splitting $\epsilon_{\text{split},\mathsf{FER}}$ on synthetic datasets. Left: the ellipses are the level sets of the unlabeled distribution $P_0$ and the dash line is the labeling function $y_0$ with a arrow indicating the region where points are labeled as $+$.  Right: $\epsilon_{\text{split},\mathsf{FER}}$ computed by different approaches along with its true values. As shown, when the underlying data distribution is known, the approximation of $\epsilon_{\text{split},\mathsf{FER}}$ produced by Algorithm~\ref{alg:descent} (orange curve) recovers its true values (blue dash curve). When the underlying distribution is unknown, we train binary classifiers and feed them into Algorithm~\ref{alg:descent}. The approximation of $\epsilon_{\text{split},\mathsf{FER}}$ produced by Algorithm~\ref{alg:descent} is depicted as red curve. Finally, we compute $\epsilon_{\text{split},\mathsf{FER}}$ empirically by training optimal group-blind and split classifiers via (i) logistic regression (gray curve), (ii) linear SVM (green curve), (iii) Naive Bayes classifier (purple curve). We use 5-fold cross validation for training these optimal classifiers and plot the standard deviation as shaded region. As shown, the approximations of $\epsilon_{\text{split},\mathsf{FER}}$ produced by Algorithm~\ref{alg:descent} outperform all three empirical approximations of $\epsilon_{\text{split},\mathsf{FER}}$.
}
}
\label{Fig::Gaussian_ben_FER_val}
\end{figure*}

We introduced the FER-benefit-of-splitting $\epsilon_{\text{split},\mathsf{FER}}$ in Section~\ref{subsec::FER_BOS} and proposed an efficient procedure for computing this quantity (Algorithm~\ref{alg:descent}). Here, we validate Algorithm~\ref{alg:descent} through experiments on synthetic datasets. For a fixed parameter $\theta\in [0,\pi/2]$, let two groups' unlabeled distributions be zero-mean Gaussian distributions with different covariance matrices: $P_0 \sim \mathcal{N}\left(\mathbf{0}, \bSigma_0\right)$ and $P_1 \sim \mathcal{N}\left(\mathbf{0}, \bSigma_1\right)$ where
\begin{align*}
    \bSigma_0 = 
    \begin{pmatrix}
        0.5\cos(\theta)^2 + 1 & 0.5\sin(\theta)\cos(\theta)\\
        0.5\sin(\theta)\cos(\theta) & 0.5\sin(\theta)^2 + 1
    \end{pmatrix}
    ,\quad
    \bSigma_1 = 
    \begin{pmatrix}
        0.5\cos(\theta)^2 + 1 & -0.5\sin(\theta)\cos(\theta)\\
        -0.5\sin(\theta)\cos(\theta) & 0.5\sin(\theta)^2 + 1
    \end{pmatrix}
    .
\end{align*}
The distributions $P_0$ and $P_1$ correspond to $\theta$ counterclockwise and clockwise rotation of the Gaussian distribution $\mathcal{N}(\mathbf{0}, \diag(1.5,1))$. Furthermore, let the labeling functions be 
\begin{align*}
    y_0(x) = 
    \begin{cases}
        1 &\text{ if } (-\sin(\theta), \cos(\theta))\cdot x > 0\\
        0 &\text{ otherwise},
    \end{cases}
    \quad
    y_1(x) = 
    \begin{cases}
        1 &\text{ if } (\sin(\theta), \cos(\theta))\cdot x > 0\\
        0 &\text{ otherwise}.
    \end{cases}
\end{align*}
The left-hand side of Figure~\ref{Fig::Gaussian_ben_FER_val}  displays the level sets of $P_0$ as well as its labeling function.

In this synthetic example, $\epsilon_{\text{split},\mathsf{FER}}$ has a closed-form expression: $\epsilon_{\text{split},\mathsf{FER}} = 2\Pr(X\in \mathcal{A}\mid S=0)$ where $\mathcal{A}\defined \{x=(x_1,x_2)\in\Reals^2\mid y_1(x) = 1, x_2<0\}$ (see Appendix~\ref{append::syn_FER_ben_split} for a proof). When $\theta = 0$, two groups share the same unlabeled distribution (i.e., $P_0 = P_1$) and the same labeling function (i.e., $y_0 = y_1$). Hence, there is no benefit of splitting classifiers: $\epsilon_{\text{split},\mathsf{FER}} = 0$. On the other hand, when $\theta = \pi/2$, two groups have the same unlabeled distribution but completely different labeling functions. Splitting classifiers achieves the most benefit: $\epsilon_{\text{split},\mathsf{FER}} = 0.5$.

By varying the values of $\theta$ and drawing 10k samples from each group, we compare the true values of $\epsilon_{\text{split},\mathsf{FER}}$ with the outputs from Algorithm~\ref{alg:descent} as well as other empirical approximations. Recall that Algorithm~\ref{alg:descent} requires a conditional distribution $\Pr(S=s\mid X=x)$ and the labeling functions $y_0$ and $y_1$. Since the conditional distribution and labeling functions are known in this synthetic example, we feed their explicit forms into Algorithm~\ref{alg:descent} for computing $\epsilon_{\text{split},\mathsf{FER}}$ (orange curve in Figure~\ref{Fig::Gaussian_ben_FER_val} Right). 
In practice, the conditional distribution and labeling functions are unknown, so we also train a Naive Bayes classifier \citep{friedman2001elements} to approximate $\Pr(S=s\mid X=x)$ and two linear support-vector machine (SVM) classifiers \citep{friedman2001elements} to approximate the labeling functions. By feeding these binary classifiers into Algorithm~\ref{alg:descent}, another approximation of $\epsilon_{\text{split},\mathsf{FER}}$ is output (red curve in Figure~\ref{Fig::Gaussian_ben_FER_val} Right).
Furthermore, we compute $\epsilon_{\text{split},\mathsf{FER}}$ empirically by training optimal group-blind and split classifiers via logistic regression, linear SVM, or Naive Bayes classifier. Computing the false error rate reduction leads to three empirical approximations of
$\epsilon_{\text{split},\mathsf{FER}}$.

As shown in Figure~\ref{Fig::Gaussian_ben_FER_val}, when Algorithm~\ref{alg:descent} has access to the explicit forms of $\Pr(S=s\mid X=x)$, $y_0$, and $y_1$, it accurately recovers $\epsilon_{\text{split},\mathsf{FER}}$. This is remarkable since even with the knowledge of the underlying distributions, it is unclear how to compute $\epsilon_{\text{split},\mathsf{FER}}$ directly from its definition. 
We also observe that Algorithm~\ref{alg:descent} applied to binary classifiers outputs more accurate approximation of $\epsilon_{\text{split},\mathsf{FER}}$ than the approximations produced by using logistic regression, linear SVM, or Naive Bayes classifier. 

To summarize, we conclude that (i) when the underlying distribution is known, Algorithm~\ref{alg:descent} can produce the precise values of $\epsilon_{\text{split},\mathsf{FER}}$ and has convergence guarantees; (ii) when Algorithm~\ref{alg:descent} is fed with binary classifiers, it produces reliable approximation of $\epsilon_{\text{split},\mathsf{FER}}$; (iii) computing the FER-benefit-of-splitting empirically by training optimal classifiers could incur high approximation errors.

\subsection{Real-world Datasets}
\label{sec::real_world_data}

\begin{figure*}[t]
\centering
\includegraphics[width=0.95\linewidth]{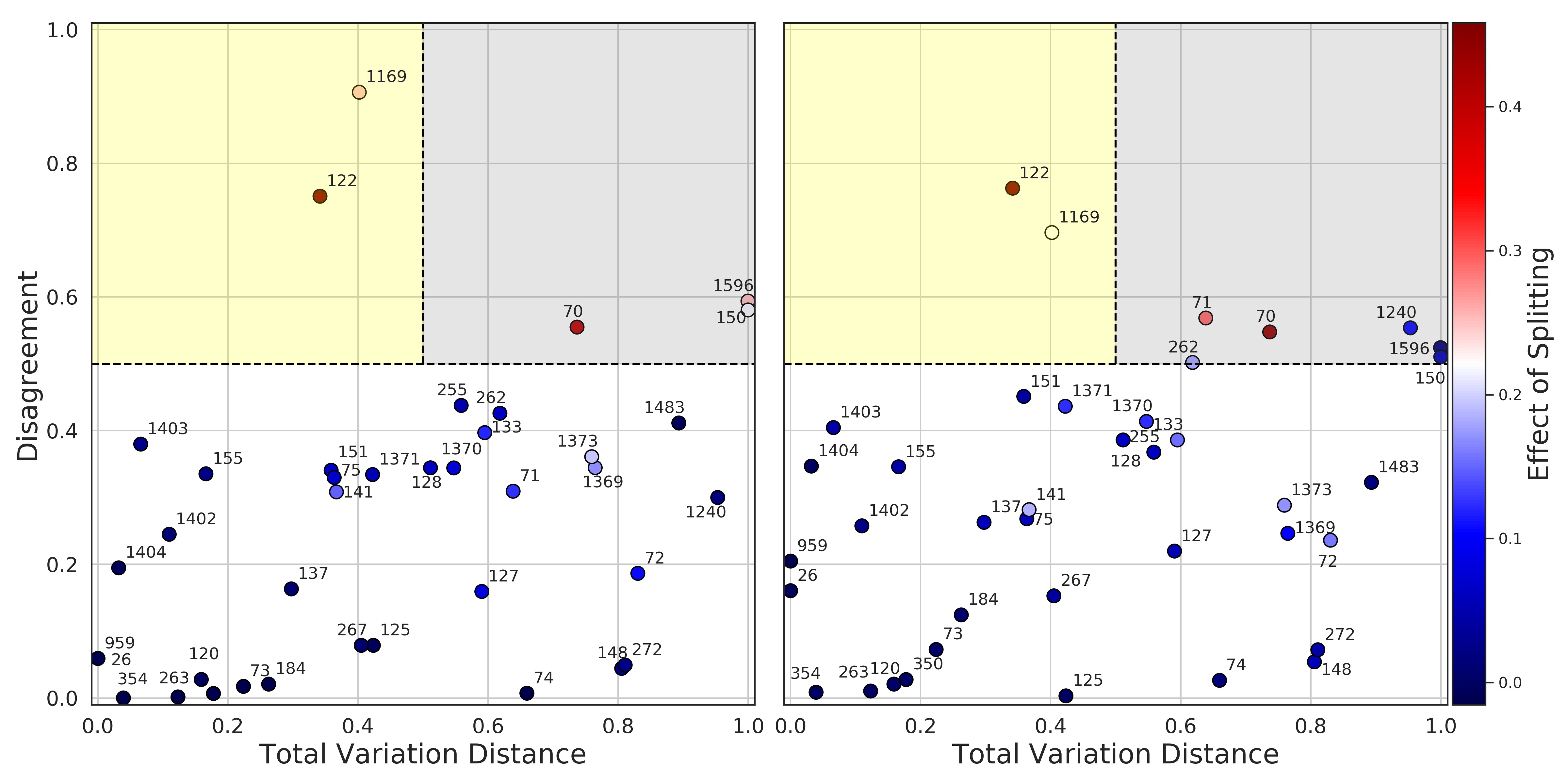}
\caption{\small{We demonstrate how the effect of splitting classifiers is determined by the two factors: disagreement between optimal classifiers (y-axis) and total variation distance between unlabeled distributions (x-axis). 
We restrict both group-blind and split classifiers to logistic regression classifiers (left) or decision tree classifiers (right).
Each dot represents a dataset in OpenML \citep{OpenML2013} with color indicating the effect of splitting classifiers compared to using a group-blind classifier and texts indicating dataset ID. Our upper and lower bounds in Theorem~\ref{thm::H_dep_split_bounds} reveal a taxonomy of splitting where splitting does not bring much benefit (white region); splitting  brings the most benefit (yellow region); or splitting has undetermined effect (grey region).
}
}
\label{Fig::Real_data}
\end{figure*}

In Section~\ref{sec::split_practice}, we analyzed the effect of splitting classifiers when both group-blind and split classifiers are restricted over the same hypothesis class. The bounds in Theorem~\ref{thm::H_dep_split_bounds} reveal two main factors that could determine this effect: disagreement between optimal classifiers and similarity between unlabeled distributions. Here we demonstrate how these two factors influence the effect of splitting through experiments on 40 real-world datasets, collected from OpenML \citep{OpenML2013}.

\paragraph{Setup.}
We preprocess all 40 datasets by adopting the  procedure described in \citep{dwork2018decoupled}. 
All categorical features are transformed into binary by assigning the most frequent object to $1$ and the rest of the objects to $0$. 
The first binary feature is selected as the sensitive attribute and, hence, these datasets are ``semi-synthetic''. 
We truncate the datasets so that each group contains at most 10k data points. In each dataset, there are at least 8k data points per group, minimizing the effect of potential lack of samples per group.

\paragraph{Implementation.}

We obtain optimal split classifiers via training a logistic regression model with the \texttt{LIBLINEAR} solver \citep{fan2008liblinear}, fitting the model by drawing samples from each group. Since an optimal group-blind classifier is a minimizer of $\min_{h\in\mathcal{H}}\max_{w\in [0,1]} w L_0(h) + (1-w)L_1(h)$ where $L_s(h)$ is the loss of a classifier $h$ on group $s\in \{0,1\}$, we solve this optimization approximately by considering its dual formula $\max_{w\in [0,1]} \min_{h\in\mathcal{H}} w L_0(h) + (1-w) L_1(h)$ and use 5-fold cross validation to tune the parameter $w$ therein. Although this procedure of training group-blind classifier needs access to data points' sensitive attribute, it does not violate group-blindness \citep{lipton2018does} because the output classifier does not use the sensitive attribute as an input when deploying on new data. In addition to logistic regressions, we repeat this experiment by training decision tree classifiers with depth 7. The disagreement between optimal classifiers is calculated by applying the optimal split classifiers on each data point and computing the discrepancy. We estimate the total variation distance between unlabeled distributions by applying the procedures introduced in \citep{kanamori2011f} (see Appendix~\ref{append::real_data_TV} for more details).

\paragraph{Result.}

In Figure~\ref{Fig::Real_data}, we illustrate the taxonomy of splitting delineated by our bounds in Theorem~\ref{thm::H_dep_split_bounds}. We restrict the hypothesis class to be logistic regression (Figure~\ref{Fig::Real_data} Left) or to be decision trees with depth 7 (Figure~\ref{Fig::Real_data} Right). Each dot in the figures represents a dataset with its corresponding ID number in the OpenML dataset. The color captures the loss reduction by using the optimal split classifiers compared to deploying the optimal group-blind classifier (red means splitting has more benefit and blue means splitting does not bring much benefit). The location of each dot is determined by the two factors: disagreement between optimal classifiers (y-axis) and total variation distance between unlabeled distributions (x-axis). 
\begin{itemize}
    \item The upper bound in Theorem~\ref{thm::H_dep_split_bounds} indicates that splitting does not bring much benefit when the optimal classifiers are similar. As shown in Figure~\ref{Fig::Real_data}, all datasets which are below the horizontal dash line have small benefit by splitting classifiers (i.e., dots are blue). 
    
    \item The lower bound in Theorem~\ref{thm::H_dep_split_bounds} indicates that splitting benefits model performance when the optimal classifiers are different and the unlabeled distributions are similar. As shown in Figure~\ref{Fig::Real_data}, there are two datasets (ID 122 and 1169) which are in the yellow region and they all achieve large benefit from splitting classifiers. 
    
    \item When both the optimal classifiers and the unlabeled distributions are different, the effect of splitting classifiers can not be determined by the bounds in Theorem~\ref{thm::H_dep_split_bounds}. As shown in Figure~\ref{Fig::Real_data}, the datasets in the grey region could have either large benefit by splitting classifiers or limited benefit. Furthermore, we have conjectured (see Section~\ref{sec::hyp_dep_split}) that in this case a more complex hypothesis class leads to less benefit from splitting classifiers. This is further evidenced in the experiments: when both group-blind and split classifiers are logistic regressions (Figure~\ref{Fig::Real_data} Left), the datasets which are in the grey region all achieve non-trivial benefit by splitting classifiers. In contrast, when decision trees are used (Figure~\ref{Fig::Real_data} Right), there are datasets (e.g., ID 1240) in the grey region which achieve a limited amount of benefit by splitting. 
\end{itemize}

\section{Conclusion and Future Work}
\label{sec:conc}

Split classifiers should only be considered when it is ethical and legal to do so, and when it does not result in harm to any underlying group. Eliminating disparate treatment does not necessarily lead to a  group-fair classifier. On the one hand, a sensitive attribute could correlate with other proxy variables which are used for decision making \citep{hunt2005redlining,wang2019repairing}. On the other hand, the sensitive attribute can be an important feature for the prediction task \citep{kleinberg2018algorithmic,corbett2018measure}. In the latter case,  using a group-blind classifier for achieving treatment parity may lead to an unfavorable accuracy trade-off. 

Motivated by the above discussion, we investigated the following fundamental question: when disparate treatment is allowed, is it beneficial to incorporate the sensitive attribute as an input feature in order to improve a classifier's performance? Due to the bias-variance trade-off, in practice, the answer will depend on the number of training data and the complexity of the hypothesis class. 
In this paper, we focused on an information-theoretic regime where the underlying data distribution is known---or infinitely many data points are available---and the hypothesis class is unrestricted. To evaluate the potential gain in average performance from allowing a classifier to exhibit disparate treatment, we compare split classifiers with group-blind classifiers and characterize precise conditions where splitting classifiers achieves the most benefit. Our results show that---in this narrow information-theoretic regime---splitting classifiers follows the non-maleficence principle and allows a data scientist to deploy more accurate and suitable models for each group.

There are two open questions that deserve further exploration. First, our bounds indicate that the difference in underlying data distributions between groups, the number of samples, and the hypothesis class can all influence the effect of splitting classifiers. Nonetheless, we believe that there are more factors that play an important role in determining such an effect. For example, a group-blind classifier may perform worse on minority groups due to unbalanced samples in the training process and using split classifiers could potentially reconcile this issue. In a similar vein, the lack of sample diversity (i.e., training datasets do not contain enough samples from minority groups) could affect the performance and generalization of ML models for minority groups. Hence, it is crucial to characterize the impact of sample size and diversity on detecting and reducing discrimination. Second, we introduce the sample-limited-splitting $\hat{\epsilon}_{\text{split}}$ for quantifying the effect of splitting classifiers in the finite sample regime and provide its upper and lower bounds. It would be interesting to characterize precise conditions under which $\hat{\epsilon}_{\text{split}} \geq 0$ (or $\hat{\epsilon}_{\text{split}} \leq 0$).

\section*{Acknowledgements}
The authors would like to thank Dr. Berk Ustun for his valuable input. The authors would also like to thank the anonymous reviewers and the associate editor for their constructive feedback.

\bibliographystyle{IEEEtran}
\bibliography{references}

\appendices
\clearpage

\section{Examples of $f$-divergence}
\label{append::f-divergence}

We recall some examples of $f$-divergence \citep{csiszar1967information} here. 
\begin{itemize}
    \item KL-divergence \citep{kullback1951information}: $f(x) = x\log(x)$,
    \begin{align}
        \KL(P\|Q) = \int \log\left(\tfrac{\dif P}{\dif Q}\right) \dif P.
    \end{align}
    \item Total variation distance: $f(x)=|x-1|/2$,
    \begin{align}
        \TV(P\|Q) = \frac{1}{2} \int \left|\tfrac{\dif P}{\dif Q} - 1\right| \dif Q.
    \end{align}
    \item Chi-square divergence \citep{pearson1900x}: $f(x)=(x-1)^2$ or $f(x) = x^2-1$,
    \begin{align}
        \chidiv(P\|Q) 
        = \int \left(\tfrac{\dif P}{\dif Q} - 1\right)^2 \dif Q 
        = \int \left(\tfrac{\dif P}{\dif Q}\right)^2 \dif Q - 1.
    \end{align}
    \item Jensen-Shannon divergence \citep{lin1991divergence}: $f(x) = x\log(x)/2 - (1+x)\log((1+x)/2)/2$,
    \begin{align}
        \mathsf{JS}(P\|Q) = \frac{1}{2}\KL\left(P\|\tfrac{P+Q}{2}\right) + \frac{1}{2}\KL\left(Q\|\tfrac{P+Q}{2}\right).
    \end{align}
    Note that the Jensen-Shannon divergence is defined in a general form in \citep{lin1991divergence} for $\omega\in [0,1]$
    \begin{align}
    \label{eq::defn_JS_general}
        \mathsf{JS}_{\omega}(P\|Q) = \omega\KL\left(P\|\omega P + (1-\omega) Q\right) + (1-\omega)\KL\left(Q\|\omega P + (1-\omega) Q \right).
    \end{align}
    \item $E_{\gamma}$-divergence (also called hockey-stick divergence) \citep{sason2016f,polyanskiy2010channel,sharma2013fundamental,liu2016Egamma}: $f(x) = (x-\gamma)_{+}$ for $\gamma\geq 1$ where $(a)_{+} \defined \max\{a,0\}$,
    \begin{align}
        E_{\gamma}(P\|Q) = \int \left(\tfrac{\dif P}{\dif Q} - \gamma \right)_{+} \dif Q.
    \end{align}
    \item DeGroot statistical information \citep{degroot1962uncertainty} of order $p$: $f(x) = \min\{p,1-p\} - \min\{p,1-px\}$ for $p\in (0,1)$,
    \begin{align}
        \mathcal{I}_p(P\|Q) = \min\{p,1-p\} - \int \min\left\{p, 1-p\tfrac{\dif P}{\dif Q}\right\}\dif Q.
    \end{align}
    \item Marton's divergence \citep{marton1996measure}: $f(x) = (x-1)^2 \indicator{[x<1]}$,
    \begin{align}
        d_2(P\|Q)^2 
        = \inf \EE{\Pr(X\neq Y \mid Y)^2}
        = \int \left(\tfrac{\dif P}{\dif Q} -1 \right)^2 \indicator{[\tfrac{\dif P}{\dif Q} < 1]} \dif Q,
    \end{align}
    where the infimum is taken over all couplings, i.e., joint distributions $P_{X,Y}$ which have marginals $P_X = P$ and $P_Y = Q$, respectively. 
\end{itemize}
We refer the readers to \citep{sason2016f,raginsky2016strong} for more examples of $f$-divergence and their properties.

\section{Proofs for Section~\ref{sec::acc_imp_split}}

\subsection{Proof of Lemma~\ref{lem::ben_split_l12_loss_equivalent}}
\label{appendix::split_acc_loss_equivalent}

\begin{proof}
We first introduce a (measurable) loss function $\ell:[0,1]\times [0,1]\to \Reals^+\cup \{\infty\}$ and assume that this loss function satisfies: (i) $\ell(a,a) = 0$ for any $a\in [0,1]$ and (ii) for any $a\in[0,1]$, $\ell(a,\cdot)$ is convex and continuous. The benefit-of-splitting in Definition~\ref{defn::ben_split_l1_2} can be written as
\begin{align}
\label{eq::bene_split_accu_based}
    \inf_{h:\mathcal{X}\to [0,1]}\max_{s\in\{0,1\}} \EE{\ell(y_s(X), h(X)) \mid S=s} - \max_{s\in\{0,1\}} \inf_{h:\mathcal{X}\to [0,1]}\EE{\ell(y_s(X), h(X)) \mid S=s}.
\end{align}
By taking the $\ell_1$ loss $\ell(a,b) = |a-b|$, $\ell_2$ loss $\ell(a,b) = (a-b)^2$, and KL loss $\ell(a,b) = \KL(a\|b) \defined a\log(a/b) + (1-a)\log((1-a)/(1-b))$, respectively, the above quantity becomes $\epsilon_{\text{split},1}$, $\epsilon_{\text{split},2}$, and $\epsilon_{\text{split},\mathsf{KL}}$. These loss functions all satisfy our above two assumptions. In particular, by our first assumption, one can choose $h(x) = y_s(x)$ which leads to 
\begin{align}
\label{eq::max_inf_loss_0}
    \max_{s\in\{0,1\}} \inf_{h:\mathcal{X}\to [0,1]}\EE{\ell(y_s(X), h(X)) \mid S=s} = 0.
\end{align}
Hence, the problem remains providing equivalent expression for the inf-max term
\begin{align}
    &\inf_{h:\mathcal{X}\to [0,1]}\max_{s\in\{0,1\}} \EE{\ell(y_s(X), h(X)) \mid S=s} \nonumber\\
    &= \inf_{h:\mathcal{X}\to [0,1]}\sup_{\omega\in[0,1]} \omega\cdot\EE{\ell(y_0(X), h(X)) \mid S=0} + (1-\omega)\cdot\EE{\ell(y_1(X), h(X)) \mid S=1}. \label{eq::inf_sup_lem_loss_bos}
\end{align}
Next, we use Ky Fan's min-max theorem~\citep{fan1953minimax} (see Lemma~\ref{lem::KyFan}) to swap the positions of infimum and supremum in \eqref{eq::inf_sup_lem_loss_bos}. We start with verifying the assumptions in Ky Fan's min-max theorem. We denote the set of all measurable functions from $\mathcal{X}$ to $[0,1]$ by $\mathcal{L}(\mathcal{X}\to [0,1])$ and introduce a function $F:[0,1]\times \mathcal{L}(\mathcal{X}\to [0,1]) \to \Reals$
\begin{align*}
    F(\omega, h)\defined \omega\cdot\EE{\ell(y_0(X), h(X)) \mid S=0} + (1-\omega)\cdot\EE{\ell(y_1(X), h(X)) \mid S=1}.
\end{align*}
For every fixed $h\in \mathcal{L}(\mathcal{X}\to [0,1])$, $F(\cdot, h)$ is a linear function. Consequently, $F(\cdot, h)$ is upper semicontinuous and $F$ is concave-like on $[0,1]$. Furthermore, for any $h_1,h_1\in \mathcal{L}(\mathcal{X}\to [0,1])$, $\lambda \in [0,1]$, and $\omega \in [0,1]$, we have $\lambda h_1 + (1-\lambda)h_2 \in \mathcal{L}(\mathcal{X}\to [0,1])$ and 
\begin{align*}
    F(\omega, \lambda h_1 + (1-\lambda)h_2) \leq \lambda F(\omega, h_1) + (1-\lambda) F(\omega, h_2)
\end{align*}
by the convexity of $\ell(a,\cdot)$ for any $a\in [0,1]$. Hence, $F$ is convex-like on $\mathcal{L}(\mathcal{X}\to [0,1])$. Therefore, by Ky Fan's min-max theorem, \eqref{eq::inf_sup_lem_loss_bos} is equal to
\begin{align}
\label{eq::sup_inf_abst_loss_linear_comb}
    \sup_{\omega\in[0,1]} \inf_{h:\mathcal{X}\to [0,1]} \omega\cdot\EE{\ell(y_0(X), h(X)) \mid S=0} + (1-\omega)\cdot\EE{\ell(y_1(X), h(X)) \mid S=1}.
\end{align}
Now we take any probability distribution $P$ over $\mathcal{X}$ such that $P_0$ and $P_1$ are absolutely continuous with respect to $P$. For example, one can simply choose $\dif P = (\dif P_0 + \dif P_1)/2$. Then \eqref{eq::sup_inf_abst_loss_linear_comb} can be written as 
\begin{align}
\label{eq::sup_inf_abst_loss_int}
    \sup_{\omega\in[0,1]} \inf_{h:\mathcal{X}\to [0,1]} \int \left(\omega \cdot \ell(y_0(x), h(x)) \tfrac{\dif P_0(x)}{\dif P(x)} + (1-\omega)\cdot \ell(y_1(x), h(x)) \tfrac{\dif P_1(x)}{\dif P(x)} \right) \dif P(x).
\end{align}
Next, we prove that the infimum and the integer in \eqref{eq::sup_inf_abst_loss_int} can be interchanged. For a fixed $\omega \in [0,1]$, we introduce a function $f:\mathcal{X}\times[0,1]\to \Reals$
\begin{align*}
    f(x,\bar{h}) 
    \defined \omega \cdot \ell(y_0(x), \bar{h}) \tfrac{\dif P_0(x)}{\dif P(x)} + (1-\omega)\cdot \ell(y_1(x), \bar{h}) \tfrac{\dif P_1(x)}{\dif P(x)}
\end{align*}
and aim at proving
\begin{align}
\label{eq::inf_int_interchange}
    \inf_{h:\mathcal{X}\to [0,1]} \int f(x,h(x)) \dif P(x)
    = \int \inf_{\bar{h}\in [0,1]} f(x,\bar{h}) \dif P(x).
\end{align}
Since $f(\cdot,\bar{h})$ is measurable and $f(x,\cdot)$ is continuous, $f$ is a Carath\'eodory function \citep[see Section 4.10 in][]{guide2006infinite}. Hence, by the measurable maximum theorem \citep[see Theorem~18.19 in][]{guide2006infinite}, the mapping
\begin{align*}
    x \to \inf_{\bar{h}\in [0,1]} f(x,\bar{h})
\end{align*}
is measurable and the argmin correspondence (i.e., set-valued function)
\begin{align*}
    \mathcal{H}^*(x) \defined \left\{\bar{h}^* \in [0,1] \mid f(x,\bar{h}^*) = \inf_{\bar{h}\in [0,1]} f(x,\bar{h}) \right\}
\end{align*}
is also measurable and admits a measurable selector. We denote this selector by $h^*:\mathcal{X}\to [0,1]$ and, by definition, it satisfies $h^*(x)\in \mathcal{H}^*(x)$ for all $x\in\mathcal{X}$. Now we are ready to prove \eqref{eq::inf_int_interchange}. One direction $\text{LHS} \geq \text{RHS}$ can be obtained directly since for any $h:\mathcal{X}\to [0,1]$ 
\begin{align*}
    \int f(x,h(x)) \dif P(x)
    \geq \int \inf_{\bar{h}\in [0,1]} f(x,\bar{h}) \dif P(x).
\end{align*}
By the definition of $h^*(x)$, 
\begin{align*}
    \text{RHS} = \int f(x,h^*(x)) \dif P(x) 
    \geq \inf_{h:\mathcal{X}\to [0,1]} \int f(x,h(x)) \dif P(x) 
    = \text{LHS}.
\end{align*}
Therefore, the equality in \eqref{eq::inf_int_interchange} holds and \eqref{eq::sup_inf_abst_loss_int} becomes
\begin{align}
\label{eq::sup_opt_class_plugin}
    \sup_{\omega\in[0,1]} \int \left(\omega \cdot \ell(y_0(x), h^*(x)) \tfrac{\dif P_0(x)}{\dif P(x)} + (1-\omega)\cdot \ell(y_1(x), h^*(x)) \tfrac{\dif P_1(x)}{\dif P(x)} \right) \dif P(x).
\end{align}
Hence, our last step is to compute the function $h^*$ for the loss functions of interest. If the loss function is $\ell_1$, then 
\begin{align*}
    \argmin_{\bar{h}\in[0,1]} f(x,\bar{h})
    = \argmin_{\bar{h}\in[0,1]} \left\{ \omega \frac{\dif P_0(x)}{\dif P(x)} \cdot |\bar{h}-y_0(x)| + (1- \omega) \frac{\dif P_1(x)}{\dif P(x)} \cdot |\bar{h}-y_1(x)| \right\}.
\end{align*}
For a fixed $\omega\in [0,1]$, the optimal classifier is
\begin{align*}
    h^*(x) = 
    \begin{cases}
        y_0(x) &\text{ if } \frac{\dif P_0(x)}{\dif P_1(x)} \geq \frac{1-\omega}{\omega} \\
        y_1(x) &\text{ otherwise}.
    \end{cases}
\end{align*}
By substituting the optimal classifier and $\ell_1$ loss into \eqref{eq::sup_opt_class_plugin}, we get the desired equivalent expression of $\epsilon_{\text{split},1}$:
\begin{align*}
    \sup_{\omega \in [0,1]} (1-\omega) \int_{\mathcal{A}_{\omega}} |y_1(x)-y_0(x)| \dif P_1(x) + \omega \int_{\mathcal{A}_{\omega}^{c}} |y_1(x)-y_0(x)|\dif P_0(x),
\end{align*}
where $\mathcal{A}_{\omega} \defined \left\{x \mid \frac{\dif P_0(x)}{\dif P_1(x)} \geq \frac{1-\omega}{\omega} \right\}$. Similarly, when $\ell_2$ loss is used, the optimal classifier becomes
\begin{align}
\label{eq::opt_classifier_l2_KL}
    h^*(x) = \frac{\omega y_0(x) \dif P_0(x) + (1-\omega) y_1(x) \dif P_1(x)}{\omega \dif P_0(x) + (1-\omega) \dif P_1(x)},
\end{align}
which leads to the equivalent expression of $\epsilon_{\text{split},2}$:
\begin{align*}
    \sup_{\omega\in[0,1]} \omega(1-\omega) \int \frac{(y_1(x)-y_0(x))^2 \dif P_0(x)\dif P_1(x)}{\omega \dif P_0(x) + (1-\omega) \dif P_1(x)}.
\end{align*}
When the KL-loss is used, the optimal classifier $h^*$ has expression in \eqref{eq::opt_classifier_l2_KL} as well. Consequently, we have the equivalent expression of $\epsilon_{\text{split},\mathsf{KL}}$:
\begin{align}
\label{eq::KL_orig_sup_omega}
    \sup_{\omega\in [0,1]} \omega \EE{\KL(y_0(X)\|h^*(X)) \mid S=0} + (1-\omega) \EE{\KL(y_1(X)\|h^*(X)) \mid S=1}.
\end{align}
This expression can be further simplified by using the chain rule of KL-divergence:
\begin{align*}
    \KL(Q_{X,Y} \| R_{X,Y})
    = \KL(Q_{Y|X} \| R_{Y|X} \mid Q_X) + \KL(Q_X \| R_X).
\end{align*}
By taking $\dif Q_{X} = \dif P_{s}$, $\dif R_X = w\dif P_0 + (1-w)\dif P_1$, $Q_{Y|X}(1|x) = y_s(x)$, and $R_{Y|X}(1|x) = h^*(x)$, we obtain
\begin{align*}
    &\EE{\KL(y_s(X)\|h^*(X)) \mid S=s}\\
    &= \KL\left(P_{X,Y|S=s} \| \omega P_{X,Y|S=0} + (1-\omega) P_{X,Y|S=1} \right) - \KL\left(P_s \| \omega P_0 + (1-\omega)P_1 \right).
\end{align*}
Substituting this into \eqref{eq::KL_orig_sup_omega} gives
\begin{align*}
    \epsilon_{\text{split},\mathsf{KL}}
    = \sup_{\omega\in[0,1]}\mathsf{JS}_{\omega}(P_{X,Y|S=0} \| P_{X,Y|S=1}) - \mathsf{JS}_{\omega}(P_{0} \| P_{1}),
\end{align*}
where $\mathsf{JS}_{\omega}(\cdot \|\cdot)$ is the Jensen-Shannon divergence (see \eqref{eq::defn_JS_general} for its definition).
\end{proof}

\subsection{Proof of Theorem~\ref{thm::ben_split_l12_loss_bounds}}
\label{sec::proof_bounds_eps_12}

We divide the proof of Theorem~\ref{thm::ben_split_l12_loss_bounds} into three independent steps. First, we prove the upper bounds for $\epsilon_{\text{split},1}$, $\epsilon_{\text{split},2}$, and $\epsilon_{\text{split},\mathsf{KL}}$ in a unified way. Then we prove the lower bounds for $\epsilon_{\text{split},1}$ and $\epsilon_{\text{split},\mathsf{KL}}$ using Lemma~\ref{lem::ben_split_l12_loss_equivalent}. Finally, we prove the lower bound for $\epsilon_{\text{split},2}$ by leveraging the proof techniques of Brown-Low's two-points lower bound \citep{brown1996constrained}.
\begin{proof}
Note that \eqref{eq::max_inf_loss_0} implies in the information-theoretic regime, optimal split classifiers can always achieve perfect performance. Specifically, one can select labeling functions $y_0$ and $y_1$ as split classifiers which have zero loss on each group. Hence, the problem remains upper bounding the performance of the optimal group-blind classifier. To achieve this goal, we consider two special group-blind classifiers:
\begin{align}
    &h^*(x) 
    = \frac{\dif P_0(x)}{2\dif P(x)} y_0(x) + \frac{\dif P_1(x)}{2\dif P(x)} y_1(x), \label{eq::h*_dP0dP}\\
    &h^{**}(x)
    = \frac{1}{2}(y_0(x) + y_1(x)), \label{eq::h**_average}
\end{align}
where $\dif P = (\dif P_0 + \dif P_1)/2$. In what follows, we upper bound the performance of the group-blind classifiers in \eqref{eq::h*_dP0dP} and \eqref{eq::h**_average} and these bounds will be naturally translated into the upper bounds of $\epsilon_{\text{split},1}$, $\epsilon_{\text{split},2}$, and $\epsilon_{\text{split},\mathsf{KL}}$, respectively.

We upper bound $\epsilon_{\text{split},1}$ by using the group-blind classifier $h^*$ in \eqref{eq::h*_dP0dP}.
\begin{align}
    \epsilon_{\text{split},1}
    =\inf_{h:\mathcal{X}\to [0,1]} \max_{s\in\{0,1\}} \EE{\lvert h(X)-y_s(X)\rvert \mid S=s}
    &\leq \max_{s\in\{0,1\}} \EE{\lvert h^*(X)-y_s(X)\rvert \mid S=s}\nonumber\\
    &= \int |y_1(x)-y_0(x)| \tfrac{\dif P_1(x)}{2 \dif P(x)} \dif P_0(x). \label{eq::inf_max_1_ub_1}
\end{align}
By the Cauchy-Schwarz inequality, we can further upper bound \eqref{eq::inf_max_1_ub_1} by 
\begin{align}
    \sqrt{\EE{(y_1(X)-y_0(X))^2 \mid S=0} \cdot \int \left(\tfrac{\dif P_1(x)}{2\dif P(x)}\right)^2 \dif P_0(x)}. \label{eq::inf_max_1_ub_2}
\end{align}
Furthermore, we have 
\begin{align}
    \int \left(\tfrac{\dif P_1(x)}{2\dif P(x)}\right)^2 \dif P_0(x)
    = \frac{1}{4}\int \left(\tfrac{\dif P_1(x)}{\dif P(x)}\right)^2 \tfrac{\dif P_0(x)}{\dif P(x)} \dif P(x)
    = \frac{1}{4} \int \left(\tfrac{\dif P_1(x)}{\dif P(x)}\right)^2 \left(2-\tfrac{\dif P_1(x)}{\dif P(x)}\right) \dif P(x). \label{eq::inf_max_tv_ub1}
\end{align}
Since $\frac{1}{4} x^2(2-x) \leq 1-|x-1|$ holds for any $x\geq 0$, the RHS of \eqref{eq::inf_max_tv_ub1} can be upper bounded by
\begin{align}
    1 - \int \left|\tfrac{\dif P_1(x)}{\dif P(x)} - 1\right| \dif P(x) = 1- \frac{1}{2} \int |\dif P_1(x) - \dif P_0(x)| = 1-\TV(P_0\|P_1). \label{eq::inf_max_tv_ub2}
\end{align}
Combining (\ref{eq::inf_max_1_ub_1}--\ref{eq::inf_max_tv_ub2}) gives 
\begin{align*}
    \epsilon_{\text{split},1}
    \leq \sqrt{\EE{(y_1(X)-y_0(X))^2 \mid S=0}}\cdot \sqrt{1-\TV(P_0\|P_1)}.
\end{align*}
By symmetry, we can further tighten the upper bound  
\begin{align*}
    \epsilon_{\text{split},1}
    \leq \min_{s\in\{0,1\}} \sqrt{\EE{(y_1(X)-y_0(X))^2\mid S=s}} \cdot \sqrt{1-\TV(P_0\| P_1)}.
\end{align*}
On the other hand, using the classifier $h^{**}$ in \eqref{eq::h**_average} leads to an alternative upper bound 
\begin{align*}
    \epsilon_{\text{split},1} 
    \leq \frac{1}{2} \max_{s\in \{0,1\}} \EE{\lvert y_1(X)-y_0(X)\rvert \mid S=s}.
\end{align*}

Similarly, we can upper bound $\epsilon_{\text{split},2}$ by using the classifier $h^*$ in \eqref{eq::h*_dP0dP}
\begin{align}
    \epsilon_{\text{split},2}
    &\leq \max_{s\in\{0,1\}} \EE{(h^*(X)-y_s(X))^2 \mid S=s}\nonumber\\
    &\leq \int (y_1(x)-y_0(x))^2 \left(\tfrac{\dif P_1(x)}{2 \dif P(x)}\right)^2 \dif P_0(x) + \int (y_1(x)-y_0(x))^2 \left(\tfrac{\dif P_0(x)}{2 \dif P(x)}\right)^2 \dif P_1(x) \nonumber\\
    &= \int (y_1(x)-y_0(x))^2 \tfrac{\dif P_1(x)}{2 \dif P(x)} \dif P_0(x), \label{eq::ub_eps_2_y01_dP1dP}
\end{align}
where the second inequality uses the fact that $\max\{a,b\}\leq a+b$. By the Cauchy-Schwarz inequality and \eqref{eq::inf_max_tv_ub1}, \eqref{eq::inf_max_tv_ub2}, we can further upper bound \eqref{eq::ub_eps_2_y01_dP1dP} by 
\begin{align*}
    \sqrt{\EE{(y_1(X)-y_0(X))^4 \mid S=0}}\cdot \sqrt{1-\TV(P_0\|P_1)}.
\end{align*}
By symmetry, we can further tighten this upper bound by replacing it with
\begin{align*}
    \min_{s\in\{0,1\}}\sqrt{\EE{(y_1(X)-y_0(X))^4 \mid S=s}}\cdot \sqrt{1-\TV(P_0\|P_1)}.
\end{align*}
On the other hand, using the classifier $h^{**}$ in \eqref{eq::h**_average} leads to an alternative upper bound for $\epsilon_{\text{split},2}$. 

We repeat the same strategy and upper bound $\epsilon_{\text{split}, \mathsf{KL}}$ by using the classifier $h^*$ in \eqref{eq::h*_dP0dP}
\begin{align*}
    \epsilon_{\text{split},\mathsf{KL}}
    &\leq  \max_{s\in\{0,1\}} \EE{\KL(y_s(X)\|h^*(X))\mid S=s}\\
    &\leq \EE{\KL(y_0(X)\|h^*(X))\mid S=0} + \EE{\KL(y_1(X)\|h^*(X))\mid S=1}.
\end{align*}
Recall the chain rule of KL-divergence
\begin{align*}
    \KL(Q_{X,Y} \| R_{X,Y})
    = \KL(Q_{Y|X} \| R_{Y|X} \mid Q_X) + \KL(Q_X \| R_X).
\end{align*}
By taking $\dif Q_{X} = \dif P_{s}$, $\dif R_X = \dif P$, $Q_{Y|X}(1|x) = y_s(x)$, and $R_{Y|X}(1|x) = h^*(x)$ and noticing the definition of $h^*$ in \eqref{eq::h*_dP0dP}, we obtain
\begin{align}
\label{eq::lb_KL_eq2}
    \EE{\KL(y_s(X)\|h^*(X)) \mid S=s}
    = \KL\left(P_{X,Y|S=s} \| \tfrac{P_{X,Y|S=0} + P_{X,Y|S=1}}{2} \right) - \KL\left(P_s \| \tfrac{P_0 + P_1}{2} \right).
\end{align}
Hence, 
\begin{align*}
    \epsilon_{\text{split},\mathsf{KL}}
    \leq 2\mathsf{JS}(P_{X,Y|S=0} \| P_{X,Y|S=1}) - 2\mathsf{JS}(P_{0} \| P_{1}),
\end{align*}
where $\mathsf{JS}(\cdot \|\cdot )$ is the Jensen-Shannon divergence. On the other hand, taking the classifier $h^{**}$ in \eqref{eq::h**_average} gives an alternative upper bound for $\epsilon_{\text{split},\mathsf{KL}}$. 
\end{proof}

We proceed to prove the lower bounds of $\epsilon_{\text{split},1}$ and $\epsilon_{\text{split},\mathsf{KL}}$.

\begin{proof}
Recall that $\mathcal{A}_{0.5} \defined \left\{x\in\mathcal{X} \mid \frac{\dif P_0(x)}{\dif P_1(x)} \geq 1 \right\}$. By Lemma~\ref{lem::ben_split_l12_loss_equivalent}, we have 
\begin{align*}
    &\epsilon_{\text{split},1}
    \geq \frac{1}{2} \left(\int_{\mathcal{A}_{0.5}} |y_1(x) - y_0(x)| \dif P_1(x) + \int_{\mathcal{A}_{0.5}^c} |y_1(x) - y_0(x)| \dif P_0(x) \right)\\
    &= \frac{1}{2} \left(\EE{\lvert y_1(X)-y_0(X)\rvert \mid S=1} - \int_{\mathcal{A}_{0.5}^c} |y_1(x) - y_0(x)| (\dif P_1(x)-\dif P_0(x)) \right)\\
    &= \frac{1}{2} \left(\EE{\lvert y_1(X)-y_0(X)\rvert \mid S=1} - \int |y_1(x) - y_0(x)| \left(1 - \tfrac{\dif P_0(x)}{\dif P_1(x)}\right)_{+} \dif P_1(x) \right)\\
    &\geq \frac{1}{2} \left(\EE{\lvert y_1(X)-y_0(X)\rvert \mid S=1} - \sqrt{\int (y_1(x) - y_0(x))^2 \dif P_1(x) \int \left(1 - \tfrac{\dif P_0(x)}{\dif P_1(x)}\right)^2 \indicator{\left[\tfrac{\dif P_0(x)}{\dif P_1(x)}\leq 1\right]} \dif P_1(x)} \right)\\
    &= \frac{1}{2} \left(\EE{\lvert y_1(X)-y_0(X)\rvert \mid S=1} - \sqrt{\EE{(y_1(X)-y_0(X))^2 \mid S=1}} \cdot d_2(P_0\|P_1) \right),
\end{align*}
where $d_2(P_0\|P_1)$ is Marton's divergence. By symmetry, one can obtain 
\begin{equation}
\label{eq::ben_ell1_lb_infmax}
\begin{aligned}
    \epsilon_{\text{split},1}
    &\geq \frac{1}{2} \max_{s\in \{0,1\}} \left(\EE{\lvert y_1(X)-y_0(X)\rvert \mid S=s} - \sqrt{\EE{(y_1(X)-y_0(X))^2 \mid S=s}} \cdot d_2(P_{1-s}\|P_s)\right).
\end{aligned}
\end{equation}
Finally, the lower bound of $\epsilon_{\text{split},\mathsf{KL}}$ follows directly from Lemma~\ref{lem::ben_split_l12_loss_equivalent}.
\end{proof}

Before getting to the lower bound of $\epsilon_{\text{split},2}$, we prove a useful lemma. Here we denote 
\begin{subequations}
\label{eq::def_A_and_B_L2_BOS}
\begin{align}
    A_s &\defined \EE{\lvert y_1(X) - y_0(X)\rvert \mid S=s} \quad \text{for } s\in \{0,1\},\\
    B &\defined \sqrt{\chidiv(P_{1} \| P_{0}) + 1}.
\end{align}
\end{subequations}
\begin{lem}
\label{lem::l2loss_impos_da}
Assume that $A_0 \leq A_1$. For any measurable classifier $h: \mathcal{X}\to [0,1]$ and constant $0 \leq \epsilon < A_0^2/B^2$, if $\EE{(h(X)-y_0(X))^2 \mid S=0}\leq \epsilon$, then $\EE{(h(X)-y_1(X))^2 \mid S=1} \geq (A_1 - B \sqrt{\epsilon})^2$.
\end{lem}
\begin{proof}
Consider a convex optimization problem
\begin{align*}
    \min_{h:\mathcal{X}\to [0,1]}~& \int (h(x)-y_1(x))^2 \dif P_{1}(x),\\
    \sto~& \int (h(x)-y_0(x))^2 \dif P_{0}(x) \leq \epsilon.
\end{align*}
Computing the Gateaux derivative of the Lagrange multiplier gives the following optimal conditions \citep[][Theorem~6.6.1]{kurdila2006convex},
\begin{align}
    (h(x)-y_1(x)) \dif P_{1}(x) + \lambda (h(x)-y_0(x)) \dif P_{0}(x) &= 0,\\
    \lambda \left(\int (h(x)-y_0(x))^2 \dif P_{0}(x) - \epsilon \right) &= 0, \label{eq::comp_slack_soft_class}\\
    \lambda &\geq 0,
\end{align}
which provides the optimal classifier 
\begin{align*}
    h^*(x) = \frac{y_1(x) \dif P_{1}(x) + \lambda y_0(x) \dif P_{0}(x)}{\dif P_{1}(x) + \lambda \dif P_{0}(x)}.
\end{align*}
We denote $r(x) \defined \frac{\dif P_{1}(x)}{\dif P_{0}(x)}$ and simplify the expression of the optimal classifier
\begin{align}
    h^*(x) = \frac{y_1(x)r(x) + \lambda y_0(x)}{r(x) + \lambda}.\label{eq::opt_class_soft_class}
\end{align}
If $\lambda = 0$, then $h^*(x)=y_1(x)$ and, consequently,
\begin{align*}
    \EE{(h^*(X)-y_0(X))^2 \mid S=0} = \EE{(y_1(X)-y_0(X))^2 \mid S=0}.
\end{align*}
However, this contradicts our assumptions $\EE{(h^*(X)-y_0(X))^2\mid S=0} \leq \epsilon$ and 
\begin{align*}
    \epsilon 
    < \frac{\EE{\lvert y_1(X) - y_0(X)\rvert \mid S=0}^2}{\chidiv(P_{1} \| P_{0}) + 1} 
    \leq \EE{(y_1(X) - y_0(X))^2 \mid S=0}.
\end{align*}
Hence, we have $\lambda > 0$. In this case,  \eqref{eq::comp_slack_soft_class} and \eqref{eq::opt_class_soft_class} imply
\begin{align*}
    \int \left(\frac{y_1(x)r(x) + \lambda y_0(x)}{r(x) + \lambda} - y_0(x)\right)^2 \dif P_0(x) = \epsilon.
\end{align*}
We simplify the expression and obtain 
\begin{align}
\label{eq::proof_eps_r_ydif}
    \int r(x)^2 \left(\frac{y_1(x) - y_0(x)}{r(x) + \lambda}\right)^2 \dif P_0(x) = \epsilon.
\end{align}
Now we consider lower bounding $\EE{(h^*(X)-y_1(X))^2\mid S=1}$. By its definition and the expression of the optimal classifier~\eqref{eq::opt_class_soft_class}, we have 
\begin{align}
    \EE{(h^*(X)-y_1(X))^2\mid S=1} 
    &= \int \left(\frac{y_1(x)r(x) + \lambda y_0(x)}{r(x) + \lambda}-y_1(x)\right)^2 \dif P_{1}(x)\nonumber\\
    &= \int \left(\frac{\lambda (y_1(x) - y_0(x))}{r(x) + \lambda}\right)^2 \dif P_{1}(x) \nonumber\\
    &\geq \left(\int \frac{\lambda |y_1(x) - y_0(x)|}{r(x) + \lambda} \dif P_{1}(x) \right)^2\nonumber\\
    &= \left(\int |y_1(x) - y_0(x)| \dif P_{1}(x) - \int \frac{r(x) |y_1(x) - y_0(x)|}{r(x) + \lambda} \dif P_{1}(x) \right)^2\nonumber\\
    &= \left(\EE{\lvert y_1(X) - y_0(X)\rvert \mid S=1} - \int \frac{r(x) |y_1(x) - y_0(x)|}{r(x) + \lambda} \dif P_{1}(x) \right)^2,  \label{eq::proof_L1_low_bound_1}
\end{align}
where the only inequality is due to the Cauchy-Schwarz inequality. Furthermore, by the Cauchy-Schwarz inequality again and \eqref{eq::proof_eps_r_ydif}, we have 
\begin{align}
   \int \frac{r(x) |y_1(x) - y_0(x)|}{r(x) + \lambda} \dif P_{1}(x)
   &\leq \sqrt{\int r(x)^2\left(\frac{y_1(x) - y_0(x)}{r(x) + \lambda}\right)^2 \dif P_{0}(x) \int r(x) \dif P_{1}(x) } \nonumber \\
   &= \sqrt{\epsilon \EE{r(X) \mid S=1}}. \label{eq::proof_eps_CS_ineq_1}
\end{align}
Recall that $r(x) = \frac{\dif P_{1}(x)}{\dif P_{0}(x)}$. Hence,
\begin{align}
    \EE{r(X) \mid S=1} 
    = \int \frac{\dif P_{1}(x)}{\dif P_{0}(x)} \dif P_{1}(x)
    = \int \left[\left(\frac{\dif P_{1}(x)}{\dif P_{0}(x)}\right)^2-1\right] \dif P_{0}(x) + 1
    = \chidiv(P_{1} \| P_{0}) + 1. \label{eq::ratio_chi_div}
\end{align}
By our assumptions, 
\begin{align*}
    &\EE{\lvert y_1(X) - y_0(X)\rvert \mid S=1} - \sqrt{\epsilon (\chidiv(P_{1} \| P_{0}) + 1)}\\
    &\geq \EE{\lvert y_1(X) - y_0(X)\rvert \mid S=1} - \EE{\lvert y_1(X) - y_0(X)\rvert \mid S=0}
    \geq 0.
\end{align*}
Combining \eqref{eq::proof_L1_low_bound_1}, \eqref{eq::proof_eps_CS_ineq_1}, and \eqref{eq::ratio_chi_div} together, we conclude that 
\begin{align*}
    \EE{(h^*(X)-y_1(X))^2\mid S=1}
    &\geq (\EE{\lvert y_1(X) - y_0(X)\rvert \mid S=1} - \sqrt{\epsilon \EE{r(X) \mid S=1}})^2 \\
    &= \left(\EE{\lvert y_1(X) - y_0(X)\rvert \mid S=1} - \sqrt{\epsilon \left(\chidiv(P_{1} \| P_{0} \right) + 1)} \right)^2.
\end{align*}
\end{proof}

Now we are in a position to prove the lower bound for $\epsilon_{\text{split},2}$.
\begin{proof}
By Lemma~\ref{lem::l2loss_impos_da}, for any classifier $h: \mathcal{X}\to [0,1]$ and 
\begin{align*}
    0 \leq \epsilon < \frac{A_0^2}{B^2},
\end{align*}
if $\EE{(h(X)-y_0(X))^2 \mid S=0}\leq \epsilon$, then $\EE{(h(X)-y_1(X))^2 \mid S=1} \geq (A_1 - B \sqrt{\epsilon})^2$, where $A_0$, $A_1$, and $B$ are defined in \eqref{eq::def_A_and_B_L2_BOS}.
Now we take $\epsilon = (A_0/(B+1))^2$, which naturally satisfies the assumptions in Lemma~\ref{lem::l2loss_impos_da}. As a result, if
\begin{align*}
    \EE{(h(X)-y_0(X))^2\mid S=0} \leq \left(\frac{A_0}{B+1}\right)^2,
\end{align*}
then 
\begin{align*}
    \EE{(h(X)-y_1(X))^2 \mid S=1}
    \geq \left(A_1 - B \frac{A_0}{B+1}\right)^2
    \geq \left(\frac{A_0}{B+1}\right)^2
\end{align*}
where the second inequality is because of the assumption $A_1 \geq A_0$. Consequently, for any $h:\mathcal{X}\to [0,1]$, 
\begin{align*}
    \max_{s\in \{0,1\}} \EE{(h(X)-y_s(X))^2\mid S=s}
    \geq \left(\frac{A_0}{B+1}\right)^2
    = \left(\frac{\EE{\lvert y_1(X) - y_0(X)\rvert \mid S=0}}{\sqrt{\chidiv(P_{1} \| P_{0}) + 1} + 1}\right)^2.
\end{align*}
\end{proof}

\subsection{Extension to Cross Entropy Loss}
\label{appendix::cross_ent}

As discussed in Remark~\ref{rem::cross_ent_split}, one can define the benefit-of-splitting under the cross entropy loss. We provide upper and lower bounds of the corresponding benefit-of-splitting as an extension of Theorem~\ref{thm::ben_split_l12_loss_bounds}. 
\begin{prop}
The cross-entropy-benefit-of-splitting can be upper and lower bounded
\begin{align*}
    &\epsilon_{\text{split},\mathsf{H}}
    \leq \min\left\{2\mathsf{JS}(P_{X,Y|S=0} \| P_{X,Y|S=1}) - 2\mathsf{JS}(P_{0} \| P_{1}),\  \max_{s\in \{0,1\}} \EE{\KL\left(y_s(X) \| \tfrac{y_0(X) + y_1(X)}{2}\right) \mid S=s} \right\},\\
    &\epsilon_{\text{split},\mathsf{H}}
    \geq \mathsf{JS}(P_{X,Y|S=0} \| P_{X,Y|S=1}) - \mathsf{JS}(P_{0} \| P_{1}) - \frac{1}{2} \left|\EE{\mathsf{H}(y_0(X)) \mid S=0} - \EE{\mathsf{H}(y_1(X)) \mid S=1}\right|.
\end{align*}
\end{prop}

\begin{proof}
Recall that $\mathsf{H}(p,q) = \KL(p\|q) + \mathsf{H}(p)$. Hence,
\begin{align*}
    \EE{\mathsf{H}(y_s(X), h(X)) \mid S=s}
    = \EE{\KL(y_s(X)\| h(X)) \mid S=s} + \EE{H(y_s(X))\mid S=s},
\end{align*}
which leads to 
\begin{align}
\label{eq::cross_ent_proof_eq1}
    \max_{s\in \{0,1\}}\inf_{h:\mathcal{X}\to [0,1]} \EE{\mathsf{H}(y_s(X), h(X)) \mid S=s}
    = \max_{s\in \{0,1\}}\EE{H(y_s(X))\mid S=s}.
\end{align}
Since $\max\{a_i+b_i\}\leq \max\{a_i\} + \max\{b_i\}$,
\begin{align}
    &\inf_{h: \mathcal{X}\to [0,1]} \max_{s\in \{0,1\}} \EE{\mathsf{H}(y_s(X), h(X)) \mid S=s} \nonumber\\
    &\leq \inf_{h: \mathcal{X}\to [0,1]} \max_{s\in \{0,1\}} \EE{\mathsf{KL}(y_s(X)\| h(X)) \mid S=s} + \max_{s\in \{0,1\}} \EE{\mathsf{H}(y_s(X)) \mid S=s}. \label{eq::cross_ent_proof_eq2}
\end{align}
Combining \eqref{eq::cross_ent_proof_eq1} and \eqref{eq::cross_ent_proof_eq2} implies $\epsilon_{\text{split},\mathsf{H}} \leq \epsilon_{\text{split},\mathsf{KL}}$. Therefore, the upper bound of $\epsilon_{\text{split},\mathsf{KL}}$ in Theorem~\ref{thm::ben_split_l12_loss_bounds} is an upper bound of $\epsilon_{\text{split},\mathsf{H}}$ as well. Now we proceed to prove the lower bound.
\begin{align}
    &\inf_{h: \mathcal{X}\to [0,1]} \max_{s\in \{0,1\}} \EE{\mathsf{H}(y_s(X), h(X)) \mid S=s}\nonumber \\
    &\geq \frac{1}{2} \inf_{h: \mathcal{X}\to [0,1]} \left(\EE{\mathsf{H}(y_0(X), h(X)) \mid S=0} + \EE{\mathsf{H}(y_1(X), h(X)) \mid S=1} \right) \nonumber\\
    \begin{split}
    \label{eq::cross_ent_proof_eq3}
    &= \frac{1}{2} \inf_{h: \mathcal{X}\to [0,1]} \left(\EE{\KL(y_0(X)\| h(X)) \mid S=0} + \EE{\KL(y_1(X)\| h(X)) \mid S=1} \right) \\
    &\quad + \frac{1}{2} \left(\EE{\mathsf{H}(y_0(X)) \mid S=0} + \EE{\mathsf{H}(y_1(X)) \mid S=1} \right).
    \end{split}
\end{align}
By the proof of Lemma~\ref{lem::ben_split_l12_loss_equivalent}, we have 
\begin{align}
    &\frac{1}{2} \inf_{h: \mathcal{X}\to [0,1]} \left(\EE{\KL(y_0(X)\| h(X)) \mid S=0} + \EE{\KL(y_1(X)\| h(X)) \mid S=1} \right)\nonumber\\
    &= \mathsf{JS}(P_{X,Y|S=0} \| P_{X,Y|S=1}) - \mathsf{JS}(P_{0} \| P_{1}). \label{eq::cross_ent_proof_eq4}
\end{align}
Combining \eqref{eq::cross_ent_proof_eq1}, \eqref{eq::cross_ent_proof_eq3}, \eqref{eq::cross_ent_proof_eq4} gives
\begin{align*}
    \epsilon_{\text{split},\mathsf{H}} 
    &\geq \mathsf{JS}(P_{X,Y|S=0} \| P_{X,Y|S=1}) - \mathsf{JS}(P_{0} \| P_{1}) \\
    &\quad + \frac{1}{2}\left(\EE{\mathsf{H}(y_0(X)) \mid S=0} + \EE{\mathsf{H}(y_1(X)) \mid S=1} \right) - \max_{s\in\{0,1\}} \EE{\mathsf{H}(y_s(X)) \mid S=s}\\
    &= \mathsf{JS}(P_{X,Y|S=0} \| P_{X,Y|S=1}) - \mathsf{JS}(P_{0} \| P_{1}) - \frac{1}{2} \left|\EE{\mathsf{H}(y_0(X)) \mid S=0} - \EE{\mathsf{H}(y_1(X)) \mid S=1}\right|.
\end{align*}
\end{proof}

\subsection{Proof of Proposition~\ref{prop::bounds_popu_ben_split}}
\label{append::pop_ben_split}

\begin{proof}
First, note that $\inf_{\substack{h_s:\mathcal{X}\to [0,1]\\ \text{for } s\in\{0,1\}}} \EE{|h_S(X)-y_S(X)|} = 0$ as one can choose $h_s(x) = y_s(x)$. Hence, our focus is on upper and lower bounding
\begin{align}
    \epsilon_{\text{split},\text{pop}}
    &= \inf_{h:\mathcal{X}\to [0,1]} \EE{|h(X)-y_S(X)|} \nonumber\\
    &= \inf_{h:\mathcal{X}\to [0,1]} \Pr(S=0) \int |h(x)-y_0(x)| \dif P_0(x) + \Pr(S=1) \int |h(x)-y_1(x)| \dif P_1(x). \label{eq::inf_popu_eq_1}
\end{align}
By the proof of Lemma~\ref{lem::ben_split_l12_loss_equivalent}, the optimal classifier of \eqref{eq::inf_popu_eq_1} is 
\begin{align*}
    h^*(x) = 
    \begin{cases}
        y_0(x) &\text{ if } \frac{\Pr(S=0)\cdot \dif P_0(x)}{\Pr(S=1)\cdot \dif P_1(x)} \geq 1\\
        y_1(x) &\text{ otherwise}.
    \end{cases}
\end{align*}
By plugging the optimal classifier into \eqref{eq::inf_popu_eq_1}, we can write $\epsilon_{\text{split},\text{pop}}$ equivalently as
\begin{align*}
    \Pr(S=0)\cdot \EE{\lvert y_1(X) - y_0(X)\rvert \mid S=0} - \int |y_1(x)-y_0(x)| \left(\Pr(S=0)\cdot \dif P_0(x) - \Pr(S=1)\cdot \dif P_1(x)\right)_{+}.
\end{align*}
The desired upper bound can be obtained by dropping the negative term. Now we proceed to prove the lower bound. Since
\begin{align*}
    &\int |y_1(x)-y_0(x)| \left(\Pr(S=0)\cdot \dif P_0(x) - \Pr(S=1)\cdot \dif P_1(x)\right)_{+}\\
    &\leq \Pr(S=0) \int \left(\tfrac{\dif P_0(x)}{\dif P_1(x)} - \tfrac{\Pr(S=1)}{\Pr(S=0)} \right)_{+} \dif P_1(x)\\
    &= \Pr(S=0)\cdot E_{\frac{\Pr(S=1)}{\Pr(S=0)}}(P_0\|P_1),
\end{align*}
where $E_{\frac{\Pr(S=1)}{\Pr(S=0)}}(P_0\|P_1)$ is the $E_{\gamma}$-divergence with $\gamma = \Pr(S=1)/\Pr(S=0)$, we have
\begin{align*}
    \epsilon_{\text{split},\text{pop}}
    \geq \Pr(S=0) \left(\EE{\lvert y_1(X) - y_0(X)\rvert \mid S=0} - E_{\frac{\Pr(S=1)}{\Pr(S=0)}}(P_0\|P_1)\right).
\end{align*}
\end{proof}

\section{Proofs for Section~\ref{sec::perf_imp_split}}

\subsection{Proof of Theorem~\ref{thm::FER_split_dual}}
\label{append::FER_split}

Recall that the false error rate is the maximum between false positive rate and false negative rate
\begin{align}
    \FER{s}{h} \defined \max\left\{\EE{h(X)\mid Y=0, S=s},\ \EE{1-h(X)\mid Y=1, S=s}\right\}.
\end{align}
We prove the following lemma which will be used in the proof of Theorem~\ref{thm::FER_split_dual}.
\begin{lem}
\label{lem::FER_equivalent_exp}
The false error rate has the following equivalent expressions
\begin{align*}
    \FER{s}{h}
    &= \max\left\{\frac{\EE{h(X)(1-y_s(X))\mid S=s}}{\Pr(Y=0 \mid S=s)}, 1- \frac{\EE{h(X)y_s(X)\mid S=s}}{\Pr(Y=1\mid S=s)} \right\}\\
    &= \max\left\{\frac{\EE{h(X)(1-y_s(X))f_s(X)}}{\Pr(Y=0 \mid S=s)}, 1- \frac{\EE{h(X)y_s(X)f_s(X)}}{\Pr(Y=1\mid S=s)} \right\}.
\end{align*}
where $f_s(x) \defined \frac{\Pr(S=s \mid X=x)}{\Pr(S=s)}$.
\end{lem}
\begin{proof}
The proof follows directly from Bayes's rule,
\begin{align*}
    \dif P_{X|Y=0,S=s} 
    = \frac{1-y_s(x)}{\Pr(Y=0\mid S=s)} \dif P_{X|S=s}
    = \frac{1-y_s(x)}{\Pr(Y=0\mid S=s)}\cdot f_s(x) \dif P_{X}.
\end{align*}
\end{proof}

Now we are in a position to prove Theorem~\ref{thm::FER_split_dual}.

\begin{proof}
By Lemma~\ref{lem::FER_equivalent_exp}, the quantity $\inf_{h: \mathcal{X}\to [0,1]}\max_{s\in\{0,1\}} \FER{s}{h}$ can be equivalently written as 
\begin{align}
    \inf_{h: \mathcal{X}\to [0,1]}\max_{s\in\{0,1\}} \left\{\frac{\EE{h(X)(1-y_s(X))f_s(X)}}{\Pr(Y=0\mid S=s)}, 1- \frac{\EE{h(X)y_s(X)f_s(X)}}{\Pr(Y=1\mid S=s)} \right\} 
    =\inf_{h:\mathcal{X}\to [0,1]}
    \max_{\bmu \in \Delta_4}
    G(\bmu,h), \label{eq::FER_infmax_infsup_prior}
\end{align}
where $\bmu \defined (\mu_{0,0}, \mu_{0,1},\mu_{1,0},\mu_{1,1})$ and 
\begin{align*}
    G(\bmu,h)
    &\defined \sum_{s\in \{0,1\}} \left(\mu_{s,0}\frac{\EE{h(X)(1-y_s(X))f_s(X)}}{\Pr(Y=0 \mid S=s)} + \mu_{s,1}\left(1- \frac{\EE{h(X)y_s(X)f_s(X)}}{\Pr(Y=1\mid S=s)}\right) \right)\\
    &= \sum_{s\in \{0,1\}}\mu_{s,1} + \EE{\sum_{s\in \{0,1\}} \left(\frac{\mu_{s,0}(1-y_s(X))f_s(X)}{\Pr(Y=0 \mid S=s)} - \frac{\mu_{s,1}y_s(X)f_s(X)}{\Pr(Y=1\mid S=s)} \right)h(X)}.
\end{align*}
By denoting 
\begin{align*}
    \phi_{s,0}(x) 
    \defined \frac{(1-y_s(x))f_s(x)}{\Pr(Y=0 \mid S=s)}\quad
    \phi_{s,1}(x)
    \defined \frac{-y_s(x)f_s(x)}{\Pr(Y=1\mid S=s)},
\end{align*}
we can write
\begin{align*}
    G(\bmu,h)
    = \sum_{s\in \{0,1\}}\mu_{s,1} + \EE{\sum_{s,i\in \{0,1\}} \mu_{s,i}\phi_{s,i}(X) h(X)}.
\end{align*}
We next use Ky Fan's min-max theorem~\citep{fan1953minimax} (see Lemma~\ref{lem::KyFan}) to swap the positions of infimum and maximum. First, $\Delta_4$ is a compact set and for any $h:\mathcal{X}\to [0,1]$, $G(\cdot,h)$ is continuous on $\Delta_4$. Furthermore, for any $h:\mathcal{X}\to [0,1]$, $G(\cdot,h)$ is linear over $\Delta_4$; for any $\bmu \in \Delta_4$, $G(\bmu,\cdot)$ is convex-like over all (measurable) classifiers from $\mathcal{X}$ to $[0,1]$. Hence, we have 
\begin{align}
\label{eq::FER_exchange_inf_sup_Lag}
    \inf_{h:\mathcal{X}\to [0,1]}
    \max_{\bmu \in \Delta_4}
    G(\bmu,h) 
    = \max_{\bmu \in \Delta_4} \inf_{h:\mathcal{X}\to [0,1]}
    G(\bmu,h).
\end{align}
Next, we prove that, for any fixed $\bmu\in \Delta_4$,
\begin{align}
\label{eq::inf_Exp_ex_FER}
    \inf_{h:\mathcal{X}\to [0,1]} \EE{\sum_{s,i\in \{0,1\}} \mu_{s,i}\phi_{s,i}(X) h(X)}
    = \EE{\inf_{h:\mathcal{X}\to [0,1]} \sum_{s,i\in \{0,1\}} \mu_{s,i}\phi_{s,i}(X) h(X)}.
\end{align}
One direction $\text{LHS} \geq \text{RHS}$ can be obtained directly since for any $h:\mathcal{X}\to [0,1]$
\begin{align*}
    \EE{\sum_{s,i\in \{0,1\}} \mu_{s,i}\phi_{s,i}(X) h(X)}
    \geq \EE{\inf_{h:\mathcal{X}\to [0,1]} \sum_{s,i\in \{0,1\}} \mu_{s,i}\phi_{s,i}(X) h(X)}.
\end{align*}
Note that the infimum in the RHS of \eqref{eq::inf_Exp_ex_FER} is point-wise. For any fixed $x\in \mathcal{X}$, the following optimization problem
\begin{align*}
    \inf_{\bar{h}\in [0,1]} \sum_{s,i\in \{0,1\}} \mu_{s,i}\phi_{s,i}(x) \bar{h}
\end{align*}
has an optimal solution $\bar{h}^* = \indicator{[\sum_{s,i\in \{0,1\}} \mu_{s,i}\phi_{s,i}(x) \leq 0]}$. Hence, there is a measurable classifier which can achieve the point-wise infimum inside the expectation of the RHS in \eqref{eq::inf_Exp_ex_FER}: $h^*(x) = \indicator{[\sum_{s,i\in \{0,1\}} \mu_{s,i}\phi_{s,i}(x) \leq 0]}$. Consequently, the RHS of \eqref{eq::inf_Exp_ex_FER} can be simplified as 
\begin{align}
\label{eq::FER_RHS_exp_inf}
    \text{RHS} = \EE{\left(\sum_{s,i\in \{0,1\}} \mu_{s,i}\phi_{s,i}(X)\right)_{-}},
\end{align}
where for $a\in \Reals$, $(a)_{-}\defined \min\{a, 0\}$. Since the LHS of \eqref{eq::inf_Exp_ex_FER} is an infimum over all measurable classifiers, using the classifier $h^*$ leads to 
\begin{align*}
    \text{LHS} 
    \leq \EE{\sum_{s,i\in \{0,1\}} \mu_{s,i}\phi_{s,i}(X) h^*(X)}
    = \EE{\left(\sum_{s,i\in \{0,1\}} \mu_{s,i}\phi_{s,i}(X)\right)_{-}}
    =\text{RHS}.
\end{align*}
Combining (\ref{eq::FER_infmax_infsup_prior}--\ref{eq::FER_RHS_exp_inf}) together implies
\begin{align*}
    \inf_{h: \mathcal{X}\to [0,1]}\max_{s\in\{0,1\}} \FER{s}{h} 
    = \max_{\bmu \in \Delta_4} \left\{\sum_{s\in \{0,1\}}\mu_{s,1} + \EE{\left(\sum_{s,i\in \{0,1\}} \mu_{s,i}\phi_{s,i}(X)\right)_{-}} \right\}.
\end{align*}
Similarly, one can prove that
\begin{align*}
    \max_{s\in\{0,1\}} \inf_{h: \mathcal{X}\to [0,1]} \FER{s}{h} 
    = \max_{s\in\{0,1\}} \max_{\bnu^{(s)} \in \Delta_2} \left\{\nu^{(s)}_{1} + \EE{\left(\sum_{i\in \{0,1\}} \nu^{(s)}_{i}\phi_{s,i}(X)\right)_{-}} \right\}.
\end{align*}
\end{proof}

\subsection{Proof of Proposition~\ref{prop::part_deri_FER}}
\label{append::deri_FER}

We start with a useful lemma which will be used in the proof of Proposition~\ref{prop::part_deri_FER}.
\begin{lem}
\label{lem::interchange_supgrad_exp}
Let $f: \mathcal{X}\times \Reals^k \to \Reals$ be a bounded measurable function. For a fixed $x\in\mathcal{X}$, if $v(x, w_0) \in \Reals^k$ is a supergradient of $f(x,\cdot)$ at $w_0$:
\begin{align}
\label{eq::f_supgradient}
    f(x,w) - f(x,w_0) \leq v(x, w_0)^T (w-w_0),
\end{align}
then $\EE{v(X,w_0)}$ is a supergradient of $\EE{f(X,\cdot)}$ at $w_0$:
\begin{align}
    \EE{f(X,w)} - \EE{f(X,w_0)} \leq \EE{v(X, w_0)}^T (w-w_0).
\end{align}
\end{lem}
The proof of Lemma~\ref{lem::interchange_supgrad_exp} follows directly by taking expectation on both sides of \eqref{eq::f_supgradient}. We refer the readers to \citep{rockafellar1982interchange} for a more general result on the interchangeability of subdifferentiation and (conditional) expectation. Now we are in a position to prove Proposition~\ref{prop::part_deri_FER}.

\begin{proof}
Consider a function $g(x,\bmu) \defined \sum_{s\in\{0,1\}}\mu_{s,1} + \left(\sum_{s,i\in \{0,1\}} \mu_{s,i}\phi_{s,i}(x)\right)_{-}$. For a fixed $x$, $g(x,\cdot)$ has a supergradient at $\bmu \defined (\mu_{0,0}, \mu_{0,1},\mu_{1,0},\mu_{1,1})$: 
\begin{align*}
    \left(i+\phi_{s,i}(x)\indicator{\Big[\sum_{s',i'\in\{0,1\}} \mu_{s',i'}\phi_{s',i'}(x) <0 \Big]}\right)_{s,i\in\{0,1\}}.
\end{align*}
Therefore, by Lemma~\ref{lem::interchange_supgrad_exp}, $g$ has a supergradient at $\bmu$: 
\begin{align*}
    &\partial g(\bmu) \ni \left(i + \EE{\phi_{s,i}(X) \cdot \indicator{\Big[\sum_{s',i'\in \{0,1\}} \mu_{s',i'}\phi_{s',i'}(X) < 0\Big]}}\right)_{s,i\in \{0,1\}}.
\end{align*}
Now we introduce auxiliary functions
\begin{align*}
    \psi_{s,i}(x) \defined \frac{1-i-y_s(x)}{\Pr(Y=i\mid S=s)},\quad s,i\in \{0,1\}.
\end{align*}
By Bayes's rule and the definition of $\phi_{s,i}$ (see \eqref{eq::phi_s_i_defn}), we have 
\begin{align*}
    \psi_{s,i}(x)\cdot \dif P_{X|S=s}(x) = \phi_{s,i}(x)\cdot \dif P_{X}(x).
\end{align*}
Hence, 
\begin{align*}
    &\partial g(\bmu) \ni \left(i + \EE{\psi_{s,i}(X) \cdot \indicator{\Big[\sum_{s',i'\in \{0,1\}} \mu_{s',i'}\phi_{s',i'}(X) < 0 \Big]}\ \Bigg|\  S=s}\right)_{s,i\in \{0,1\}}.
\end{align*}
Similarly, one can obtain a closed-form supergradient of $g_s(\bnu)$.
\end{proof}

\section{Proofs for Section~\ref{sec::split_practice}}

\subsection{Proof of Theorem~\ref{thm::H_dep_split_bounds}}
\label{append::H_dep_split_bounds}

We first recall a useful lemma which can be proved by the variational representation \citep{nguyen2010estimating} of total variation distance. 
\begin{lem}
\label{lem::Exp_TV}
For any measurable and non-negative function $f:\mathcal{X}\to \Reals^+$,
\begin{align*}
    |\EE{f(X_0)} - \EE{f(X_1)}| \leq \|f\|_{\infty} \TV(P_{0}\| P_{1}),
\end{align*}
where $X_0\sim P_{0}$ and $X_1\sim P_{1}$.
\end{lem}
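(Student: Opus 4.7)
\textbf{Proof proposal for Lemma~\ref{lem::Exp_TV}.}

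The plan is to reduce the inequality to the definitional supremum over measurable sets by using the layer-cake representation of a non-negative function. By positive homogeneity of both sides, I may normalize and assume without loss of generality that $c \defined \|f\|_\infty < \infty$; otherwise the bound is vacuous. The key identity I will invoke is
\begin{equation*}
\EE{f(X_s)} = \int_0^\infty P_s(\{x \in \mathcal{X} : f(x) > t\})\, \dif t, \qquad s\in\{0,1\},
\end{equation*}
which holds for any non-negative measurable $f$ and which follows from Fubini's theorem applied to $\indicator{[f(x) > t]}$.

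Subtracting the two expressions and noting that the integrand vanishes for $t \geq c$, I obtain
\begin{equation*}
\EE{f(X_0)} - \EE{f(X_1)} = \int_0^c \bigl( P_0(\mathcal{E}_t) - P_1(\mathcal{E}_t) \bigr)\, \dif t,
\end{equation*}
where $\mathcal{E}_t \defined \{x \in \mathcal{X} : f(x) > t\}$ is a measurable subset of $\mathcal{X}$. Taking absolute values and applying the triangle inequality inside the integral, then using the definition~\eqref{eq::f-divergence} of total variation distance to bound $|P_0(\mathcal{E}_t) - P_1(\mathcal{E}_t)| \leq \TV(P_0, P_1)$ uniformly in $t$, yields
\begin{equation*}
|\EE{f(X_0)} - \EE{f(X_1)}| \leq \int_0^c \TV(P_0, P_1)\, \dif t = c \cdot \TV(P_0, P_1),
\end{equation*}
which is the desired inequality.

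There is no substantial obstacle here: the argument is a textbook application of the layer-cake formula combined with the variational description of total variation distance. The only point to handle with a small bit of care is the normalization step and making sure the layer-cake representation is applied to a non-negative function (which is given by hypothesis). An alternative route would be to write $f = c \cdot g$ with $0 \leq g \leq 1$ and directly invoke the identity $\TV(P_0,P_1) = \sup_{0 \leq g \leq 1}|\EE{g(X_0)} - \EE{g(X_1)}|$, but the layer-cake proof is self-contained and uses only the definition of $\TV$ already stated in the paper.
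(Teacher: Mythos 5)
Your proof is correct, but it takes a genuinely different route from the paper's. The paper's proof is a Hahn-decomposition argument: it defines $\mathcal{A} = \{x : \dif P_0(x) - \dif P_1(x) \geq 0\}$, splits the difference of expectations into an integral over $\mathcal{A}$ minus one over $\mathcal{A}^c$, bounds each by pulling out $\|f\|_\infty$, and identifies the remaining mass differences with $\TV(P_0,P_1)$. This implicitly relies on the fact that the supremum in the definition of $\TV$ is attained on the Hahn set (equivalently, on an identity such as $\TV(P_0,P_1) = \int_{\mathcal{A}}(\dif P_0 - \dif P_1)$), which the paper leaves unstated. Your layer-cake argument instead represents $\EE{f(X_s)} = \int_0^\infty P_s(\mathcal{E}_t)\,\dif t$ with $\mathcal{E}_t = \{f > t\}$, swaps the difference inside the integral, and bounds each integrand pointwise by $\TV(P_0,P_1)$ directly from the definitional supremum~\eqref{eq::f-divergence} — no Hahn set and no density/Radon--Nikodym notation needed. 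This is arguably cleaner and more self-contained given how the paper defined $\TV$, at the cost of invoking the Fubini/layer-cake identity. One small wording nit: you announce a normalization ``by positive homogeneity'' but never actually rescale $f$; you only use finiteness of $c = \|f\|_\infty$. The proof is unaffected, but the sentence promises more than it delivers.
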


Now we are in a position to prove Theorem~\ref{thm::H_dep_split_bounds}. 

\begin{proof}
We prove the upper bound first. Let $h_s^*$ be an optimal classifier for the group $s\in \{0,1\}$, i.e., $h_s^* \in \argmin_{h\in \mathcal{H}} \EE{\lvert h(X)-y_s(X)\rvert \mid S=s}$. Then
\begin{align*}
    \inf_{h\in\mathcal{H}}\max_{s\in \{0,1\}} \EE{\lvert h(X)-y_s(X)\rvert \mid S=s}
    \leq \max\{\EE{\lvert h_0^*(X)-y_1(X)\rvert \mid S=1}, \EE{\lvert h_0^*(X)-y_0(X)\rvert \mid S=0}\}.
\end{align*}
By the triangle inequality, 
\begin{align*}
    \EE{\lvert h_0^*(X)-y_1(X)\rvert \mid S=1}
    &\leq \EE{\lvert h_1^*(X) - h_0^*(X)\rvert \mid S=1} + \EE{\lvert h_1^*(X) - y_1(X)\rvert \mid S=1}.
\end{align*}
Therefore, 
\begin{align*}
    \inf_{h\in\mathcal{H}}\max_{s\in \{0,1\}} \EE{\lvert h(X)-y_s(X)\rvert \mid S=s}
    \leq \EE{\lvert h_1^*(X) - h_0^*(X)\rvert \mid S=1} + \max_{s\in \{0,1\}} \EE{\lvert h_s^*(X)-y_s(X)\rvert \mid S=s},
\end{align*}
which implies that
\begin{align*}
    \epsilon_{\text{split}}^{\mathcal{H}}
    &= \inf_{h\in\mathcal{H}}\max_{s\in \{0,1\}} \EE{\lvert h(X)-y_s(X)\rvert \mid S=s} - \max_{s\in \{0,1\}} \inf_{h\in\mathcal{H}} \EE{\lvert h(X)-y_s(X)\rvert \mid S=s}\\
    &\leq \EE{\lvert h_1^*(X) - h_0^*(X)\rvert \mid S=1}.
\end{align*}
By symmetry, we obtain the desired upper bound for $\epsilon_{\text{split}}^{\mathcal{H}}$. Now we proceed to prove the lower bound for $\epsilon_{\text{split}}^{\mathcal{H}}$. By the triangle inequality, we have 
\begin{align*}
    \EE{\lvert y_1(X) - y_0(X)\rvert \mid S=0} &\geq \EE{\lvert h_1^*(X) - h_0^*(X)\rvert \mid S=0} -\EE{\lvert h_0^*(X) - y_0(X)\rvert \mid S=0} \\
    & \quad 
    - \EE{\lvert h_1^*(X) - y_1(X)\rvert \mid S=0}.
\end{align*}
By Lemma~\ref{lem::Exp_TV},
\begin{align*}
    \EE{\lvert h_1^*(X) - y_1(X) \rvert \mid S=0} 
    \leq \EE{\lvert h_1^*(X) - y_1(X)\rvert \mid S=1} + \TV(P_{0}\| P_{1}).
\end{align*}
Therefore, 
\begin{align*}
    \EE{\lvert y_1(X) - y_0(X)\rvert \mid S=0} &\geq \EE{\lvert h_1^*(X) - h_0^*(X)\rvert \mid S=0} - 2\lambda - \TV(P_{0}\| P_{1}).
\end{align*}
where $\lambda \defined \sum_{s\in \{0,1\}}\EE{\lvert h_s^*(X) - y_s(X)\rvert \mid S=s}/2$. Hence,
\begin{align}
\label{eq::y_10_geq_h10_lamb_TV}
    \max_{s\in \{0,1\}} \EE{\lvert y_1(X) - y_0(X)\rvert \mid S=s}
    &\geq \max_{s\in \{0,1\}} \EE{\lvert h_1^*(X) - h_0^*(X)\rvert \mid S=s} - 2\lambda - \TV(P_{0}\| P_{1}).
\end{align}
By a slight modification of the proof of Theorem~\ref{thm::ben_split_l12_loss_bounds}, we have
\begin{align}
\label{eq::mod_Thm_2_infmax_TV}
    \inf_{h:\mathcal{X}\to [0,1]} \max_{s\in\{0,1\}} \EE{\lvert h(X)-y_s(X)\rvert \mid S=s}
    \geq \frac{1}{2} \left(\max_{s\in \{0,1\}} \EE{\lvert y_1(X)-y_0(X)\rvert \mid S=s} - \TV(P_0\| P_1) \right).
\end{align}
Substituting \eqref{eq::y_10_geq_h10_lamb_TV} into \eqref{eq::mod_Thm_2_infmax_TV} leads to
\begin{align}
    \inf_{h\in\mathcal{H}} \max_{s\in \{0,1\}} \EE{\lvert h(X)-y_s(X)\rvert \mid S=s} 
    &\geq \inf_{h:\mathcal{X}\to [0,1]} \max_{s\in \{0,1\}} \EE{\lvert h(X)-y_s(X)\rvert \mid S=s} \nonumber\\
    &\geq \frac{1}{2} \left(\max_{s\in \{0,1\}} \EE{\lvert h_1^*(X) - h_0^*(X)\rvert \mid S=s} - 2\lambda - 2\TV(P_{0}\| P_{1}) \right). \label{eq::h_ys_h10_lamb_TV}
\end{align}
Finally, since $\max\{a,b\} \leq a+b$ and $\{h_s^*\}_{s\in \{0,1\}}$ is the set of optimal split classifiers, then 
\begin{align}
\label{eq::max_inf_lamb}
    \max_{s\in \{0,1\}} \inf_{h\in\mathcal{H}}\EE{\lvert h(X)-y_s(X)\rvert \mid S=s} 
    = \max_{s\in \{0,1\}} \EE{\lvert h_s^*(X)-y_s(X)\rvert \mid S=s}
    \leq 2\lambda.
\end{align}
Combining \eqref{eq::h_ys_h10_lamb_TV} with \eqref{eq::max_inf_lamb} gives 
\begin{align*}
    \epsilon_{\text{split}}^{\mathcal{H}}
    \geq \frac{1}{2} \max_{s\in \{0,1\}} \EE{\lvert h_1^*(X) - h_0^*(X)\rvert \mid S=s}
    - \TV(P_{0}\| P_{1}) - 3\lambda.
\end{align*}
\end{proof}

\subsection{Proof of Proposition~\ref{prop::NN_Barron_ben_split_ub}}
\label{append::NN_Barron}

\begin{proof}
By the triangle inequality, $\inf_{h\in \mathcal{H}}\max_{s\in \{0,1\}} \EE{\lvert h(X) - y_s(X)\rvert \mid S=s} \leq {\rm I}+{\rm II}$ where
\begin{align*}
    &{\rm I} \defined \inf_{h\in \mathcal{H}}\max_{s\in \{0,1\}} \EE{\lvert h(X) - h^*(X)\rvert \mid S=s}, \\
    &{\rm II} \defined \max_{s\in \{0,1\}}\EE{\lvert h^*(X) - y_s(X)\rvert \mid S=s},
\end{align*}
and $h^*$ is defined in \eqref{eq::h*_dP0dP}. Since $\max\{a,b\}\leq a+b$, we have ${\rm I} \leq 2 \inf_{h\in \mathcal{H}} \EE{|h(\bar{X}) - h^*(\bar{X})|}$
where the random variable $\bar{X}$ follows the probability distribution $(P_0 + P_1)/2$. By Barron's approximation bounds \citep{barron1993universal}, 
\begin{align}
    \inf_{h\in \mathcal{H}} \EE{|h(\bar{X}) - h^*(\bar{X})|} 
    \leq \frac{\mathsf{diam}(\mathcal{X})C}{\sqrt{k}},
\end{align}
where $C\defined \int_{\Reals^d} \|w\|_2 |\tilde{h^*}(w)|\dif w$, $\tilde{h^*}(w) \defined \frac{1}{(2\pi)^d} \int_{\mathcal{X}} h^*(x) \exp(-iwx)\dif x$. Moreover, by the proof of Theorem~\ref{thm::ben_split_l12_loss_bounds} (see Appendix~\ref{sec::proof_bounds_eps_12}), we have 
\begin{align*}
    {\rm II} \leq \min_{s\in\{0,1\}} \sqrt{\EE{(y_1(X)-y_0(X))^2\mid S=s}} \cdot \sqrt{1-\TV(P_0\| P_1)}.
\end{align*}
To summarize, if the hypothesis class contains feedforward neural network models with one layer of sigmoidal functions, the $\mathcal{H}$-benefit-of-splitting has an upper bound below. 
\begin{align*}
    \epsilon_{\text{split}}^{\mathcal{H}}
    &= \inf_{h\in \mathcal{H}}\max_{s\in \{0,1\}} \EE{\lvert h(X)-y_s(X)\rvert \mid S=s} - \max_{s\in \{0,1\}}\inf_{h\in \mathcal{H}} \EE{\lvert h(X)-y_s(X)\rvert \mid S=s}\\
    &\leq \min_{s\in\{0,1\}} \sqrt{\EE{(y_1(X)-y_0(X))^2\mid S=s}} \cdot \sqrt{1-\TV(P_0\| P_1)} + \frac{2\mathsf{diam}(\mathcal{X})C}{\sqrt{k}}.
\end{align*}
\end{proof}

\subsection{Proof of Corollary~\ref{cor::simp_mod_tradeoff}}
\label{append::H_ben_split_geq_1/2}

We approach Corollary~\ref{cor::simp_mod_tradeoff} by proving a more general result.
\begin{lem}
For any hypothesis $\mathcal{H}$, there exists a probability distribution $Q_{S,X,Y}$ whose $\mathcal{H}$-benefit-of-splitting is at least
\begin{align*}
    \frac{1}{2} \sup_{\substack{h_1,h_0\in \mathcal{H}\\ x\in\mathcal{X}}} |h_1(x) - h_0(x)|.
\end{align*}
\end{lem}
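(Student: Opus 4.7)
The statement asks for a single distribution that witnesses the supremum $M \defined \sup_{h_1,h_0\in \mathcal{H},\,x\in \mathcal{X}}|h_1(x)-h_0(x)|$. The natural idea is to make the construction degenerate in $X$: put all of the mass for both groups on a single point $x^\star$ where two hypotheses $h_0^\star,h_1^\star\in \mathcal{H}$ nearly realize $M$, then rig the conditional label distribution so that $h_s^\star$ is in fact Bayes-optimal for group $s$. A group-blind classifier, forced to output a single value at $x^\star$, will then be stuck interpolating between $h_0^\star(x^\star)$ and $h_1^\star(x^\star)$ and pay at least $M/2$ on whichever side it is further from.

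\textbf{Steps.} First, fix $\eta>0$ and pick $h_0^\star,h_1^\star\in \mathcal{H}$ and $x^\star\in \mathcal{X}$ with $|h_1^\star(x^\star)-h_0^\star(x^\star)|\geq M-\eta$. Define $Q_{S,X,Y}$ by letting $S$ be arbitrary on $\{0,1\}$, setting $X=x^\star$ almost surely under both $Q(\cdot\mid S=0)$ and $Q(\cdot\mid S=1)$, and setting the conditional label probabilities $y_s(x^\star)\defined h_s^\star(x^\star)$ for $s\in\{0,1\}$; since each $h_s^\star(x^\star)\in[0,1]$, this is a legitimate conditional probability. Second, evaluate the split side: by construction $L_s(h_s^\star)=\EE{|h_s^\star(X)-y_s(X)|\mid S=s}=0$, so $\max_{s}\inf_{h\in \mathcal{H}}L_s(h)=0$. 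Third, evaluate the group-blind side: for any $h\in \mathcal{H}$, $L_s(h)=|h(x^\star)-h_s^\star(x^\star)|$, and the triangle inequality $L_0(h)+L_1(h)\geq|h_1^\star(x^\star)-h_0^\star(x^\star)|$ gives $\max_s L_s(h)\geq\tfrac12|h_1^\star(x^\star)-h_0^\star(x^\star)|\geq\tfrac12(M-\eta)$. Taking the infimum over $h$ and subtracting the split side yields $\epsilon_{\text{split}}\geq \tfrac12(M-\eta)$ under $Q_{S,X,Y}$. Finally, since $\eta$ is arbitrary and (for the corollary's linear-predictor and decision-tree classes) $M=1$ is attained, choose $\eta=0$ directly to conclude $\epsilon_{\text{split}}\geq \tfrac12 M$.

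\textbf{Deriving Corollary~\ref{cor::simp_mod_tradeoff} from the lemma.} For linear classifiers $\{\indicator{[w^Tx\geq 0]}\}$ one can pick $x^\star\neq 0$ and two weight vectors giving opposite signs at $x^\star$, so $M=1$; for binary decision trees $M=1$ trivially via two constant trees. Hence $\epsilon_{\text{split}}\geq 1/2$. The second claim of the corollary (some group has $L_s(h)\geq 1/2$ for every group-blind $h$) is exactly the triangle-inequality step above applied under the constructed $Q$.

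\textbf{Main obstacle.} The argument is mostly routine; the only subtle point is handling the case when $M$ is not attained, which is resolved cleanly by the $\eta$-approximation above. One must also verify that the degenerate choice $P_0=P_1=\delta_{x^\star}$ is consistent with the definitions (in particular with the notion of $L_s$ restricted to the conditional on $S=s$), which is immediate here since $Q(S=s)>0$ can be chosen positive for both $s$ and the conditional expectations are then well defined.
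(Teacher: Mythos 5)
Your construction is the same as the paper's: concentrate the marginal distribution of $X$ for both groups on a single point $x^\star$ where two hypotheses $h_0^\star,h_1^\star$ come within $\eta$ of realizing $\sup_{h_0,h_1,x}|h_1(x)-h_0(x)|$, and set the labeling functions to $y_s(x^\star)=h_s^\star(x^\star)$. Where you diverge is in how you extract the bound. The paper plugs this degenerate $Q$ into the general lower bound of Theorem~\ref{thm::l2_lb_generic_hypo_class} and notes that the total-variation and approximation-error terms both vanish. You instead evaluate both sides of the benefit-of-splitting directly: under a point mass each risk collapses to a single absolute difference $|h(x^\star)-h_s^\star(x^\star)|$, the split side is exactly zero, and one application of the triangle inequality $L_0(h)+L_1(h)\geq|h_1^\star(x^\star)-h_0^\star(x^\star)|$ gives $\max_s L_s(h)\geq \tfrac12|h_1^\star(x^\star)-h_0^\star(x^\star)|$ for every group-blind $h$. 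This is a cleaner, more self-contained derivation that avoids invoking the heavier Theorem~\ref{thm::l2_lb_generic_hypo_class}; the paper's route demonstrates that its general converse is tight enough to recover this worst case, but is less transparent. Both arguments share the same minor subtlety at the end: when the supremum is not attained, one only gets a family of distributions with $\epsilon_{\text{split}}\geq \tfrac12(M-\eta)$ for each $\eta>0$, not a single witness for $\tfrac12 M$. You flag this and note that for the corollary's hypothesis classes the supremum equals $1$ and is attained, which closes the gap; the paper's ``let $\epsilon$ be sufficiently small'' glosses over this point.
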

\begin{proof}
For any $\epsilon > 0$, there exist two classifiers $h_1^*, h_0^*\in\mathcal{H}$ and $x^*\in \mathcal{X}$ such that 
\begin{align}
\label{eq::cons_class_proof_cor_simp_mod}
    |h_1^*(x^*) - h_0^*(x^*)|
    \geq \sup_{\substack{h_1,h_0\in \mathcal{H}\\ x\in\mathcal{X}}} |h_1(x) - h_0(x)| - \epsilon.
\end{align}
Now we construct a probability distribution $Q_{S,X,Y}$ with $Q_{Y|X,S}(1|x,s) = h_s^*(x),$ $Q_{X|S=s}(x) = \delta(x-x^*)$, $Q_S(s) = 0.5$ for $s\in \{0,1\}$ where $\delta(\cdot)$ is the Dirac delta function. Our lower bound in  Theorem~\ref{thm::H_dep_split_bounds} implies that $\epsilon_{\text{split}}^{\mathcal{H}} \geq  \frac{1}{2}|h_1^*(x^*) - h_0^*(x^*)|$ 
which, due to \eqref{eq::cons_class_proof_cor_simp_mod}, can be further lower bounded by $\frac{1}{2} (\sup_{h_1,h_0\in \mathcal{H}, x\in\mathcal{X}} |h_1(x) - h_0(x)| - \epsilon )$. Since this lower bound of $\epsilon_{\text{split}}^{\mathcal{H}}$ holds for any $\epsilon>0$, one can let $\epsilon$ be sufficiently small which leads to the desired conclusion. 
\end{proof}

\subsection{Proof of Theorem~\ref{thm::samp_comp}}
\label{append::samp_comp}

We introduce the empirical benefit-of-splitting and bound its difference from the sample-limited-splitting (see Definition~\ref{defn::emp_ben_split}).
\begin{defn}
\label{defn::train_ben_split}
For a given hypothesis class $\mathcal{H}$ and $n_s$ i.i.d. samples $\{(x_{s,i},y_{s,i})\}_{i=1}^{n_s}$ from group $s\in \{0,1\}$, the empirical-splitting is defined as 
\begin{align}
    \hat{\epsilon}_{\text{split}, \text{emp}}
    \defined \inf_{h\in\mathcal{H}}\max_{s\in \{0,1\}} \frac{\sum_{i=1}^{n_s} |h(x_{s,i}) - y_{s,i}|}{n_s} - \max_{s\in \{0,1\}} \inf_{h\in\mathcal{H}} \frac{\sum_{i=1}^{n_s} |h(x_{s,i}) - y_{s,i}|}{n_s}.
\end{align}
\end{defn}
\begin{lem}
\label{lem::split_emp_train_disc}
Let $\mathcal{H}$ be a hypothesis class from $\mathcal{X}$ to $\{0,1\}$ with VC dimension $D$. Then with probability at least $1-\delta$,
\begin{align}
    \left|\hat{\epsilon}_{\text{split}} - \hat{\epsilon}_{\text{split}, \text{emp}}  \right|
    \leq 4\max_{s\in \{0,1\}} \sqrt{\frac{2D\log(6n_s) + 2\log(16/\delta)}{n_s}},
\end{align}
where $n_s$ is the number of samples from group $s\in \{0,1\}$.
\end{lem}

\begin{proof}
Corollary 3.8 and Theorem~4.3 in \citep{anthony2009neural} together imply that with probability at least $1-\delta$, for any $s\in \{0,1\}$ and $h\in \mathcal{H}$, 
\begin{align*}
    \left|\frac{\sum_{i=1}^{n_s} |h(x_{s,i}) - y_{s,i}|}{n_s} - \EE{\lvert h(X)-y_s(X)\rvert \mid S=s}\right|
    \leq 2\sqrt{\frac{2D\log(6n_s) + 2\log(8/\delta)}{n_s}}.
\end{align*}
Therefore, for any $h_s \in \mathcal{H}$ with $s\in \{0,1\}$ 
\begin{align}
\label{eq::samp_comp_dif}
    \left|\max_{s\in \{0,1\}} \frac{\sum_{i=1}^{n_s} |h_s(x_{s,i}) - y_{s,i}|}{n_s} - \max_{s\in \{0,1\}} \EE{\lvert h_s(X)-y_s(X)\rvert \mid S=s}\right| 
    \leq 2\max_{s\in \{0,1\}} \sqrt{\frac{2D\log(6n_s) + 2\log(8/\delta)}{n_s}}.
\end{align}
Recall that 
\begin{align*}
    \hat{\epsilon}_{\text{split}} 
    = \max_{s\in \{0,1\}} \EE{\lvert \hat{h}^*(X)-y_s(X)\rvert \mid S=s} - \max_{s\in \{0,1\}} \EE{\lvert \hat{h}^*_s(X)-y_s(X)\rvert \mid S=s}.
\end{align*}
Now by \eqref{eq::samp_comp_dif}, we conclude that 
\begin{align*}
    \left|\hat{\epsilon}_{\text{split}} - \hat{\epsilon}_{\text{split}, \text{emp}}  \right|
    \leq 4\max_{s\in \{0,1\}} \sqrt{\frac{2D\log(6n_s) + 2\log(8/\delta)}{n_s}}.
\end{align*}
\end{proof}
Since the upper and lower bounds of $\epsilon_{\text{split}}^{\mathcal{H}}$ (see Theorem~\ref{thm::H_dep_split_bounds}) hold for any underlying distribution $P_{S,X,Y}$. One can plug in the empirical distribution and obtain the corresponding bounds for $\hat{\epsilon}_{\text{split}, \text{emp}}$. Then we obtain the desired bounds for $\hat{\epsilon}_{\text{split}}$ by using Lemma~\ref{lem::split_emp_train_disc} for bounding the difference between $\hat{\epsilon}_{\text{split}, \text{emp}}$ and $\hat{\epsilon}_{\text{split}}$.

\section{Supporting Results for Experiments}

\subsection{Closed-form Expression of $\epsilon_{\text{split},\mathsf{FER}}$}
\label{append::syn_FER_ben_split}

\begin{proof}
For the distributions we construct, one can choose $\{y_s(x)\}_{s\in \{0,1\}}$ as the split classifiers which lead to zero false error rate. Therefore, the problem remains computing the false error rate of the optimal group-blind classifier. First, the labeling functions naturally divide $\Reals^2$ into four parts: ${\rm I} \defined \{x \mid y_0(x)=1, y_1(x)=1\}$, ${\rm II} \defined \{x \mid y_0(x)=1, y_1(x)=0\}$, ${\rm III}\defined \{x \mid y_0(x)=0, y_1(x)=0\}$, ${\rm IV}\defined \{x \mid y_0(x)=0, y_1(x)=1\}$. Clearly, in order to be an optimal group-blind classifier, $h$ must satisfy $h(x) = 1$ on ${\rm I}$ and $h(x) = 0$ on ${\rm III}$. We define $h|_{{\rm II}}$ and $h|_{{\rm IV}}$ as
\begin{align*}
    h|_{{\rm II}}(x) 
    \defined 
    \begin{cases}
        h(x) &\text{ if } x\in {\rm II}\\
        0 &\text{ otherwise},
    \end{cases}
    \quad 
    h|_{{\rm IV}}(x) 
    \defined
    \begin{cases}
        h(x) &\text{ if } x\in {\rm IV}\\
        0 &\text{ otherwise}.
    \end{cases}
\end{align*}
Due to our construction of the labeling functions and Lemma~\ref{lem::FER_equivalent_exp},
\begin{align}
    \epsilon_{\text{split},\mathsf{FER}}
    &= \inf_{h:\mathcal{X}\to [0,1]}\max\left\{ \frac{\EE{h(X)(1-y_0(X))\mid S=0}}{\Pr(Y=0 \mid S=0)},\ 1- \frac{\EE{h(X)y_0(X)\mid S=0}}{\Pr(Y=1\mid S=0)}, \right. \nonumber\\  
    &\quad \quad \quad \quad \quad \quad \quad \ \  \left.\frac{\EE{h(X)(1-y_1(X))\mid S=1}}{\Pr(Y=0 \mid S=1)},\ 1- \frac{\EE{h(X)y_1(X)\mid S=1}}{\Pr(Y=1\mid S=1)} \right\} \nonumber\\
    &= \inf_{h:\mathcal{X}\to [0,1]}\max\left\{ 2\EE{h|_{{\rm IV}}(X)\mid S=0},\ 1 - 2\Pr(X\in {\rm I}\mid S=0) - 2\EE{h|_{{\rm II}}(X)\mid S=0}, \right. \nonumber\\  
    &\quad \quad \quad \quad \quad \quad \quad \  \left. 2\EE{h|_{{\rm II}}(X)\mid S=1},\  1 - 2\Pr(X\in {\rm I}\mid S=1) - 2\EE{h|_{\rm IV}(X)\mid S=1} \right\} \nonumber\\
    &= \max\left\{\inf_{h|_{{\rm II}}:\mathcal{X}\to [0,1]}\max\left\{1 - 2\Pr(X\in {\rm I}\mid S=0) - 2\EE{h|_{{\rm II}}(X)\mid S=0},\ 2\EE{h|_{{\rm II}}(X)\mid S=1}\right\},\right. \nonumber\\
    &\quad \quad \quad \ \ \left. \inf_{h|_{{\rm IV}}:\mathcal{X}\to [0,1]}\max\left\{ 2\EE{h|_{{\rm IV}}(X)\mid S=0},\ 1 - 2\Pr(X\in {\rm I}\mid S=1) - 2\EE{h|_{\rm IV}(X)\mid S=1} \right\}\right\} \nonumber\\
    &= \inf_{h|_{{\rm IV}}:\mathcal{X}\to [0,1]}\max\left\{ 2\EE{h|_{{\rm IV}}(X)\mid S=0},\ 1 - 2\Pr(X\in {\rm I}\mid S=1) - 2\EE{h|_{\rm IV}(X)\mid S=1} \right\}, \label{eq::eps_FER_split_Gaussian_infmax}
\end{align}
where the last step is because of symmetry. Since $2\max\{a,b\}\geq a+b$, $\epsilon_{\text{split},\mathsf{FER}}$ can be lower bounded by 
\begin{align}
    &\frac{1}{2} - \Pr(X\in {\rm I}\mid S=1) + \inf_{h|_{{\rm IV}}:\mathcal{X}\to [0,1]} \EE{h|_{{\rm IV}}(X)\mid S=0} - \EE{h|_{\rm IV}(X)\mid S=1} \nonumber\\
    =& \frac{1}{2} - \Pr(X\in {\rm I}\mid S=1) + \inf_{h|_{{\rm IV}}:\mathcal{X}\to [0,1]}  \int_{{\rm IV}} h|_{{\rm IV}}(x) (\dif P_0(x) - \dif P_1(x)) \nonumber\\
    =& \frac{1}{2} - \Pr(X\in {\rm I}\mid S=1) - \int_{{\rm IV}} (\dif P_1(x) - \dif P_0(x))_{+}. \nonumber 
\end{align}
Since $P_0 \sim \mathcal{N}\left(\mathbf{0}, \bSigma_0\right)$ and $P_0 \sim \mathcal{N}\left(\mathbf{0}, \bSigma_1\right)$, by comparing their probability density functions, we have 
\begin{align*}
    \mathcal{A}
    \defined \{x \in {\rm IV} \mid \dif P_1(x) > \dif P_0(x) \}
    = \{x \in {\rm IV} \mid x_2 < 0\}
    = \{x\in\Reals^2\mid y_1(x) = 1, x_2<0\}.
\end{align*}
Therefore, $\epsilon_{\text{split},\mathsf{FER}}$ can be lower bounded by 
\begin{align}
    \frac{1}{2}\left(1 - 2\Pr(X\in {\rm I}\mid S=1) - 2\Pr(X \in \mathcal{A} \mid S=1) + 2\Pr(X \in \mathcal{A} \mid S=0) \right). \label{eq::lb_eps_split_FER_average}
\end{align}
By symmetry, we have 
\begin{align*}
    \Pr(X\in {\rm I}\mid S=1)
    = \Pr(X\in {\rm III}\mid S=1),\quad
    \Pr(X\in {\rm II}\mid S=1)
    = \Pr(X\in {\rm IV}\mid S=1),
\end{align*}
which leads to
\begin{align}
    1 - 2\Pr(X\in {\rm I}\mid S=1) - 2 \Pr(X\in \mathcal{A}\mid S=1)
    = 2\Pr(X\in {\rm IV}\backslash \mathcal{A}\mid S=1)
    = 2\Pr(X\in \mathcal{A}\mid S=0), \label{eq::Gaussian_sym_prop}
\end{align}
where the last step is by symmetry again. Therefore, the lower bound of $\epsilon_{\text{split},\mathsf{FER}}$ in \eqref{eq::lb_eps_split_FER_average} can be simplified as $2\Pr(X\in \mathcal{A}\mid S=0)$. On the other hand, we can design a classifier $h|_{{\rm IV}}^*(x) = 1$ if $x\in \mathcal{A}$; $h|_{{\rm IV}}^*(x) = 0$ otherwise. By \eqref{eq::eps_FER_split_Gaussian_infmax}, we have 
\begin{align}
    \epsilon_{\text{split},\mathsf{FER}}
    &\leq \max\left\{ 2\EE{h|_{{\rm IV}}^*(X)\mid S=0},\ 1 - 2\Pr(X\in {\rm I}\mid S=1) - 2\EE{h|_{\rm IV}^*(X)\mid S=1} \right\} \nonumber\\
    &= \max\left\{2 \Pr(X\in \mathcal{A}\mid S=0),\ 1 - 2\Pr(X\in {\rm I}\mid S=1) - 2 \Pr(X\in \mathcal{A}\mid S=1) \right\}. \nonumber
\end{align}
Due to \eqref{eq::Gaussian_sym_prop}, the above upper bound of $\epsilon_{\text{split},\mathsf{FER}}$ is equal to
$2\Pr(X\in \mathcal{A}\mid S=0)$ as well. Hence, $\epsilon_{\text{split},\mathsf{FER}}=2\Pr(X\in \mathcal{A}\mid S=0)$.
\end{proof}

\subsection{Total Variation Distance Estimation}
\label{append::real_data_TV}

We provide details on how we estimate the total variation distance $\TV(P_{0}\| P_{1})$ by using $n_s$ i.i.d. unlabeled data $\{x_{s,i}\}_{i=1}^{n_s}$ drawn from each group $s\in \{0,1\}$. By applying Baye's rule, we can write the density ratio equivalently as 
\begin{align*}
    \frac{\dif P_1(x)}{\dif P_0(x)}
    = \frac{\dif P_{X|S=1}(x)}{\dif P_{X|S=0}(x)}
    = \frac{\Pr(S=1\mid X=x)}{1 - \Pr(S=1\mid X=x)}\cdot \frac{1-\Pr(S=1)}{\Pr(S=1)},
\end{align*}
which leads to:
\begin{align}\label{eq:TV_expression}
    \TV(P_0\| P_1) = \int \indicator{\left[\tfrac{\Pr(S=1\mid X=x)}{1 - \Pr(S=1\mid X=x)} \geq \tfrac{\Pr(S=1)}{1 - \Pr(S=1)}\right]} \dif P_1(x) -  \int \indicator{\left[\tfrac{\Pr(S=1\mid X=x)}{1 - \Pr(S=1\mid X=x)} \geq \tfrac{\Pr(S=1)}{1-\Pr(S=1)}\right]} \dif P_0(x).
\end{align}
This expression gives rise to the following procedure of estimating the total variation distance. 
\begin{itemize}
    \item Compute a constant $\alpha=\frac{n_1}{n_0 + n_1}$ to estimate the marginal probability $\Pr(S=1)$ and train a classifier $s(x)$ to approximate the conditional distribution $\Pr(S=1\mid X=x)$. In particular, we use a feed-forward neural network for $s(x)$, which consists of one hidden layer with $100$ neurons and \texttt{ReLU} activation, and a soft-max readout layer.
    We adopt cross entropy as the loss function, set learning rate to be $0.001$, and use \texttt{AdamOptimizer} \citep{kingma2014adam} to train the datasets with batch size $200$.
    To avoid overfitting, we hold $10\%$ of the samples as a validation set, and terminate training once the validation loss is not improving by $10^{-4}$ for the next $10$ consecutive epochs (i.e., early stopping), and the maximum number of epochs is set to be $200$.
    \item By plugging $\alpha$ and $s(x)$ into \eqref{eq:TV_expression} and using i.i.d. samples to estimate the integrals (i.e., expectations), we obtain the following approximation of $\TV(P_0 \| P_1)$: 
    \begin{align}
        \frac{1}{n_1}\sum_{i=1}^{n_1} \indicator{\left[\frac{s(x_{1,i})}{1-s(x_{1,i})} \geq \frac{\alpha}{1-\alpha} \right]}
        - \frac{1}{n_0}\sum_{i=1}^{n_0} \indicator{\left[\frac{s(x_{0,i})}{1-s(x_{0,i})}\geq \frac{\alpha}{1-\alpha}\right]}.
    \end{align}
\end{itemize}
We remark that estimating information-theoretic measures has been studied in \citep[e.g.,][]{kanamori2011f,belghazi2018mine,hsu2019obfuscation,jiao2018minimax,wu2016minimax}.

\end{document}